\newcommand{\R}[1]{\mathbb{R}^{#1}}
\newcommand{\DNN}{\mathsf{DDNN}}
\newcommand{\children}{\mathsf{children}}
\newcommand{\inp}{\mathsf{Inp}}
\newcommand{\propprob}{\mathsf{ApProp}}
\newcommand{\indnn}{\mathsf{NN}}
\newcommand{\into}{\mathsf{into}}
\newcommand{\target}{\mathsf{target}}
\newcommand{\source}{\mathsf{source}}
\newcommand{\level}{\mathsf{level}}
\newcommand{\subnorm}{\mathcal{F}} 
\newcommand{\transfer}{\mathsf{In}}
\let\wh\widehat
\newcommand{\E}[1]{\mathbb{E}_{#1}}
\DeclareMathOperator{\Var}{Var}
\newcommand{\intint}[1]{\overline{1,#1}}
\newcommand{\expectation}[1]{ \mathbb{E} [ #1 ] }
\newcommand{\indicator}[1]{ \mathds{1} [ #1 ] }
\newcommand{\pnorm}[2]{ \| #1 \|{}_{#2} }
\newcommand{\probability}[1]{ \mathbb{P} [ #1 ] }
\newcommand{\probabilityBig}[1]{ \mathbb{P} \Bigl[ #1 \Bigr] }
\newcommand{\realNumbers}{ \mathbb{R} }
\newcommand{\refFigure}[1]{{\textnormal{Figure~\ref{#1}}}}
\newcommand{\refTheorem}[1]{{\textnormal{Theorem~\ref{#1}}}}
\newcommand{\refCorollary}[1]{{\textnormal{Corollary~\ref{#1}}}}
\newcommand{\refProposition}[1]{{\textnormal{Proposition~\ref{#1}}}}
\newcommand{\refLemma}[1]{{\textnormal{Lemma~\ref{#1}}}}
\newcommand{\QuodEratDemonstrandum}{\hfill \ensuremath{\Box}}
\def\eqcom#1{\overset{\textnormal{(#1)}}}
\def\E{{\mathbb E}}
\def\({{\Bigl(}}
\def\){{\Bigr)}}
\newcommand{\ba}{\begin{array}}
\newcommand{\ea}{\end{array}}
\newcommand{\xdeleted}[1]{\deleted{}} 
\newacronym{ReLU}{ReLU}{Rectified Linear Unit}
\begin{document}

\title{Universal Approximation in Dropout Neural Networks}

\author{\name Oxana A.\ Manita \email o.zaal.manita@tue.nl \\
	\name Mark A.\ Peletier \email m.a.peletier@tue.nl \\
	\name Jacobus W.\ Portegies \email j.w.portegies@tue.nl \\	
	\name Jaron Sanders \email jaron.sanders@tue.nl \\
	\name Albert Senen--Cerda \email a.senen.cerda@tue.nl \\
	\addr Department of Mathematics \& Computer Science\\
	Eindhoven University of Technology\\
	Eindhoven, The Netherlands
}

\editor{~}

\maketitle

\begin{abstract}
	We prove two universal approximation theorems for a range of dropout neural networks. These are feed-forward neural networks in which each edge is given a random $\{0,1\}$-valued filter, that have two modes of operation: in the first each edge output is multiplied by its random filter, resulting in a random output, while in the second each edge output is multiplied by the expectation of its filter, leading to a deterministic output. It is common to use the random mode during training and the deterministic mode during testing and prediction. 
	
	Both theorems are of the following form: Given a function to approximate and a threshold $\varepsilon>0$, there exists a dropout network that is $\varepsilon$-close in probability and in $L^q$. The first theorem applies to dropout networks in the random mode. It assumes little on the activation function, applies to a wide class of networks, and can even be applied to approximation schemes other than neural networks. The core is an algebraic property that shows that deterministic networks can be exactly matched in expectation by random networks. 	
	The second theorem makes stronger assumptions and gives a stronger result. Given a function to approximate, it provides existence of a network that approximates in both modes simultaneously. Proof components are a recursive replacement of edges by independent copies, and a special first-layer replacement that couples the resulting larger network to the input.
	
	The functions to be approximated are assumed to be elements of general normed spaces, and the approximations are measured in the corresponding norms. The networks are constructed explicitly. Because of the different methods of proof, the two results give independent insight into the approximation properties of random dropout networks. With this, we establish that dropout neural networks broadly satisfy a universal-approximation property.
\end{abstract}

\begin{keywords}
	Neural Networks; Approximation; Dropout; Random Neural Network. 
\end{keywords}

\begin{MSC2020}
	41A30; 68T05.
\end{MSC2020}

\section{Introduction}

\subsection{Universal approximation in Neural Networks}

The class of functions that are generated by neural networks satisfies a `universal approximation property': any given function can be approximated to arbitrary precision by such a neural network \citep{cybenko1989approximation,leshno1993multilayer}. This property partially explains why neural networks are so effective as approximators of implicitly given functions. 

It is commonly observed that the training of such networks improves upon introducing \emph{dropout}, the random `dropping' of edges \cite[Sec.~7.9]{GoodfellowBengioCourville16}. Dropout converts a deterministic network into a random one.  In this paper we address the question that this observation implicitly raises: Does this randomness interfere with the universal approximation property? Or, to formulate it in the affirmative: does the class of dropout neural networks still satisfy universal approximation?

To provide a first quantification of this question, let us explain the expec\-ta\-tion--variance split, which in the context of dropout goes back to the theoretical analysis by \cite{wager2013dropout}. We will think of a dropout neural network as a function $\Psi: \mathbb{R}^d \times \mathbb{R}^n \to \mathbb{R}$ together with a $\{0,1\}^n$-valued random variable $f$. We call the components of $f$ filter variables. We think of $\mathbb{R}^d$ as data space and $\mathbb{R}^n$ as parameter space (the space of weights and biases for the neural network). The parameters get multiplied elementwise with the vector of filter variables $f$. That means that when $\zeta: \mathbb{R}^d \to \mathbb{R}$ is a function we want to approximate, we try to approximate it with the stochastic function that maps $x$ to $\Psi(x, w \odot f)$. For fixed $x$, the expec\-ta\-tion--variance split reads 
\begin{equation}
	\mathbb{E}\left[\bigl(\Psi(x, w \odot f) - \zeta(x) \bigr)^2\right]
	= 
	\bigl(\mathbb{E}[ \Psi(x, w \odot f) ] - \zeta(x)\bigr)^2 
	+ \mathbb{V}[ \Psi(x, w \odot f) ].
	\label{eqn:bias_variance_decomposition}
\end{equation}
Here $\odot$ denotes element-wise multiplication, and throughout the paper $\mathbb{P}$, $\mathbb{E}$, $\mathbb{V}$ stand for probability, expectation, variance with respect to the filter variables $f$, respectively. As both terms on the right-hand side are nonnegative, both terms have to be small in order to have a good approximation. Is this possible?

\cite{foong2020expressiveness} showed that deep \gls{ReLU} neural networks with node-dropout still can approximate functions arbitrary well, by showing that both the expectation term and the variance term in the expec\-ta\-tion--variance split can be made arbitrary small (see their Theorem 3). In fact, the two terms are arbitrary small uniformly over $x$ in the unit cube in $\mathbb{R}^d$. With this statement, Foong et al.\ effectively showed a universal-approximation result.

In this paper we show two universal-approximation results for wider classes of dropout neural networks.  Where Foong et al.\ made specific use of the properties of \gls{ReLU} activation, we show that the property of universal approximation does not require more of the activation functions in dropout networks than in the underlying deterministic ones \citep{leshno1993multilayer}. In addition, our main theorems allow for very general classes of filters, including the original node-based dropout \citep{hinton2012improving}, the edge-based dropconnect \citep{wan2013regularization}, and many others. With Theorem~\ref{thm:approximation_in_probability-intro} below we show that the class of dropout networks can \emph{exactly} match a given deterministic network, at least in expectation. With Theorem~\ref{th:main-average} below we show that we can construct networks that approximate a given function arbitrarily well, both as a random network and as a deterministic filtered network. Finally, we provide control of the error both in probability and in~$L^q$.

\subsection{Approximation by random neural networks}

In a deterministic context, a universal-approximation theorem for some class $\mathsf C$ is a density statement, stating that any function $\zeta$ can be approximated to arbitrary precision by neural networks in $\mathsf C$, where the approximation is measured in some seminormed  function space $(\mathcal{F}, \| \cdot \|_{\subnorm})$. 
Such approximation statements can be generalized to a stochastic context in multiple ways. We will focus on two of these, approximation in probability and  in $L^q$ for $q \in [1, \infty)$.

Universal approximation in probability is the property that for every function $\zeta \in \mathcal{F}$ to approximate, and every $\epsilon > 0$, there exists a neural network $\Psi$, a weight vector $w$ and a random vector $f$ (all with certain extra properties to make the statement nontrivial), such that 
\[
	\mathbb{P} \left[ \|\zeta(\cdot) - \Psi( \cdot , w \odot f ) \|_{\subnorm} > \epsilon \right] 
	< 
	\epsilon.
\]
A stronger statement involves approximation in $L^q$ for $q \in [1, \infty)$: there exist $\Psi$, $w$ and $f$ such that
\[
	\mathbb{E} \left[ \left\|\zeta(\cdot) - \Psi(\cdot, w \odot f) \right\|_{\subnorm}^q \right]^{\frac{1}{q}} 
	<
	\epsilon.
\]
In this article we indeed show such universal approximation statements for certain classes of deep dropout neural networks. 

We prove two main classes of approximation results, corresponding to the two main ways that dropout networks are used in practice. In the first class of results  the network is a random object as described above, and the same network is used during training and prediction; we call this \emph{random-approximation} dropout.\footnote{This has also been called \emph{Monte Carlo dropout} because of the close connection with Monte Carlo estimation of integrals \citep{gallicchio2020deep}.} In the second class of results the network is trained with random networks of the form $\Psi(\cdot,w\odot f)$, but for prediction the deterministic network $\Psi(\cdot,w\odot \E f)$ is used in which the filters are replaced by their expectations. We call this type of dropout \emph{expectation-replacement}.

\subsection{Main results 1: Random-approximation}
\label{sec:intro:Rand-Approx}

We start with uniform \emph{random-approximation}, that is the property that any function $\zeta$ in an appropriate set can be approximated  by random networks of the form $\Psi(\cdot,w\odot f)$.
At the highest level the proof strategy is the same as in Foong et al.\, and consists of the following three steps. Given a function $\zeta\in \mathcal F$ to be approximated:
\begin{enumerate}
	\item \label{steps:1}
	Approximate $\zeta$ by a neural network $\Psi(\cdot,w)$ using classical deterministic universal approximation results (e.g. \cite{leshno1993multilayer});
	\item \label{steps:2}
	Use $\Psi(\cdot,w)$ to construct  a larger, random dropout neural network $\tilde \Psi(\cdot,\tilde w\odot \tilde f)$ that matches  $\Psi(\cdot,w)$ in expectation;
	\item \label{steps:3}
	Construct an even larger random neural network $\wh \Psi(\cdot,\wh w\odot \wh f)$ consisting of many independent copies of the network $\tilde \Psi(\cdot,\tilde w\odot \tilde f)$ to obtain an approximation of $\zeta$ that is close in expectation and also has small variance. 
\end{enumerate}
We consider Step~\ref{steps:1} as given by existing results, and Step~\ref{steps:3} is a standard procedure. The novelty of this paper for \emph{random-approximation} lies in Step~\ref{steps:2}, which we describe in the rest of this section.

Step~\ref{steps:2} is based on an {algebraic} property of common classes of neural networks, which is illustrated by the following simpler version of the central theorem. We write $2^n$ for the collection of subsets of $\intint n = \{1,\dots,n\}$, and for any such a subset $U$, we write $\mathbf{1}_U \in \{0,1\}^n$ for the vector with entries $(1_{j \in U})_{j \in \intint{n}}$.

\begin{theorem}
\label{thm:approximation_in_probability-intro}
	Let $\Psi : \mathbb{R}^d \times \mathbb{R}^n \to \mathbb{R}$ be any given function.  Let $(f^U)_{U \in 2^n}$ be a collection of $\{0,1\}^n$-valued random variables indexed by subsets $U \in 2^n$ such that for every $U$
	\[
	\mathbb{P}[f^U = (1,\dots, 1)] > 0.
	\]
	Then there exist constants $(a_U)_{U \in 2^n}$, independent of $w$, such that for all $w$,
	\begin{equation}
	\mathbb{E}\left[\sum_{U \in 2^n} a_U \Psi\bigl( \cdot, (w \odot \mathbf{1}_U) \odot f^U\bigr)\right] = \Psi(\cdot, w).
	\label{eqn:Decomposition_of_Psi_in_a_linear_combination_of_expectations-intro}
	\end{equation}
\end{theorem}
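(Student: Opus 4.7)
The plan is to reduce the statement to a finite-dimensional linear algebra problem. For each $U \in 2^n$, the random vector $f^U$ is supported on the finite set $\{0,1\}^n$, so the expectation is literally a finite sum. Using the algebraic identity $(w \odot \mathbf{1}_U) \odot \mathbf{1}_V = w \odot \mathbf{1}_{U \cap V}$, I would write
\[
\mathbb{E}\bigl[\Psi(\cdot,(w \odot \mathbf{1}_U)\odot f^U)\bigr]
= \sum_{V \in 2^n} \mathbb{P}[f^U = \mathbf{1}_V]\,\Psi(\cdot, w \odot \mathbf{1}_{U \cap V})
= \sum_{W \subseteq U} q^U_W\,\Psi(\cdot, w \odot \mathbf{1}_W),
\]
where $q^U_W := \mathbb{P}\bigl[\{j \in U : f^U_j = 1\} = W\bigr]$. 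Regrouping, the identity \eqref{eqn:Decomposition_of_Psi_in_a_linear_combination_of_expectations-intro} that we want is equivalent to choosing $(a_U)_{U \in 2^n}$ so that, for every $w$,
\[
\sum_{W \in 2^n} \Bigl(\sum_{U \supseteq W} q^U_W\, a_U\Bigr) \Psi(\cdot, w \odot \mathbf{1}_W)
= \Psi(\cdot, w \odot \mathbf{1}_{\intint{n}}).
\]

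A sufficient condition is to match coefficients term-by-term: $\sum_{U \supseteq W} q^U_W\, a_U = \delta_{W, \intint{n}}$ for every $W \in 2^n$. Writing this as the linear system $M a = e_{\intint{n}}$ with $M_{W,U} := q^U_W \,\mathbbm{1}[W \subseteq U]$, I observe that $M$ depends only on the laws of the $f^U$ and not on $w$, so any solution $a$ is automatically independent of $w$. To see that $M$ is invertible, I would order the subsets of $\intint{n}$ by non-decreasing cardinality (or any other linear extension of $\subseteq$); since $M_{W,U} = 0$ unless $W \subseteq U$, the matrix becomes upper triangular in this ordering, with diagonal entries
\[
M_{U,U} = q^U_U = \mathbb{P}\bigl[f^U_j = 1 \text{ for all } j \in U\bigr] \,\geq\, \mathbb{P}[f^U = (1,\dots,1)] \,>\, 0
\]
by hypothesis. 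Hence $M$ is invertible and $a := M^{-1} e_{\intint{n}}$ is the desired coefficient vector.

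The main obstacle is really notational rather than mathematical: one must recognize that composing the filter $\mathbf{1}_U$ with a random filter $f^U$ only ever produces weight vectors of the form $w \odot \mathbf{1}_W$ for $W \subseteq \intint{n}$, so the apparently uncountable family of expectations is in fact a linear combination of only $2^n$ elementary objects. Once this collapse is made, the positivity assumption $\mathbb{P}[f^U = (1,\dots,1)] > 0$ is exactly what is needed to keep the diagonal of $M$ bounded away from zero. It is worth emphasizing that this argument uses no regularity, measurability, or structural property of $\Psi$ beyond being a function $\mathbb{R}^d \times \mathbb{R}^n \to \mathbb{R}$, which is why the scheme carries over to the general Step~\ref{steps:2} in the random-approximation strategy.
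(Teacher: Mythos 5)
Your proposal is correct and takes essentially the same route as the paper: expanding the expectation over the finite support of $f^U$, using $(w\odot\mathbf{1}_U)\odot\mathbf{1}_V = w\odot\mathbf{1}_{U\cap V}$, and exploiting triangularity with respect to set inclusion, where the diagonal is nonzero precisely because $\mathbb{P}[f^U=(1,\dots,1)]>0$. The paper's inductive argument in Lemma~\ref{lemma:function_in_span_dropout_expectations} is exactly the forward-substitution solution of the triangular system $Ma=e_{\intint{n}}$ that you write down explicitly.
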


This theorem should be read as follows. The right-hand side in \eqref{eqn:Decomposition_of_Psi_in_a_linear_combination_of_expectations-intro} plays the role of a deterministic function that we want to approximate. The left-hand side is the expectation of a linear combination of many copies of $\Psi(\cdot,w)$. Each copy has two `dropout' modifications: the vector $\mathbf 1_U$ implements a deterministic dropout, and the random filter variables $f^U$ a stochastic one. With a view to generality, the random filter vector $f^U$ is allowed to be a different random vector for each subset $U$ of edges, but note that the distribution of $f^U$ on $\{0,1\}^n$ can be completely unrelated to the subset $U\subset \intint n$; the subset $U$ only serves as label. 

The theorem establishes the following fact: for \emph{any} collection of random filter variables $f^U$, for \emph{any} function $\Psi$, for \emph{any} parameter point $w$, the function $\Psi(\cdot,w)$ can be matched exactly by the expectation of a sum of filtered versions of the same function. The important caveat is that one needs to take into account all reduced versions of the functions $\Psi$, i.e., the whole hierarchy of deterministically modified versions indexed by subsets $U$.

\medskip
Theorem~\ref{thm:approximation_in_probability-intro} suggests a special role for `classes of networks', with the property that given a `network' $\Psi(\cdot,w)$ we can   in some sense define a new (random) network $\tilde \Psi(\cdot,\tilde w\odot \tilde f)$ by
\begin{equation}
\label{eqdef:tilde-Psi}	
\tilde \Psi(\cdot,\tilde w\odot \tilde f) := \sum_{U \in 2^n} a_U \Psi( \cdot, w \odot \mathbf{1}_U \odot f^U).
\end{equation}
To formalize this, we assume that we have chosen a set $\DNN$ (a `set of random neural networks'),  which can be any collection of tuples $(n, \Psi, f)$ that satisfy the following properties:
\begin{itemize}
	\item[(i)] $n \in \mathbb{N}$ is a natural number;
	\item[(ii)]  $\Psi : \mathbb{R}^d \times \mathbb{R}^n \to \mathbb{R}$ is a function such that for every $w \in \mathbb{R}^n$, $\Psi(\cdot, w) \in \mathcal{F}$;
	\item[(iii)]  $f$ is a $\{0, 1\}^n$-valued random variable such that 
\begin{equation}
	\mathbb{P}[ f = (1, \dots, 1) ]
	>
	0.
	\label{eqn:probability_filters_on_is_positive}
\end{equation}
\end{itemize}
Moreover, we assume that $\DNN$ is closed under linear, independent combinations. By this we mean that whenever $a, b \in \mathbb{R}$ and $(m, \Phi, f)$ and $(n, \Psi, g)$ are in $\DNN$, then also $(m + n, a \Phi + b \Psi, h) \in \DNN$ where $h$ is an $\{0,1\}^{m + n}$-valued random variable that is the independent concatenation of $f$ and $g$, 
and $a \Phi + b \Psi : \mathbb{R}^d \times \mathbb{R}^{m+n} \to \mathbb{R}$ is given by
\[
	(x, (w_1, w_2)) 
	\mapsto a \Phi(x, w_1) + b \Psi(x, w_2).
\]
This closure assumption implies that a definition of the form \eqref{eqdef:tilde-Psi} is meaningful.

The range of possible classes $\DNN$ satisfying these requirements is vast. Typical examples are neural networks with node-dropout, as originally introduced by \cite{hinton2012improving}, and \emph{dropconnect}, as introduced by \cite{wan2013regularization}, but many other choices also are possible. Note that the function $\Psi$ may be extremely general, implying that there are  no restrictions on e.g.\ the form of the activation function or the structure of the network. In fact, nothing in the requirements on $\DNN$ restricts to functions $\Psi$ generated by neural networks; other approximation methodologies may also be used, for instance based on Fourier or wavelet expansions. See Section~\ref{sec:specification_of_DNN}  for a detailed description of $\DNN$.

\medskip

By combining Theorem \ref{thm:approximation_in_probability-intro} with the law of large numbers we then find \refCorollary{cor:main-intro} below, which expresses the following insight:  if the class $\DNN$ is rich enough to approximate any function in $\mathcal{F}$ when all filter variables are set to 1, then any function in $\mathcal{F}$ can also be approximated by a (random) dropout neural network in $\DNN$.
\begin{corollary}
\label{cor:main-intro}
Let $\zeta \in \mathcal{F}$ and $\epsilon > 0$. Assume there exists a $(m, \Phi, g) \in \DNN$ and a $v \in \mathbb{R}^m$ such that $\|\Phi(\cdot, v) - \zeta\|_{\subnorm} < \epsilon/2$. Then there exists a $(n, \Psi, f) \in \DNN$ and a $w \in \mathbb{R}^n$ such that 
\begin{equation}
\mathbb{P}[ \pnorm{\Psi(\cdot, w \odot f) - \zeta }{\subnorm} > \epsilon]
< \epsilon
\end{equation}
and
\[
\mathbb{E} \left[ \left\| \Psi(\cdot, w \odot f) - \zeta\right\|_{\subnorm}^q \right]^{\frac{1}{q}} < \epsilon.
\]
\end{corollary}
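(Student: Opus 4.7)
The plan is to carry out Steps~\ref{steps:2} and~\ref{steps:3} of Section~\ref{sec:intro:Rand-Approx} on top of the deterministic approximation furnished by hypothesis. First I would triangle-split
\[
\|\Psi(\cdot,w\odot f) - \zeta\|_{\subnorm} \le \|\Psi(\cdot,w\odot f) - \Phi(\cdot,v)\|_{\subnorm} + \epsilon/2,
\]
reducing the problem to producing, for arbitrary $\delta>0$, a triple $(n,\Psi,f)\in\DNN$ and a $w$ with $\|\Psi(\cdot,w\odot f) - \Phi(\cdot,v)\|_{\subnorm} < \delta$ both with probability at least $1-\delta$ and with $L^q$-expectation at most $\delta$; one sets $\delta = \epsilon/2$ at the end.

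To build the required random network, I would apply Theorem~\ref{thm:approximation_in_probability-intro} to $\Phi$, with the filter variables $(f^U)_{U\in 2^m}$ taken to be independent copies of $g$. This yields coefficients $(a_U)_{U\in 2^m}$; setting
\[
\tilde \Phi(\cdot,\, v\odot \tilde f) := \sum_{U\in 2^m} a_U\, \Phi(\cdot,\, v\odot \mathbf 1_U\odot f^U),
\]
one has $\mathbb E[\tilde\Phi(\cdot,v\odot \tilde f)](x) = \Phi(x,v)$ for every $x$. Since each summand is just $(m,\Phi,g)\in\DNN$ evaluated at the weight $v\odot\mathbf 1_U$, the closure of $\DNN$ under linear, independent combinations repackages $\tilde\Phi$ as a single element of $\DNN$ of weight dimension $m\cdot 2^m$.

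For the variance-reduction step I would then take $N$ independent realisations $\tilde f_1,\dots,\tilde f_N$ of $\tilde f$ and define
\[
\Psi(\cdot,w\odot f) := \frac{1}{N}\sum_{k=1}^N \tilde\Phi(\cdot,\, v \odot \tilde f_k),
\]
which is again in $\DNN$ by iterating the closure property. The $N$ summands are i.i.d.\ and have pointwise expectation $\Phi(\cdot,v)$, so the pointwise variance of the average is $O(1/N)$. Provided $\|\cdot\|_{\subnorm}$ admits an exchange of the expectation in $f$ with the functional defining it (as it will for the integral-type seminorms the paper targets), this pointwise decay propagates to $\mathbb E\bigl[\|\Psi(\cdot,w\odot f)-\Phi(\cdot,v)\|_{\subnorm}^q\bigr]\to 0$ as $N\to\infty$; Markov's inequality on the same quantity then yields the in-probability bound. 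Taking $N$ sufficiently large drives both error quantities below $\epsilon/2$, which combined with the triangle split closes the argument.

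The main obstacle is precisely this last passage: transferring pointwise i.i.d.\ cancellation into control of $\|\cdot\|_{\subnorm}$ and of its $L^q(\mathbb P)$-expectation. For integral-type seminorms this is a brief Fubini computation, but in the full generality of a seminormed space $(\mathcal F,\|\cdot\|_{\subnorm})$ it would require either a Banach-valued law of large numbers or an explicit type/cotype structure on the norm. I would expect the ambient hypotheses of the paper to be arranged precisely so that this transfer is legitimate in the cases of interest.
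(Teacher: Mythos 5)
Your construction is the same as the paper's (decompose via Theorem~\ref{thm:approximation_in_probability-intro} with independent copies of $g$ as the $f^U$, repackage with the closure property, then average $M$ independent copies), but the final step --- the one you yourself flag as the main obstacle --- is a genuine gap as written. You propose to control the seminorm error via a pointwise variance bound plus a Fubini-type exchange, and you hedge by saying this needs integral-type seminorms, a Banach-valued law of large numbers, or type/cotype structure. The corollary, however, is stated for an \emph{arbitrary} seminormed space $(\mathcal F,\|\cdot\|_{\subnorm})$, and the paper's flagship applications use supremum-type seminorms, where pointwise variance decay does not transfer to the seminorm by any Fubini argument. So the proof is not closed in the generality claimed, and the extra hypotheses you would invoke are neither assumed in the paper nor needed.

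The missing observation that closes it (and is how the paper's proof of Theorem~\ref{thm:approximation_in_probability} works) is that no vector-valued LLN machinery is required, because the filters are $\{0,1\}^m$-valued: each summand $\sum_{U} a_U\,\Phi\bigl(\cdot,(v\odot\mathbf 1_U)\odot f^{i,U}\bigr)$ takes values in a \emph{fixed finite} subset of $\mathcal F$, namely among the functions $\Phi(\cdot,v\odot\mathbf 1_S)$, $S\in 2^m$. Hence the average over $i=1,\dots,M$ can be rewritten as a finite linear combination of these fixed functions whose coefficients are empirical frequencies of the events $\{f^{i,U}=\mathbf 1_S\}$; the scalar weak law of large numbers gives convergence of these coefficients in probability, and continuity of the vector-space operations with respect to any seminorm then yields convergence in $\|\cdot\|_{\subnorm}$ in probability --- with no structural assumption on the seminorm. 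For the $L^q$ statement, the same finiteness gives the uniform bound $\bigl\|\Phi(\cdot,(v\odot\mathbf 1_U)\odot f^{i,U})\bigr\|_{\subnorm}\le \max_{S\in 2^m}\|\Phi(\cdot,v\odot\mathbf 1_S)\|_{\subnorm}=:C_v$ almost surely, so dominated (bounded) convergence upgrades convergence in probability to convergence in $L^q$; Markov is then not even needed for the probability bound, which you get directly. With that replacement your argument matches the paper's proof of Corollary~\ref{cor:main}; the remaining bookkeeping (defining $(n,\Psi,f)$ as the independent linear combination of $2^m M$ copies of $(m,\Phi,g)$ with coefficients $a_U/M$ and taking $w$ to be the concatenation of the vectors $v\odot\mathbf 1_U$) is exactly as you describe.
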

\noindent
Section~\ref{sec:Use_of_average_filter_variables_for_prediction} is devoted to these results, but develops them in more generality. There we also give some examples and calculate the coefficients $a_U$ explicitly for the case of independent Bernoulli filters. 

\subsection{Main results 2: Expectation-replacement}
\label{ss:intro:Expectation-Replacement}

In the previous section we considered dropout neural network to be a random object, both during training and during prediction. 
By contrast, it is common practice to choose the filter variables  during prediction to be \emph{deterministic} and equal to their expectations; see e.g.\ Section 7.12 in \cite{GoodfellowBengioCourville16}'s paper. We call this \emph{expectation-replacement} dropout, and Corollary~\ref{cor:main-intro} above does not say anything about this situation. In fact, we show in Example~\ref{ex:MC-bad-regular} that the construction at the heart of Corollary~\ref{cor:main-intro} may lead to networks that are `bad approximators' in this specific sense: given a function $\zeta$,  the constructed networks approximate $\zeta$ with high probability with random filters, but do not approximate $\zeta$ at all when replacing the filters by their expectations.

To address this, we describe in Section~\ref{sec:Use_of_average_filter_variables_for_prediction}
the construction of dropout neural networks that approximate not only in probability and in $L^q$, but also in this expectation-replacement sense. As in the case of Corollary~\ref{cor:main-intro}, the construction builds on existing density results for deterministic networks: we  start with a given deterministic neural network $\Psi(\cdot,w)$ that is close to a given function $\zeta$. Differently from Corollary~\ref{cor:main-intro}, however, the nonlinearity of $\Psi$ forces us to apply the law of large numbers  to each edge (or weight in this context) separately, instead of simultaneously for the whole network. 

In the construction in Section~\ref{sec:Use_of_average_filter_variables_for_prediction} we therefore iteratively replace each edge in the deterministic neural network $\Psi$ by a set of parallel edges, with edge-weights $w$ taken from the original edge, and with independent filter variables on each of them. In this way we can use the law of large numbers to obtain convergence estimates for each edge separately, and then combine these estimates into a single convergence estimate for the whole network. 
  
  The convergence estimate for a single edge arises from  the following statement (which is a simplified version of Lemma~\ref{lem:copy-input}). It describes how the error encountered by averaging  $N$  independently filtered edges can be controlled in probability. At the same time it also allows for small perturbations of the inputs to this edge. This latter perturbation freedom is needed in order to apply this lemma progressively, moving from edge to edge through the network.
  \begin{lemma}
  		Consider any continuous function $\sigma : \mathbb{R}^m \to \mathbb{R}^m$ and let $W \in \mathbb{R}^{m \times n}$, $b \in \mathbb{R}^m$. Let $\{ F^i \}_{i \in \intint{N}}$ be a collection of independent copies of a random matrix $F \in \{0,1\}^{m \times n}$. Then for every $K > 0$ there exists a $\delta > 0$ such that 
	\begin{equation}
			\sup_{x \in \overline{B(0, K)}} 
			\sup_{(\tilde{x}^i) \in B(x, \delta)^N} 
			\biggl|
				\sigma
				\biggl(
					\frac1N \sum_{i=1}^N (W \odot F^i) \tilde{x}^i + b 
				\biggr) 
				- \sigma\Bigl( (W\odot \E F^i) x + b \Bigr) 
			\biggr| 
	\end{equation}
	converges to zero in probability as $N \to \infty$.
  \end{lemma}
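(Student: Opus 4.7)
The plan is to reduce the statement to two independent estimates through an additive decomposition of the inner arguments: a deterministic perturbation of size $O(\delta)$, and a stochastic matrix-LLN term of size $o_{\mathbb{P}}(1)$ as $N\to\infty$. The uniform continuity of $\sigma$ on a compact region then converts the resulting argument estimate into the desired $\sigma$-difference estimate.

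Concretely, I would begin with the algebraic identity obtained by adding and subtracting $\frac{1}{N}\sum_i (W\odot F^i)x$:
\begin{align*}
\frac{1}{N}\sum_{i=1}^N (W\odot F^i)\tilde{x}^i - (W\odot \mathbb{E} F)x
&= \frac{1}{N}\sum_{i=1}^N (W\odot F^i)(\tilde{x}^i - x) \\
&\quad {}+ \Biggl[\frac{1}{N}\sum_{i=1}^N (W\odot F^i) - (W\odot \mathbb{E} F)\Biggr]x.
\end{align*}
Call the first summand $I_1$ and the second $I_2$. Since each $F^i$ has entries in $\{0,1\}$, $I_1$ is bounded deterministically by some $C_W\,\delta$ uniformly in $N$, in $x$, in $(\tilde x^i)\in B(x,\delta)^N$, and in the realizations of the $F^i$. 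For $I_2$, the strong law of large numbers applied entrywise to the bounded matrix-valued variables $W\odot F^i$ yields $\frac{1}{N}\sum_{i=1}^N (W\odot F^i)\to W\odot\mathbb{E} F$ in matrix norm almost surely; multiplication by $x$ with $\|x\|\leq K$ then gives $\sup_{x\in\overline{B(0,K)}}\|I_2(N,x)\|\to 0$ in probability. Moreover, both arguments of $\sigma$ stay inside a fixed compact ball $\overline{B(0,M)}\subset\mathbb{R}^m$ whose radius depends only on $K$, $\delta$, $\|W\|$, and $\|b\|$, so that $\sigma$ is uniformly continuous on the relevant region.

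Given any threshold $\varepsilon>0$, I would pick $\eta>0$ from this uniform continuity so that $\eta$-close arguments produce $\sigma$-values within $\varepsilon$, and then choose $\delta>0$ so small that $C_W\,\delta\leq \eta/2$. The event that the supremum exceeds $\varepsilon$ is then contained in the event $\{\sup_x\|I_2(N,x)\|>\eta/2\}$, whose probability tends to $0$ by the matrix law of large numbers. The principal obstacle is handling the two interlocked suprema—one over $x$ in a compact set, one over $(\tilde x^i)$ in a $\delta$-neighborhood of $x$—simultaneously with the randomness of the $F^i$. This dissolves once one observes that $I_2$ is linear in $x$, so matrix-norm convergence gives uniform control over $\overline{B(0,K)}$ for free, while $I_1$ admits a purely deterministic bound uniform in every other variable. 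A minor technical point is that $\delta$ must be chosen in terms of the modulus of continuity of $\sigma$ and hence, at least implicitly, of $\varepsilon$; the conclusion is thus naturally read in the standard sense of convergence in probability.
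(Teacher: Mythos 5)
Your argument is correct, and it rests on the same three ingredients as the paper's proof of Lemma~\ref{lem:copy-input} in Appendix~\ref{sec:Proof_of_lemma_copy-input}: a deterministic $O(\delta)$ bound on the perturbation coming from $\tilde{x}^i-x$, a law of large numbers for the empirical average of the bounded matrices $W\odot F^i$, and uniform continuity of $\sigma$ on a fixed compact set. The organization, however, is genuinely different. The paper fixes the (random) maximizers $X_*,(\tilde X_*^i)$, introduces the intermediate point $\bar X_*=\frac1N\sum_i \tilde X_*^i$, compares the two $\sigma$-values through the modulus $\gamma_\delta$ of \eqref{eqn:Continuity_of_sigma_applied_to_the_average_point_which_must_be_close_around_x}, and then controls the remaining random discrepancy by conditioning on the event $\mathcal{E}$ of \eqref{eqn:Event_of_small_deviation} and invoking the law of total probability, with the perturbation entering through the constant $c_\delta$ of \eqref{eqn:Definition_of_c_delta}. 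You instead split once, at the level of the arguments of $\sigma$, into $I_1$ (perturbation, bounded by $C_W\delta$ uniformly in $N$, $x$, $(\tilde x^i)$ and the filters) and $I_2$ (the LLN term), and apply uniform continuity a single time; crucially, you make the uniformity over $x\in\overline{B(0,K)}$ explicit by bounding $\sup_x\lvert I_2\rvert$ by $K$ times a matrix norm of $\frac1N\sum_i W\odot F^i-W\odot\mathbb{E}F$, so a single measurable LLN event suffices and no random maximizers (hence no implicit measurable-selection issues) are needed; the only care required is to fix the compact ball on which $\sigma$ is uniformly continuous before choosing $\delta$ (e.g.\ by restricting to $\delta\le 1$), since otherwise $\eta$ and $\delta$ would be chosen circularly. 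What each approach buys: the paper's bookkeeping with $\gamma_\delta$, $c_\delta$ and the conditioning structure is written so as to carry over verbatim to the general form of Lemma~\ref{lem:copy-input} (nonnegative bounded filters and rescaled weights $V=W\div\mathbb{E}[F]$) and to be reused inside the inductive Lemma~\ref{lem:Propprob_inheritance}, whereas your version is shorter, makes the uniformity in $x$ transparent, and extends to that more general setting with only notational changes.
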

  In Section~\ref{sec:Use_of_average_filter_variables_for_prediction} this construction is described in detail. A separate part of this description is how to connect the resulting dropout neural network to the inputs of the original layer; for this we introduce a single additional layer that implements this connection.

 The main Theorem~\ref{th:main-average} allows for a wide range of choices of activation functions and filter-variable distributions. The following is a simple, concrete corollary for a \gls{ReLU} activation function.

\begin{corollary}
	\label{co:dropconnect-relu-average}
	Take $\mathcal{F}$ to be the space of continuous functions $\mathbb{R}^d \to \mathbb{R}$, and endow it with a seminorm $\| \cdot \|_{\subnorm}$ equal to supremum of the function  on the unit cube.
	Then for every $\zeta \in \mathcal{F}$ and every $\epsilon > 0$ there exists a dropconnect \gls{ReLU} neural network $(\Psi, f)$ and a parameter vector $w$ such that
	\begin{equation}
		\mathbb{P}\Bigl[ \Bigl\| \Psi(\cdot, w \odot f) - \zeta \Bigr\|_{\subnorm} > \epsilon \Bigr]
		< \epsilon
	\end{equation}
	and
\[
\mathbb{E} \left[ \left\| \Psi(\cdot, w \odot f) - \zeta\right\|_{\subnorm}^q \right]^{\frac{1}{q}} < \epsilon,
\]
	while
	\[
	\left\|\Psi( \cdot, w \odot \mathbb{E}[f]) - \zeta \right\|_{\subnorm} < \epsilon.
	\]
\end{corollary}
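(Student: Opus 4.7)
This is essentially a specialization of Theorem~\ref{th:main-average} to \gls{ReLU} activation, dropconnect (independent Bernoulli) filters, and the sup-seminorm on the unit cube $[0,1]^d$. My plan is to verify the hypotheses of that theorem in the present setting, combine it with a classical deterministic starting point, and finally upgrade the in-probability conclusion to an $L^q$ conclusion by a uniform boundedness argument.

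First I would invoke the classical universal-approximation theorem of \cite{leshno1993multilayer} to obtain a deterministic ReLU network $\Phi(\cdot,v)$ with $\|\Phi(\cdot,v)-\zeta\|_{\subnorm}<\epsilon/2$. Writing each weight $v_e$ of $\Phi$ as $v_e=p_e\cdot(v_e/p_e)$, with $p_e=\mathbb{E}[f_e]\in(0,1)$ the Bernoulli success probability of the filter assigned to edge $e$, guarantees that after expectation-replacement the enlarged dropconnect network reproduces $\Phi(\cdot,v)$ exactly (up to the coupling layer). I would then apply the edge-replacement construction of Section~\ref{sec:Use_of_average_filter_variables_for_prediction}: working outward from the input, each edge is replaced by $N$ parallel copies carrying independent Bernoulli filters, and the first-layer construction that couples the enlarged network back to the raw input is prepended.

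The key analytic input is the simplified lemma stated just above the corollary, which, combined with the $1$-Lipschitz continuity of ReLU, lets the per-edge in-probability estimates be cascaded node-by-node into a single global estimate on the compact set $[0,1]^d$. Choosing $N$ large enough thereby yields, simultaneously,
\[
\mathbb{P}\!\left[\bigl\|\Psi(\cdot,w\odot f)-\Phi(\cdot,v)\bigr\|_{\subnorm}>\epsilon/2\right]<\epsilon
\qquad\text{and}\qquad
\bigl\|\Psi(\cdot,w\odot\mathbb{E}[f])-\Phi(\cdot,v)\bigr\|_{\subnorm}<\epsilon/2,
\]
which, combined with $\|\Phi(\cdot,v)-\zeta\|_{\subnorm}<\epsilon/2$, delivers the in-probability and expectation-replacement conclusions of the corollary.

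For the $L^q$ bound I would use uniform boundedness: on the compact cube, $\zeta$ is bounded by continuity, while $\Psi(\cdot,w\odot f)$ is deterministically bounded because the filters live in $\{0,1\}$, the activation is $1$-Lipschitz with $\mathrm{ReLU}(0)=0$, and the input set is bounded; hence $\|\Psi(\cdot,w\odot f)-\zeta\|_{\subnorm}$ is almost-surely bounded by a constant depending only on the architecture and weights, so the convergence in probability upgrades to $L^q$ convergence by bounded convergence. The step I expect to be most delicate is not any single estimate but the bookkeeping required to couple all three conclusions to one choice of $N$ and one network: the recursive per-edge cascade through the layers propagates small probability masses and compounds Lipschitz constants, so a careful propagation of the $\delta$'s from the edge lemma across the depth of $\Phi$ is needed \emph{before} $N$ is finally fixed.
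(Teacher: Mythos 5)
Your proposal is correct and follows essentially the same route as the paper: the corollary is obtained by combining classical deterministic density (\cite{leshno1993multilayer}) with the construction of Theorem~\ref{th:main-average} (dropout-trees built via Lemma~\ref{lem:Propprob_inheritance}/\ref{lem:copy-input}, the rescaling $V^e = W^e \div \mathbb{E}[F^e]$, and the special $(\alpha,N)$-precomposition for the first layer), with the $L^q$ bound coming from exactly the almost-sure uniform boundedness argument you describe. The only imprecision is attributing the expectation-replacement bound to ``$N$ large'': in the paper that error lives entirely in the first-layer coupling and is controlled by taking $\alpha$ small (linearizing $\sigma_0$, with the ReLU check $4(|\sigma_-|+|\sigma_+|)/|\sigma_-+\sigma_+| = 4 < Q$ and $\sigma_0(0)=0$), which are hypotheses of Theorem~\ref{th:main-average} that are trivially satisfiable here.
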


Note that where the construction of the previous section applied to a very wide class of functions $\Psi$---not only those generated by neural networks---the construction underlying Corollary~\ref{co:dropconnect-relu-average} depends in a detailed manner on the fact that $\Psi$ has the structure of a neural network. 

\subsection{Random-approximation \emph{vs.}\ expectation-replace\-ment dropout}

Using a \emph{random} neural network to approximate a given \emph{deterministic} function  is non-trivial; the variance of the network needs to be reduced while matching the expectation, as described in Section~\ref{sec:intro:Rand-Approx}.

In expectation-replacement dropout, however, the networks used in prediction are deterministic,  and this difficulty is absent.
In fact, the difference between training and prediction is the reason we include expectation-replacement in this paper. 

This difference between training and prediction poses an intriguing question. Suppose that the dropout training algorithm yields a parameter point $w^*$. 
During this training, the algorithm has observed random networks $\Psi(\cdot,w\odot f)$, but   it has never observed the deterministic network $\Psi(\cdot,w\odot \mathbb Ef)$. Why, then,  should the result $w^*$ of the dropout training then generate a good deterministic network $\Psi(\cdot,w^*\odot \mathbb Ef)$? Example~\ref{ex:MC-bad-regular} confirms that this method may fail badly. 

At the same time, expectation-replacement dropout  is both very widespread and very successful; see e.g.\ Section 7.12\ in \cite{GoodfellowBengioCourville16}'s paper, or \cite{LabachSalehinejadValaee19}'s paper. How can these two observations be reconciled?

The results of Section~\ref{sec:Use_of_average_filter_variables_for_prediction} and e.g.\ Corollary~\ref{co:dropconnect-relu-average}  provide a partial answer to this question. We show that dropout neural network have sufficient representational capacity to approximate well simultaneously in probability, in $L^q$, \emph{and} in the expectation-replacement sense. While this does not explain why any given training algorithm finds parameter points that approximate well in the expectation-replacement sense, at least it shows that the contrast between random training and deterministic prediction is not an obstacle to good performance.

\subsection{Related literature}
\label{sec:related-literature}

The universal approximation property for neural networks is one of the fundamental properties and essentially  determines whether the whole training process of the network makes sense: if the algorithmically generated functions don't form a dense set in the function space of interest, the approximation problem is ill-posed. Therefore  establishing  the universal approximation property for different classes of networks has been an active research area in the last decades.  However, most classes of networks for which there is a universal approximation property established do not include, for example, node-dropout or dropconnect neural network.s

The first universal approximation theorem for neural networks with a sigmoidal activation function can be found in \cite{cybenko1989approximation}'s paper, and this canonical work led to much follow-up research. Several years later \cite{hornik1991approximation} showed that the universal approximation property relies more on a neural network's architecture than on the specific use of sigmoid activation functions. Moreover, \cite{leshno1993multilayer} established that deep, feed-forward neural networks require a nonpolynomial activation function in order for a universal approximation theorem to hold. 
\cite{makovoz1996random,makovoz1998uniform} used the so-called probabilistic method to prove the existence of a deterministic function that suitably approximates a target  function in deterministic neural networks. 

Approximately at the same time the study of random networks started. \cite{white1989additional}'s paper on ``QuickNet'' is one of the first works where universal approximation is mentioned (but not proved) side by side with a neural network algorithm in which random hidden nodes are placed.

The class of networks with random weights and biases, called Random Vector Functional-Link Nets, was introduced in 1994 by \cite{pao1994learning}.
\cite{igelnik1995stochastic} proved a universal approximation property of these networks, by showing that the span of the node functions is almost surely asymptotically dense in the many-node limit. This result does not apply to dropout schemes since  in the dropout setup the randomness is applied after choosing coefficients.

\cite{gelenbe1999function,gelenbe1999approximation} introduced a class of neural networks that relies on a fixed neural network topology on top of which neurons forward positive and negative signals (spikes) at random points in time based on their own ``potential''. 
Specifically, they gave a constructive proof of the universal approximation theorem for such stochastic neural networks networks in steady state. 
This class of networks also doesn't cover the node-dropout or dropconnect cases due to the different dynamics assumed; moreover a dropout neural network is trained randomly, but typically operated deterministically. 

\cite{rahimi2008uniform} investigated uniform approximation of functions with random bases. This is a particular case of a so-called random feature method, in which  the parameters are split in two groups: parameters in one group are taken randomly (and not tuned), and the other part is trained to achieve best approximation. Therefore these results also don't cover node-dropout or dropconnect since for the latter algorithms all parameters are trained.

Another commonly used class of neural networks is the \emph{mixture of experts} model. The idea is that for different input regions different, typically simpler,  networks (learners) are used for prediction. The choice is performed by the \emph{gating network}; training of the model consists then of training individual learners together with training the gating network. 
\cite{nguyen2016universal} proved a universal approximation theorem for a mixture-of-experts model, and \cite{nguyen2017universal} subsequently generalized their findings to allow for so-called Gaussian gating. 

\cite{debie2018stochastic} considered a network architecture that can handle probability measures as input and output. They proved the universal approximation in Wasserstein metric for continuous maps from the space of measures into itself.  Our results are more specific, and not covered by this result, since we study a different (more restricted) approximation scheme. 

As mentioned in the introduction, \cite{foong2020expressiveness} show a universal approximation property for random-approximation dropout networks (see their Theorem 3).  We recover this result as Corollary~\ref{cor:main-intro} when identifying $h \equiv 0$. Another difference is that we allow for activation functions other than \gls{ReLU} activation functions, and consider a stronger sense of approximation.

Finally, we refer interested readers to the following surveys to fully complete their picture of known results.
A survey of approximation-theoretic problems was written by \cite{pinkus1999approximation}; a recent survey by \cite{Grohs_UA_deep} contains a comparison of approximation properties for finite-width and finite-depth networks.  Several uniform approximation results for random neural networks can be found in   \cite{timotheou2010random}'s Section 5.4. Approximation literature for random neural networks was also summarized by \cite{yin2019random}.

\subsection{Structure of this paper} 

Definitions of dropout neural networks are given in Section~\ref{sec:specification_of_DNN}. 
In Section~\ref{sec:Universal_approximation_for_random_approximation_dropout} we show universal approximation results for random-approximation dropout, whereas Section~\ref{sec:Use_of_average_filter_variables_for_prediction} is devoted to universal approximation results for expectation-replacement dropout.
We discuss our results in Section~\ref{sec:discussion} and conclude in Section~\ref{sec:conclusion}.

\section{Specification of dropout neural networks}
\label{sec:specification_of_DNN}

In the introduction, we considered general functions $\Psi: \mathbb{R}^d \times \mathbb{R}^n \to \mathbb{R}$ together with a $\{0,1\}^n$-valued random variable $f$ (Section~\ref{sec:intro:Rand-Approx}), and more specific functions $\Psi$ that arise from a neural network (Section~\ref{ss:intro:Expectation-Replacement}). In this section, we specify this neural network structure and introduce the corresponding notation.

\subsection{Neural networks}
\label{sec:neural-networks}

We specify a (feedforward) neural network as a special type of parametrized function $\Psi : \mathbb{R}^{d} \times \mathbb{R}^n \to \mathbb{R}$ from an input vector space $\mathbb{R}^{d}$ to $\mathbb{R}$, parametrized by vectors in $\mathbb{R}^n$. The function is special in that it is assumed to be the composition of multiple functions of much simpler type
\begin{equation}
	\Psi( \cdot , w ) = 
	\Psi_L\left( \cdot  , w^{(L)}\right) \circ 
	\Psi_{L-1}\left( \cdot  , w^{(L-1)}\right) \circ
	\cdots \circ
	\Psi_1\left( \cdot  , w^{(1)}\right).
	\label{eqn:definition_Psi}
\end{equation}
Here, $L$ is an integer, the parameter $w$ is the concatenation of the individual parameter vectors $w^{(j)} = (W^{(j)}, b^{(j)})$ for $j = \intint{L}$, which in turn consist of a $d_j \times d_{j-1}$ \emph{weight matrix} $W^{(j)}$ and a bias vector $b^{(j)} \in \mathbb{R}^{d_j}$. We set $d_0 = d$ and $d_L=1$.

In \eqref{eqn:definition_Psi} every $\Psi_j$ is a function from $\mathbb{R}^{d_{j-1}}$ to $\mathbb{R}^{d_{j}}$ given by
\begin{equation}
	\Psi_j(x , w^{(j)}) 
	:= \sigma_j\left( W^{(j)} x + b^{(j)} \right),
	\label{eqn:definition_Psi_j}
\end{equation}
where the function $\sigma_j : \mathbb{R} \to \mathbb{R}$ is called the \emph{activation function}. 
The activation function is applied elementwise.

\subsection{Dropout neural networks}
\label{sec:dropout_neural_networks}

A dropout neural network consists of a neural network $\Psi : \mathbb{R}^{d} \times \mathbb{R}^n \to \mathbb{R}$ as above together with a random vector $f \in \{0, 1\}^n$.
The components of $f$ are called filter variables. The network $\Psi$, the filter variables $f$, and a parameter vector $w \in \mathbb{R}^n$ together  form  a stochastic function from $\R d$ to $\realNumbers$ given by
\[
x \mapsto \Psi(x, w \odot f).
\]

For the constructions later in the article, we recall what we precisely mean by random variables. Throughout the article, $(\Omega, \mathcal{F}_\Omega, \mathbb{P})$ is an arbitrary, rich enough, probability space. Whenever we write \emph{random variable}, \emph{random vector} or \emph{random matrix}, we mean a measurable function defined on this probability space.

\subsubsection{Node-dropout}
\label{sec:node-dropout}

In the original version of dropout filter variables acted on \emph{nodes} of the network \citep{hinton2012improving}. In this paper the filter variables act on edges instead. The original version, which we call \emph{node-dropout}, can be represented in the edge-based version of this paper as follows. 

The filter variables are partitioned into various blocks: filter variables are in the same block if and only if they multiply an element in the same column in the same weight matrix, or they multiply elements of the same bias vector. Filter variables in the same block always attain the same value, i.e., with probability one. Filter variables in different blocks are independent. We will use the convention that filter variables that multiply biases are always on, whereas filter variables that multiply elements of weight matrices are on, i.e., equal to $1$, with probability $1-p$ for some $0 \leq p < 1$. 

We can understand node-dropout from the previous description in the notation of \eqref{eqn:definition_Psi_j}. For any $j = 1, \dots, L$, choose probabilities $p^j$, and let $f_1^{j}, \ldots, f_{d_{j}}^j$ be independent Bernoulli filters with probability $1-p^j$. Let $D_j \in \R{d_j \times d_j}$ be the diagonal matrix with entries $f_1^{j}, \ldots, f_{d_{j}}^j$ in the diagonal. If we then arranging all nodes per block, then node-dropout implements for $j=1, \ldots, L$,
\begin{equation}
	\Psi_j(\cdot , w^{(j)} \odot f^{(j)}) 
	= \sigma_j\left( W^{(j)} D^{j} ( \cdot ) + b^{(j)} \right).
	\label{eqn:definition_Psi_j_node_dropout}
\end{equation}
Note that if $p^1>0$ then with positive probability an input is masked.
For this reason we call the case $p^1>0$ node-dropout \emph{with} dropout on the inputs. We call the case $p^1=0$ node-dropout \emph{without} dropout on the inputs.

\subsubsection{Dropconnect}
\label{sec:dropconnect}

Dropconnect is another dropout regularization scheme \citep{wan2013regularization}. Although \cite{wan2013regularization} also allowed for dropout of biases, we will use the term \emph{dropconnect} for the dropout neural network in which only the matrices $W^{(j)}$ are filtered. This is achieved by choosing the filter variables multiplying the biases to be equal to~$1$ with probability one.

We can understand dropconnect in the notation of \eqref{eqn:definition_Psi_j}. For $j = 1, \ldots, L$, let $F^{(j)} \in \R{d_{j +1} \times d_{j}}$ be random matrices composed of entries $(F^{j})_{ik}$, all of which are mutually independent Bernoulli random variables with the same success probability $1-p$. Dropconnect then implements for $j=1, \ldots, L$,
\begin{equation}
	\Psi_j(\cdot , w^{(j)} \odot f^{(j)}) 
	= \sigma_j\left( (W^{(j)} \odot F^{(j)}) ( \cdot ) + b^{(j)} \right).
	\label{eqn:definition_Psi_j_dropconnect}
\end{equation}

\section{Universal approximation for random app\-rox\-imation dropout}
\label{sec:Universal_approximation_for_random_approximation_dropout}

The aim of this section is to derive the abstract universal approximation statement for random-approximation dropout already mentioned in the introduction (Corollary~\ref{cor:main-intro}).

\subsection{Key approximation result}

The following theorem is  Theorem~\ref{thm:approximation_in_probability-intro} in the introduction, extended with a convergence statement. 
\begin{theorem}
\label{thm:approximation_in_probability}
	Let $(\mathcal{F}, \| \cdot \|_{\subnorm})$ be a seminormed vector space of functions from $\mathbb{R}^d$ to $\mathbb{R}$. Let $\Psi : \mathbb{R}^d \times \mathbb{R}^n \to \mathbb{R}$ be a given function such that $\Psi( \cdot, w) \in \mathcal{F}$ for every $w \in \mathbb{R}^n$. Let $(f^U)_{U \in 2^n}$ be a collection of $\{0,1\}^n$-valued random variables indexed by subsets $U \in 2^n$, such that for every $U$
	\begin{equation}
		\mathbb{P}[f^U = (1,\dots, 1)] 
		> 
		0.
		\label{eqn:probability_all_filters_one_positive}
	\end{equation}
	Then there exist constants $(a_U)_{U \in 2^n}$ independent of $w$ such that
	\begin{equation}
		\mathbb{E}\left[\sum_{U \in 2^n} a_U \Psi( \cdot, (w \odot \mathbf{1}_U) \odot f^U)\right] 
		= 
		\Psi(\cdot, w).
		\label{eqn:Decomposition_of_Psi_in_a_linear_combination_of_expectations}
	\end{equation}
	In particular, by the weak law of large numbers, if $f^{i, U}$ are independent copies of $f^U$, then as $M \to \infty$,
	\begin{equation}
		\frac{1}{M} \sum_{i=1}^M \sum_{U \in 2^n} a_{U} \Psi( \cdot, (w \odot \mathbf{1}_U) \odot f^{i, U})
		\to 
		\Psi(\cdot, w)
		\label{eqn:average_converges_in_prob_and_Lq}
	\end{equation}
	in probability in $(\mathcal{F}, \|\cdot \|_{\subnorm})$ and in $L^q$ for every $q \in [1,\infty)$.
\end{theorem}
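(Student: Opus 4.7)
The proof cleanly splits into two parts: an algebraic step that establishes the existence of the coefficients $(a_U)$ by reducing \eqref{eqn:Decomposition_of_Psi_in_a_linear_combination_of_expectations} to a triangular linear system, and an analytic step that obtains the convergence in \eqref{eqn:average_converges_in_prob_and_Lq} via a scalar law of large numbers, exploiting the fact that the summands lie in a fixed finite-dimensional subspace of $\mathcal{F}$.

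For the algebraic step, for each $V \in 2^n$ I would introduce the deterministically filtered function $g_V(w) := \Psi(\cdot, w \odot \mathbf{1}_V) \in \mathcal{F}$, so that $\Psi(\cdot, w) = g_{\intint n}(w)$. For any $U \in 2^n$, the product $\mathbf{1}_U \odot f^U$ is itself a $\{0,1\}^n$-vector whose support is contained in $U$, so it equals $\mathbf{1}_V$ for some (random) $V \subseteq U$. Writing $p_{U,V} := \mathbb{P}[\mathbf{1}_U \odot f^U = \mathbf{1}_V]$ and conditioning,
\begin{equation*}
\mathbb{E}\bigl[\Psi(\cdot, (w \odot \mathbf{1}_U) \odot f^U)\bigr]
= \sum_{V \subseteq U} p_{U,V}\, g_V(w).
\end{equation*}
The identity \eqref{eqn:Decomposition_of_Psi_in_a_linear_combination_of_expectations} then reduces to the requirement $\sum_{U \supseteq V} a_U\, p_{U,V} = \delta_{V, \intint n}$ for every $V \in 2^n$. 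Since $p_{U,V}$ vanishes unless $V \subseteq U$, the coefficient matrix is lower-triangular under any linear ordering that refines set-inclusion, and its diagonal entries $p_{V,V} = \mathbb{P}[\mathbf{1}_V \odot f^V = \mathbf{1}_V] \geq \mathbb{P}[f^V = (1, \ldots, 1)] > 0$ are positive by \eqref{eqn:probability_all_filters_one_positive}. Hence the system admits a unique solution $(a_U)$, obtained explicitly by back-substitution from $V = \intint n$ downwards, that depends only on the marginal laws of the $f^U$'s and not on $w$.

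For the convergence step, set $Z_i := \sum_U a_U \Psi(\cdot, (w \odot \mathbf{1}_U) \odot f^{i, U})$, so $\{Z_i\}$ is an i.i.d.\ sequence of $\mathcal{F}$-valued random elements with $\mathbb{E}[Z_i] = \Psi(\cdot, w)$ by the first part. The essential simplification is that $Z_1$ takes only finitely many values in $\mathcal{F}$: for each $U$ the vector $f^{1,U}$ lives in the finite set $\{0,1\}^n$, so $Z_1$ is supported in a fixed finite-dimensional subspace of $\mathcal{F}$. Choose a finite spanning set $e_1, \ldots, e_K$ of this subspace and expand $Z_i = \sum_{k=1}^K c_k^i\, e_k$. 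The scalar sequences $\{c_k^i\}_{i \geq 1}$ are i.i.d.\ and uniformly bounded, so the classical weak law of large numbers gives $\tfrac{1}{M}\sum_{i=1}^M c_k^i \to \mathbb{E}[c_k^1]$ in probability, and boundedness combined with the bounded convergence theorem upgrades this to convergence in $L^q$ for every $q \in [1, \infty)$. The triangle inequality for the seminorm,
\begin{equation*}
\left\| \frac{1}{M}\sum_{i=1}^M Z_i - \Psi(\cdot, w)\right\|_{\subnorm}
\leq \sum_{k=1}^K \left|\frac{1}{M}\sum_{i=1}^M c_k^i - \mathbb{E}[c_k^1]\right|\, \|e_k\|_{\subnorm},
\end{equation*}
then transfers both convergence modes to $(\mathcal{F}, \|\cdot\|_{\subnorm})$. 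I do not anticipate any deep obstacle; the principal technical point requiring care is precisely this finite-dimensional reduction, which sidesteps the Bochner-integrability and measurability issues that would otherwise arise when formulating a law of large numbers directly in the arbitrary seminormed space $(\mathcal{F}, \|\cdot\|_{\subnorm})$.
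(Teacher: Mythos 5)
Your proposal is correct and takes essentially the same route as the paper: the triangular system you solve by back-substitution (with positive diagonal entries $p_{V,V}\ge\mathbb{P}[f^V=(1,\dots,1)]>0$) is precisely the inductive rearrangement identity in the paper's \refLemma{lemma:function_in_span_dropout_expectations}, and the convergence step is the same law-of-large-numbers-plus-uniform-boundedness argument. Your explicit finite-dimensional reduction simply spells out what the paper compresses into ``by the weak law of large numbers'' together with dominated convergence for the $L^q$ claim.
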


A proof of \refTheorem{thm:approximation_in_probability} can be found in Appendix~\ref{secappendix:proof_of_approximation_in_probability}. The main observation in \refTheorem{thm:approximation_in_probability} is the existence of the constants $(a_U)_{U \in 2^n}$.
This purely algebraic statement follows by induction, as explained by Lemma \ref{lemma:function_in_span_dropout_expectations} in Appendix~\ref{secappendix:proof_of_approximation_in_probability}. From \refTheorem{thm:approximation_in_probability}, it follows that one can see a dropout neural network as a linear combination of dropout networks with weights $(w \odot \mathbf{1}_U)_{U \in 2^n}$, such that the linear combination equals the original neural network in expectation as shown in \eqref{eqn:Decomposition_of_Psi_in_a_linear_combination_of_expectations}.

To get a dropout neural network that is close to the original network in probability, in \eqref{eqn:average_converges_in_prob_and_Lq} one makes a large average of independent copies of the dropout network that approximates the original network in expectation. The convergence in probability of \eqref{eqn:average_converges_in_prob_and_Lq} follows then from the weak law of large numbers. The convergence in $L^q$ finally follows because the expectation is uniformly bounded in $\mathcal{F}$ for any realization of the filter variables $f^{i,U}$, so that the convergence in probability immediately implies the convergence in $L^q$ by dominated convergence.

\subsection{Examples}
We further illustrate the construction of  \refTheorem{thm:approximation_in_probability} with the following examples:

\begin{example}[One-hidden-layer dropconnect networks]
	Consider the function $\Psi : \mathbb{R}^d \times \mathbb{R}^n \to \mathbb{R}$ given by
	\begin{equation}
		\label{eqn:Cybenko_base_NN}
	\Psi(x, w) := \sum_{j=1}^N c^j \sigma(w^j x +b^j)
	\end{equation}
	where the activation function $\sigma : \mathbb{R} \to \mathbb{R}$ is continuous with $\sigma(x) \to 0$ as $x \to -\infty$ and $\sigma(x) \to 1$ as $x \to \infty$.
	In \eqref{eqn:Cybenko_base_NN} we have biases $b_j \in \mathbb{R}$ and weights made up from the constants $c^j \in \mathbb{R}$ and the $1 \times d$-matrices $w^j$. 

	The well-known result by \cite{cybenko1989approximation} implies that the class of all such functions
	is dense in $C([0,1]^d)$ endowed with the supremum norm. An example of an approximation by functions in \eqref{eqn:Cybenko_base_NN} is depicted in Figure \ref{fig:Cybenko_base_NN}.

	\begin{figure}[!hptb]
		\centering
%

\def\layersep{2cm}
\begin{tikzpicture}[node distance=\layersep]
	\tikzstyle{every edge}=[shorten >=1pt,->,draw=black!50];
    \tikzstyle{every pin edge}=[<-,shorten <=1pt];
    \tikzstyle{neuron}=[circle,draw=black,fill=black!100,minimum size=9pt,inner sep=0pt];
    \tikzstyle{input neuron}=[neuron, fill=black!50];
    \tikzstyle{output neuron}=[neuron, fill=black!50];
    \tikzstyle{hidden neuron}=[neuron, fill=black!25];
    \tikzstyle{annot} = [text width=4em, text centered]

\node[input neuron, pin=left:$x$] (I-1) at (0,-2) {};
\node[hidden neuron, pin={[pin distance=0.25cm]below right:\footnotesize $1$}] (I-2) at (0,-4) {};
\path node[hidden neuron] (H-1) at (\layersep,-2 cm) {$\sigma$};
\path node[hidden neuron] (H-2) at (\layersep,-4 cm) {$\sigma$};
\node[output neuron,pin={[pin edge={->}]right:$\Psi(x,w) = \displaystyle \sum_{j=1}^n c^j \sigma( w^j x + b^j )$}, right of=H-1] (O) {$+$};
\path (I-1) edge node[near start, above] {$w^1$} (H-1);
\path (I-1) edge node[near start, above] {$w^2$} (H-2);
\path (I-2) edge node[near start, above] {$b^1$} (H-1);
\path (I-2) edge node[near start, above] {$b^2$} (H-2);        
\path (H-1) edge node[near start, above] {$c^1$}  (O);
\path (H-2) edge node[near start, above] {$c^2$}  (O);

   \begin{axis}[
        width = 0.5*\columnwidth, 
        height = 0.5*0.618\columnwidth,	
        xshift = 0.4\textwidth,
        yshift = -6cm,
        xmin = -10, 
        xmax = 10,
        ymin = -0.5, 
        ymax = 0.5,
        xtick = {-10,0,10},
        every axis x label/.style=
            {at={(ticklabel cs:0.5)},anchor=north},
        ytick={-0.5,0,0.5},
        every axis y label/.style=
            {at={(ticklabel cs:0.5)},rotate=0,anchor=east},
        scaled ticks=true,
        xlabel={$x$},  			
    	ylabel={},	
        legend style = {at={(-0.25,0.25)},anchor=north east},   	
        legend entries=
        {
        	{\footnotesize Truth $\zeta(x)$},
        	{\footnotesize Cybenko NN $\Psi(x,v)$},
        	{\footnotesize Cybenko NN $\pm \varepsilon$},
        	{},
        	{\footnotesize Construction $\Psi(x,w \odot f)$}
        }
        ]

\addplot[color=black, thick, mark=none] plot coordinates {
(-10.,-0.000599094) (-9.9,-0.000582943) (-9.8,-0.000550331) (-9.7,-0.000498335) (-9.6,-0.000423811) (-9.5,-0.00032342) (-9.4,-0.000193653) (-9.3,-0.0000308754) (-9.2,0.000168624) (-9.1,0.000408566) (-9.,0.000692602) (-8.9,0.00102421) (-8.8,0.00140661) (-8.7,0.00184257) (-8.6,0.00233434) (-8.5,0.00288338) (-8.4,0.00349025) (-8.3,0.00415434) (-8.2,0.00487363) (-8.1,0.00564446) (-8.,0.00646118) (-7.9,0.00731591) (-7.8,0.00819818) (-7.7,0.00909458) (-7.6,0.00998842) (-7.5,0.0108594) (-7.4,0.0116831) (-7.3,0.0124311) (-7.2,0.0130698) (-7.1,0.013561) (-7.,0.0138613) (-6.9,0.0139217) (-6.8,0.0136877) (-6.7,0.0130994) (-6.6,0.0120913) (-6.5,0.0105927) (-6.4,0.00852824) (-6.3,0.00581816) (-6.2,0.00237946) (-6.1,-0.00187318) (-6.,-0.00702595) (-5.9,-0.0131643) (-5.8,-0.0203707) (-5.7,-0.0287223) (-5.6,-0.0382881) (-5.5,-0.0491256) (-5.4,-0.0612766) (-5.3,-0.0747635) (-5.2,-0.089584) (-5.1,-0.105706) (-5.,-0.12306) (-4.9,-0.141537) (-4.8,-0.160976) (-4.7,-0.181161) (-4.6,-0.20181) (-4.5,-0.222571) (-4.4,-0.243009) (-4.3,-0.2626) (-4.2,-0.280725) (-4.1,-0.296657) (-4.,-0.30956) (-3.9,-0.318477) (-3.8,-0.322329) (-3.7,-0.319909) (-3.6,-0.309882) (-3.5,-0.290786) (-3.4,-0.261035) (-3.3,-0.218927) (-3.2,-0.162657) (-3.1,-0.090333) (-3.,0.) (-2.9,0.090333) (-2.8,0.162657) (-2.7,0.218927) (-2.6,0.261035) (-2.5,0.290786) (-2.4,0.309882) (-2.3,0.319909) (-2.2,0.322329) (-2.1,0.318477) (-2.,0.30956) (-1.9,0.296657) (-1.8,0.280725) (-1.7,0.2626) (-1.6,0.243009) (-1.5,0.222571) (-1.4,0.20181) (-1.3,0.181161) (-1.2,0.160976) (-1.1,0.141537) (-1.,0.12306) (-0.9,0.105706) (-0.8,0.089584) (-0.7,0.0747635) (-0.6,0.0612766) (-0.5,0.0491256) (-0.4,0.0382881) (-0.3,0.0287223) (-0.2,0.0203707) (-0.1,0.0131643) (0.,0.00702595) (0.1,0.00187318) (0.2,-0.00237946) (0.3,-0.00581816) (0.4,-0.00852824) (0.5,-0.0105927) (0.6,-0.0120913) (0.7,-0.0130994) (0.8,-0.0136877) (0.9,-0.0139217) (1.,-0.0138613) (1.1,-0.013561) (1.2,-0.0130698) (1.3,-0.0124311) (1.4,-0.0116831) (1.5,-0.0108594) (1.6,-0.00998842) (1.7,-0.00909458) (1.8,-0.00819818) (1.9,-0.00731591) (2.,-0.00646118) (2.1,-0.00564446) (2.2,-0.00487363) (2.3,-0.00415434) (2.4,-0.00349025) (2.5,-0.00288338) (2.6,-0.00233434) (2.7,-0.00184257) (2.8,-0.00140661) (2.9,-0.00102421) (3.,-0.000692602) (3.1,-0.000408566) (3.2,-0.000168624) (3.3,0.0000308754) (3.4,0.000193653) (3.5,0.00032342) (3.6,0.000423811) (3.7,0.000498335) (3.8,0.000550331) (3.9,0.000582943) (4.,0.000599094) (4.1,0.000601476) (4.2,0.000592541) (4.3,0.000574503) (4.4,0.000549338) (4.5,0.000518793) (4.6,0.000484397) (4.7,0.000447469) (4.8,0.000409138) (4.9,0.000370351) (5.,0.000331893) (5.1,0.0002944) (5.2,0.000258375) (5.3,0.000224205) (5.4,0.000192171) (5.5,0.000162467) (5.6,0.000135207) (5.7,0.000110441) (5.8,0.0000881664) (5.9,0.0000683337) (6.,0.0000508595) (6.1,0.0000356324) (6.2,0.0000225207) (6.3,0.0000113782) (6.4,0.00000204952) (6.5,-0.0000056252) (6.6,-0.0000118069) (6.7,-0.0000166544) (6.8,-0.0000203219) (6.9,-0.0000229567) (7.,-0.0000246985) (7.1,-0.0000256776) (7.2,-0.0000260146) (7.3,-0.0000258196) (7.4,-0.0000251928) (7.5,-0.0000242237) (7.6,-0.0000229919) (7.7,-0.0000215673) (7.8,-0.0000200106) (7.9,-0.000018374) (8.,-0.0000167015) (8.1,-0.00001503) (8.2,-0.0000133895) (8.3,-0.000011804) (8.4,-0.0000102923) (8.5,-0.00000886841) (8.6,-0.00000754212) (8.7,-0.00000631975) (8.8,-0.0000052046) (8.9,-0.0000041974) (9.,-0.00000329682) (9.1,-0.00000249982) (9.2,-0.00000180206) (9.3,-0.00000119816) (9.4,-0.000000682055) (9.5,-0.000000247159) (9.6,0.000000113377) (9.7,0.000000406508) (9.8,0.000000639146) (9.9,0.000000818048) (10.,0.000000949716) 
};

\addplot[color=blue, densely dotted, thick, mark=none] plot coordinates {
(-10.,0.00988746) (-9.9,0.00987244) (-9.8,0.00985503) (-9.7,0.00983477) (-9.6,0.00981188) (-9.5,0.0097847) (-9.4,0.0097537) (-9.3,0.00971746) (-9.2,0.00967574) (-9.1,0.00962734) (-9.,0.00957131) (-8.9,0.00950646) (-8.8,0.0094316) (-8.7,0.00934505) (-8.6,0.0092442) (-8.5,0.00912762) (-8.4,0.00899339) (-8.3,0.00883722) (-8.2,0.00865698) (-8.1,0.0084486) (-8.,0.00820756) (-7.9,0.00792837) (-7.8,0.00760555) (-7.7,0.00723195) (-7.6,0.00680017) (-7.5,0.00630116) (-7.4,0.00572371) (-7.3,0.00505638) (-7.2,0.00428557) (-7.1,0.00339484) (-7.,0.00236607) (-6.9,0.00117898) (-6.8,-0.000191927) (-6.7,-0.0017724) (-6.6,-0.00359535) (-6.5,-0.0056951) (-6.4,-0.00811386) (-6.3,-0.0108964) (-6.2,-0.0140951) (-6.1,-0.0177689) (-6.,-0.0219829) (-5.9,-0.0268104) (-5.8,-0.0323319) (-5.7,-0.0386362) (-5.6,-0.0458186) (-5.5,-0.0539813) (-5.4,-0.0632324) (-5.3,-0.0736818) (-5.2,-0.0854375) (-5.1,-0.098603) (-5.,-0.113264) (-4.9,-0.129482) (-4.8,-0.14728) (-4.7,-0.166621) (-4.6,-0.187385) (-4.5,-0.20934) (-4.4,-0.232107) (-4.3,-0.255118) (-4.2,-0.277578) (-4.1,-0.298422) (-4.,-0.316289) (-3.9,-0.329523) (-3.8,-0.336215) (-3.7,-0.334313) (-3.6,-0.32182) (-3.5,-0.297086) (-3.4,-0.259169) (-3.3,-0.20822) (-3.2,-0.145757) (-3.1,-0.0747371) (-3.,0.000687599) (-2.9,0.0757157) (-2.8,0.145621) (-2.7,0.206458) (-2.6,0.255544) (-2.5,0.291637) (-2.4,0.314819) (-2.3,0.326174) (-2.2,0.327419) (-2.1,0.320548) (-2.,0.307563) (-1.9,0.290299) (-1.8,0.270326) (-1.7,0.248925) (-1.6,0.227087) (-1.5,0.20555) (-1.4,0.184831) (-1.3,0.165277) (-1.2,0.147092) (-1.1,0.13038) (-1.,0.115169) (-0.9,0.101432) (-0.8,0.089106) (-0.7,0.0781068) (-0.6,0.0683359) (-0.5,0.0596895) (-0.4,0.0520633) (-0.3,0.0453556) (-0.2,0.0394698) (-0.1,0.0343159) (0.,0.0298107) (0.1,0.0258787) (0.2,0.0224515) (0.3,0.0194676) (0.4,0.0168723) (0.5,0.014617) (0.6,0.0126585) (0.7,0.010959) (0.8,0.00948497) (0.9,0.00820722) (1.,0.00710007) (1.1,0.00614112) (1.2,0.0053108) (1.3,0.00459208) (1.4,0.00397011) (1.5,0.003432) (1.6,0.00296653) (1.7,0.00256396) (1.8,0.00221585) (1.9,0.00191488) (2.,0.00165468) (2.1,0.00142977) (2.2,0.00123537) (2.3,0.00106736) (2.4,0.000922163) (2.5,0.000796693) (2.6,0.000688276) (2.7,0.000594597) (2.8,0.000513657) (2.9,0.000443727) (3.,0.00038331) (3.1,0.000331115) (3.2,0.000286023) (3.3,0.000247069) (3.4,0.000213418) (3.5,0.000184349) (3.6,0.000159238) (3.7,0.000137546) (3.8,0.000118808) (3.9,0.000102623) (4.,0.0000886418) (4.1,0.0000765652) (4.2,0.0000661336) (4.3,0.000057123) (4.4,0.0000493401) (4.5,0.0000426174) (4.6,0.0000368106) (4.7,0.0000317949) (4.8,0.0000274626) (4.9,0.0000237206) (5.,0.0000204884) (5.1,0.0000176966) (5.2,0.0000152853) (5.3,0.0000132024) (5.4,0.0000114034) (5.5,0.00000984953) (5.6,0.00000850738) (5.7,0.00000734811) (5.8,0.00000634681) (5.9,0.00000548195) (6.,0.00000473494) (6.1,0.00000408972) (6.2,0.00000353242) (6.3,0.00000305106) (6.4,0.0000026353) (6.5,0.00000227619) (6.6,0.00000196602) (6.7,0.00000169811) (6.8,0.00000146671) (6.9,0.00000126684) (7.,0.00000109421) (7.1,0.000000945104) (7.2,0.000000816315) (7.3,0.000000705076) (7.4,0.000000608995) (7.5,0.000000526008) (7.6,0.000000454329) (7.7,0.000000392418) (7.8,0.000000338943) (7.9,0.000000292755) (8.,0.000000252861) (8.1,0.000000218404) (8.2,0.000000188642) (8.3,0.000000162936) (8.4,0.000000140732) (8.5,0.000000121555) (8.6,0.00000010499) (8.7,0.0000000906834) (8.8,0.0000000783259) (8.9,0.0000000676526) (9.,0.0000000584335) (9.1,0.0000000504707) (9.2,0.0000000435931) (9.3,0.0000000376526) (9.4,0.0000000325217) (9.5,0.00000002809) (9.6,0.0000000242621) (9.7,0.000000020956) (9.8,0.0000000181003) (9.9,0.0000000156337) (10.,0.0000000135033) 
};
\end{axis}

\end{tikzpicture}

		\caption{A Cybenko neural network as in \eqref{eqn:Cybenko_base_NN}, trained to  approximate a function $\zeta$; here, $n=2,d=1$, and $\zeta(x) =  \sin{(x+3)} \exp{|x-3|}$.}
		\label{fig:Cybenko_base_NN}
	\end{figure}
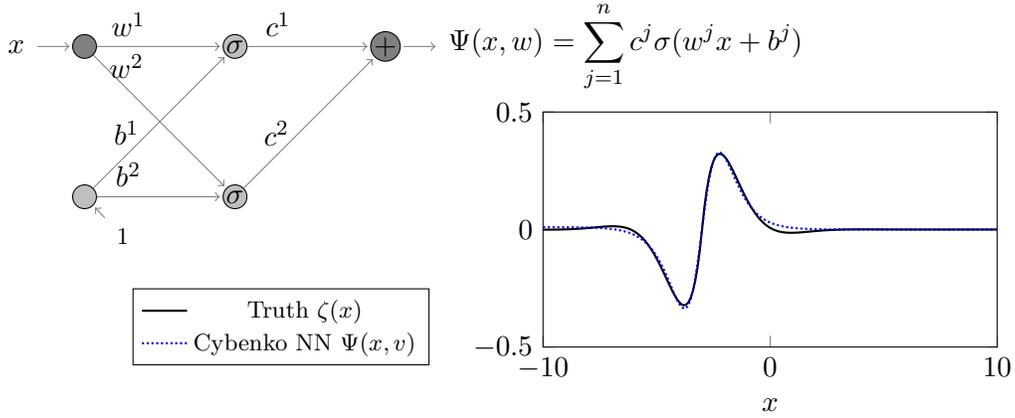

	\begin{figure}[!hptb]
		\centering
		\input{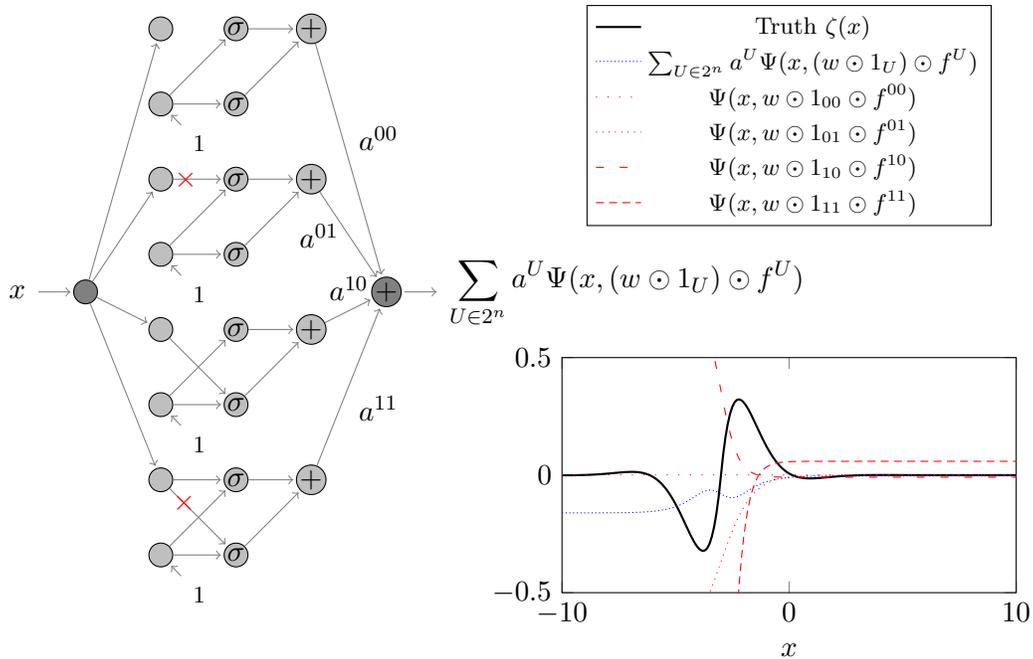}
		\caption{A single realization of the random neural network in \eqref{eqn:Decomposition_of_Psi_in_a_linear_combination_of_expectations} using Dropconnect. Based off the trained Cybenko neural network in Figure \ref{fig:Cybenko_base_NN}, for simplicity, we only apply dropout to the weights $w^{j}$ of \eqref{eqn:Cybenko_base_NN}, which we denote by $w$ and correspond to the edges joining the nodes connected to the input $x$. With $n=2$ and $d = 1$, there are four different random neural networks with their respective independent filters. All of them use as base $\Psi( \cdot, w)$ in Figure \ref{fig:Cybenko_base_NN}. In this realization, some of the edges are filtered, which are depicted with red crosses. The explicit coefficients $a_{U}$ used for dropconnect are computed in Section~\ref{sec:explicit-computation}.}
		\label{fig:A_Dropconnect_RNN_based_on_Cybenko__One_realization}
	\end{figure}

	We suppose that the distribution of the filters follows the case of dropconnect, as described in Section~\ref{sec:dropconnect}. Theorem \ref{thm:approximation_in_probability} directly yields that by choosing appropriate weights $c^{j, U}$ and weight matrices $w^{j, U}$, the one-hidden-layer dropconnect network given by
	\begin{equation}
		\frac{1}{M} \sum_{i=1}^M \sum_{U \in 2^n} \sum_{j=1}^N a_U c^{j, U} g^{j, U} \sigma\left( (w^{j, U} \odot f^{j, U})x + b^j \right)
		\label{eqn:A_Dropconnect_RNN_based_on_Cybenko__LLN_construction}
	\end{equation}
	can be chosen to be close to $\Psi$ in $L^q$ for large $M$. Here $g^{j, U}$ are independent Bernouilli random variables, and $f^{j, U}$ are random vectors with independently Bernoulli-distributed components, all with success probability $1-p$. This result is illustrated by Figures \ref{fig:A_Dropconnect_RNN_based_on_Cybenko__One_realization} and \ref{fig:A_Dropconnect_RNN_based_on_Cybenko__LLN_construction}, where for simplicity  we have used filters only on the weights $w^{j, U}$, while leaving the biases $b^j$ and $c^j$ with constant filters $1$. Figure \ref{fig:A_Dropconnect_RNN_based_on_Cybenko__One_realization} shows a single realization of the neural network in \eqref{eqn:Decomposition_of_Psi_in_a_linear_combination_of_expectations} with dropconnect while in Figure \ref{fig:A_Dropconnect_RNN_based_on_Cybenko__LLN_construction} a `blow up'---the average of $M$ independent copies of the network in \eqref{eqn:A_Dropconnect_RNN_based_on_Cybenko__LLN_construction}---of the previous construction is depicted.

	\begin{figure}[!hptb]
		\centering
		\input{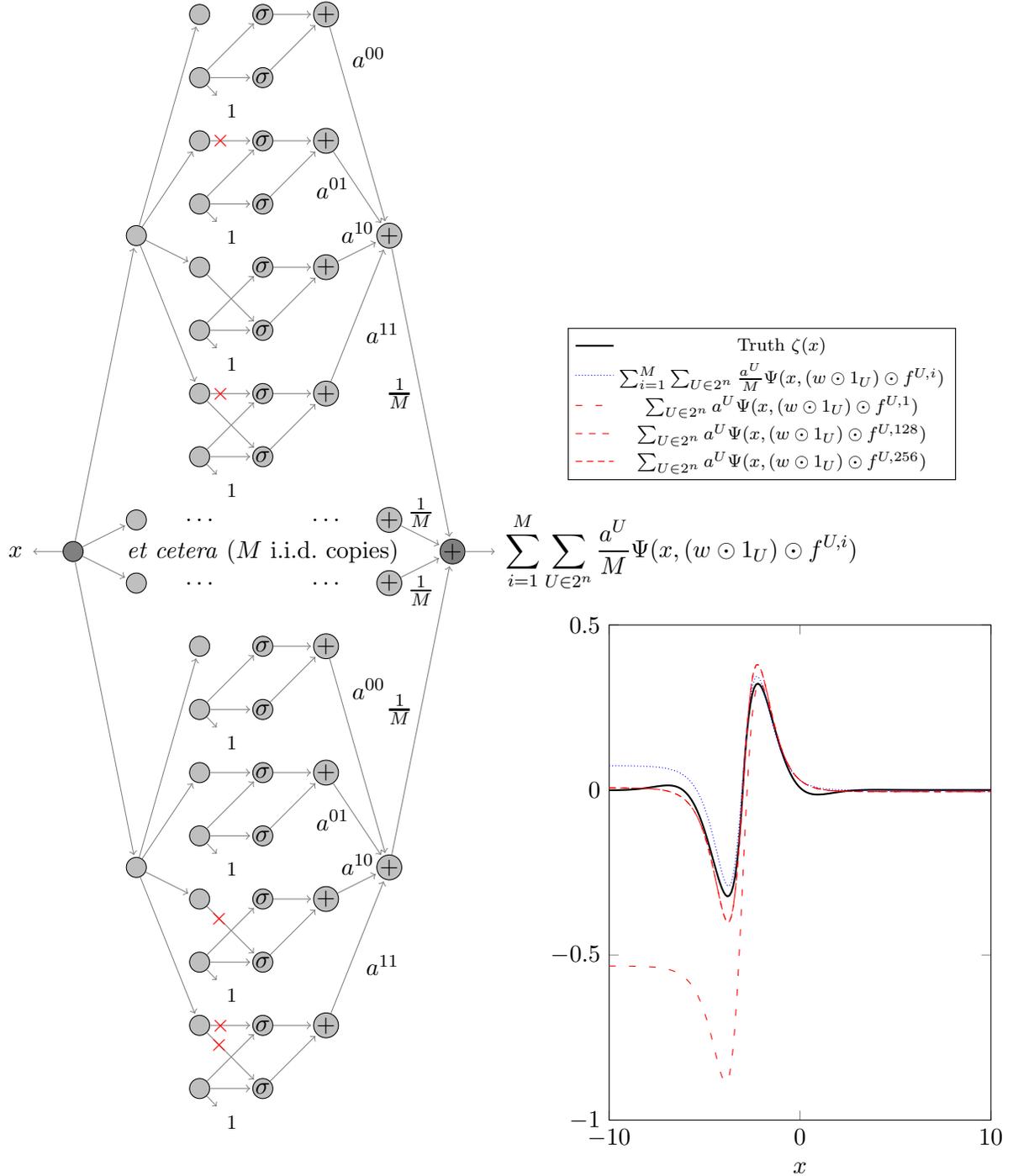}
		\vspace{-1em}
		\caption{An approximation of $\zeta$ with a large neural network using dropconnect, based off the base Cybenko neural network in \eqref{eqn:Cybenko_base_NN} depicted in Figure \ref{fig:Cybenko_base_NN}. Adding many independent copies of the network from Figure \ref{fig:A_Dropconnect_RNN_based_on_Cybenko__One_realization}, we are leveraging the law of large numbers as in  \eqref{eqn:A_Dropconnect_RNN_based_on_Cybenko__LLN_construction}. Different independent copies of the network may have a different realization of the filters, which is here depicted by the red crosses on the edges.}
		\label{fig:A_Dropconnect_RNN_based_on_Cybenko__LLN_construction}
	\end{figure}
\end{example}

In a similar way, we can also consider more general dropconnect networks.

\begin{example}[Dropconnect networks] 
	Consider a deep neural network $ \Psi : \mathbb{R}^d \times \mathbb{R}^n \to \mathbb{R}$ as introduced in \eqref{eqn:definition_Psi} with dropconnect filters as described in Section~\ref{sec:dropconnect}. Here, the filter variables, i.e., the components of $f^{i,U}$ in \eqref{eqn:definition_Psi}, are i.i.d.\ Bernoulli distributed with success probability $1-p$ if they multiply elements of the weight matrices $W^{(j)}$, and are equal to $1$ if they multiply biases $b^{(j)}$.

	Let $w \in \R{n}$. We choose for $\mathcal{F}$ the vector space of continuous functions on $\mathbb{R}^d$, endowed with the supremum seminorm over the closed unit cube. Then the dropconnect random network in \eqref{eqn:average_converges_in_prob_and_Lq} is for large $M$ close to the network $\Psi( \cdot, w )$ in $L^q$. 
\end{example}

\begin{example}[Node-dropout networks]
	Consider again the deep neural network in \eqref{eqn:definition_Psi} with node-dropout as described in Section~\ref{sec:node-dropout}.
	The random neural network in \eqref{eqn:average_converges_in_prob_and_Lq}
	is then again a node-dropout neural network.
	In this way, we recover \cite{foong2020expressiveness}'s Theorem 3 (with $h \equiv 0$), which for \gls{ReLU} activation functions and a target function $\zeta$ bounds
	\begin{equation}
	\sup_{x \in [0,1]^d} \Var(\zeta(x) - \Psi(x, w \odot f)).
	\label{eqn:result_foong_bound_variance}
	\end{equation}
	When $\mathcal{F}$ is the space of continuous functions with supremum norm, \eqref{eqn:result_foong_bound_variance} can be bounded by a constant times the square of the $L^2$-norm. Hence, 
	\refTheorem{thm:approximation_in_probability} approximates in a stronger sense, namely, in $L^q$ for any $q \in [1, \infty)$. Moreover, \refTheorem{thm:approximation_in_probability} also allows for activation functions other than \gls{ReLU}.
\end{example}

\begin{example}[Dropout networks with dropout on \emph{input}]
	\label{example:dropout_on_input}
	In contrast, if there is also dropout on the input, then the neural network in \eqref{eqn:average_converges_in_prob_and_Lq} is \emph{not} again a dropout neural network with dropout on the inputs. 
	Results by \cite{foong2020expressiveness} imply that in general neural networks with dropout on the \emph{input} cannot satisfy a universal approximation property. 

	We remark that this kind of stochastic network is \emph{not} a dropout neural network as defined in Section~\ref{sec:dropout_neural_networks} as the following example shows: Suppose that $\Psi_1, \Psi_2: \R{d} \times \R{n} \to \R{}$ are two different dropout neural networks with weights $w_1, w_2$ and with respective filter random variables $f, g$ with values in $\{0,1 \}^{n}$.  Then we can define the dropout neural network $\Psi$ with value
	\begin{equation}
	\Psi(x, (w_1 \odot f, w_2 \odot g)) = \Psi_1(x , w_1 \odot f) + \Psi_2(x , w_1 \odot g).
	\label{eqn:counter_example_foong_addition_is_not}
	\end{equation} 
	Suppose that, additionally, we add independent filters $h_1$ and $h_2$ with values in $\{0,1\}^{d}$ to $\Psi_1, \Psi_2$ for their respective inputs. Then, $\Psi_1(x \odot h_1, w_1) + \Psi_2(x \odot h_2, w_2)$ is not necessarily of the type $\Psi(x \odot h, (w_1, w_2))$ for some random variable $h$ with values in $\{0,1\}^{d}$.
\end{example}

As the above examples illustrate, a crucial aspect of whether a certain class of dropout neural networks (such as dropconnect or node-dropout) satisfy a universal approximation property, is whether linear, independent combinations of such networks are again networks in the same class. On the other hand, many details of the neural networks, such as them being a composition of simpler functions, are irrelevant for the proof of Theorem \ref{thm:approximation_in_probability}.

\subsection{The classes \texorpdfstring{$\DNN$}{DDNN}}

In the introduction we introduced classes $\DNN$ of tuples $(n, \Psi, f)$ that are closed under linear, independent combinations as the basic objects with which we want to approximate a given function $\zeta\in\mathcal F$.

The convergence statement \eqref{eqn:average_converges_in_prob_and_Lq} of Theorem~\ref{thm:approximation_in_probability} then immediately implies \refCorollary{cor:main}, which was already given as Corollary~\ref{cor:main-intro}. It expresses that if the class $\DNN$ is rich enough to approximate any function in $\mathcal{F}$ when all filter variables are set to 1 in the event in \eqref{eqn:probability_filters_on_is_positive}, then for every function in $\mathcal{F}$ there exists a dropout neural network such that with high probability with regards to the filter variables, the dropout neural network also approximates the function.

\begin{corollary}
\label{cor:main}
	Let $\zeta \in \mathcal{F}$ and $\epsilon > 0$. Assume there exists a $(m, \Phi, f) \in \DNN$ and a $v \in \mathbb{R}^m$ such that $\|\Phi(\cdot, v) - \zeta\|_{\subnorm} < \epsilon/2$. Then there exists a $(n, \Psi, g) \in \DNN$ and a $w \in \mathbb{R}^n$ such that 
	\begin{equation}
	\mathbb{P}[ \pnorm{\Psi(\cdot, w \odot g) - \zeta }{\subnorm} > \epsilon]
	< \epsilon
	\end{equation}
	and
	\[
	\mathbb{E} \left[ \left\|\zeta(\cdot) - \Psi(\cdot, w \odot g) \right\|_{\subnorm}^q \right]^{\frac{1}{q}} < \epsilon.
	\]
\end{corollary}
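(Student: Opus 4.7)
The plan is to combine the deterministic hypothesis with Theorem~\ref{thm:approximation_in_probability} applied to the approximant, and then invoke the triangle inequality twice (once for each of the two error measures). Concretely, start from the given $(m,\Phi,f)\in\DNN$ and $v\in\mathbb{R}^m$ with $\|\Phi(\cdot,v)-\zeta\|_{\subnorm}<\epsilon/2$. Apply Theorem~\ref{thm:approximation_in_probability} to $\Phi$, taking for each $U\in 2^m$ the random filter $f^U$ to be a copy of $f$ (so that \eqref{eqn:probability_all_filters_one_positive} holds by \eqref{eqn:probability_filters_on_is_positive}). This yields constants $(a_U)_{U\in 2^m}$ and i.i.d.\ copies $f^{i,U}$ of $f$ such that the averaged random network
\[
\Psi_M(\cdot) \,:=\, \frac{1}{M}\sum_{i=1}^M \sum_{U\in 2^m} a_U\,\Phi\bigl(\cdot,(v\odot \mathbf{1}_U)\odot f^{i,U}\bigr)
\]
converges to $\Phi(\cdot,v)$ in probability and in $L^q$ in $(\mathcal{F},\|\cdot\|_{\subnorm})$ as $M\to\infty$.

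The next step is to check that $\Psi_M$ can be written as $\Psi(\cdot,w\odot g)$ for some $(n,\Psi,g)\in\DNN$. This is exactly where the closure of $\DNN$ under linear, independent combinations is used: each summand $\Phi(\cdot,(v\odot\mathbf{1}_U)\odot f^{i,U})$ is of the form $\Phi(\cdot,w_{i,U}\odot f^{i,U})$ with weight vector $w_{i,U}:=v\odot\mathbf{1}_U$, hence corresponds to the triple $(m,\Phi,f^{i,U})\in\DNN$ evaluated at the parameter $w_{i,U}$. Iterating the closure property over the $M\cdot 2^m$ pairs $(i,U)$ with coefficients $a_U/M$ produces a tuple $(n,\Psi,g)\in\DNN$ with $n=M\cdot 2^m\cdot m$, with $g$ the independent concatenation of the $f^{i,U}$, and with weight vector $w$ the concatenation of the $w_{i,U}$, such that $\Psi(\cdot,w\odot g)=\Psi_M(\cdot)$ pointwise.

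Finally, fix $M$ large enough that
\[
\mathbb{P}\bigl[\|\Psi_M-\Phi(\cdot,v)\|_{\subnorm}>\epsilon/2\bigr]<\epsilon
\quad\text{and}\quad
\mathbb{E}\bigl[\|\Psi_M-\Phi(\cdot,v)\|_{\subnorm}^q\bigr]^{1/q}<\epsilon/2,
\]
which is possible by the two convergence statements in Theorem~\ref{thm:approximation_in_probability}. Two triangle inequalities then finish the argument: the event $\{\|\Psi_M-\zeta\|_{\subnorm}>\epsilon\}$ is contained in $\{\|\Psi_M-\Phi(\cdot,v)\|_{\subnorm}>\epsilon/2\}$ (using $\|\Phi(\cdot,v)-\zeta\|_{\subnorm}<\epsilon/2$), giving the probability bound, and Minkowski in $L^q$ on $\Omega$ combined with the deterministic bound on $\|\Phi(\cdot,v)-\zeta\|_{\subnorm}$ yields the $L^q$-estimate.

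The only real content is Theorem~\ref{thm:approximation_in_probability}, which is already proved. The main subtlety to verify carefully is the bookkeeping in the closure step: one must check that the independent concatenation of the $f^{i,U}$ (and the matching concatenation of weights $v\odot\mathbf{1}_U$) indeed realises $\Psi_M$ as $\Psi(\cdot,w\odot g)$ for a single $(n,\Psi,g)\in\DNN$, so that the statement of the corollary is literally fulfilled. Everything else is a standard triangle-inequality reduction of a two-step approximation (deterministic approximation of $\zeta$ by $\Phi(\cdot,v)$, then stochastic approximation of $\Phi(\cdot,v)$ by $\Psi_M$).
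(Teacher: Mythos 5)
Your proposal is correct and follows essentially the same route as the paper's own proof: apply Theorem~\ref{thm:approximation_in_probability} to $\Phi$ with independent copies of $f$ as the filters $f^{U}$, use the closure of $\DNN$ under linear, independent combinations to realise the averaged sum as a single tuple $(n,\Psi,g)$ with concatenated weights $v\odot\mathbf{1}_U$, and finish with triangle/Minkowski inequalities. The only (immaterial) difference is that you split the error as $\epsilon/2+\epsilon/2$ whereas the paper works with the slack $\eta:=\epsilon-\|\Phi(\cdot,v)-\zeta\|_{\subnorm}$.
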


This corollary can be combined with deterministic universal approximation properties of certain classes of neural networks to obtain concrete universal approximation properties of dropout neural networks. For instance, because both the class of node-dropout networks and the class of dropconnect networks defined in Sections \ref{sec:node-dropout} and \ref{sec:dropconnect} form examples of a set $\DNN$, we obtain the following universal approximation property by combining Corollary \ref{cor:main} with the universal approximation result in \cite{leshno1993multilayer}'s Proposition 1.

\begin{corollary}
Assume $\mu$ is a nonnegative probability measure on $\mathbb{R}^n$ with compact support, absolutely continuous with respect to the Lebesgue measure. Take $\mathcal{F} = L^r(\mu)$ for some $r \in [1,\infty)$. Assume that the activation function $\sigma : \mathbb{R} \to \mathbb{R}$ is not equal to a polynomial almost everywhere. Then for every $\epsilon > 0$ there exists a one-hidden-layer dropconnect neural network $(n, \Psi, g)$ such that
\begin{equation}
\mathbb{P}[ \pnorm{ \Psi(\cdot, w \odot g) - \zeta }{L^r(\mu)} > \epsilon ]
< \epsilon,
\end{equation}
and
\[
\mathbb{E} \left[ \left\|\zeta(\cdot) - \Psi(\cdot, w \odot h) \right\|_{L^r(\mu)}^q \right]^{\frac{1}{q}} < \epsilon.
\]
There also exists a one-hidden-layer node-dropout neural network with the same properties.
\end{corollary}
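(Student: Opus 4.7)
The plan is to reduce the statement directly to Corollary~\ref{cor:main} by feeding in a classical deterministic universal-approximation theorem as input.

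First, I would verify that the classes of one-hidden-layer dropconnect networks and one-hidden-layer node-dropout networks, introduced in Sections~\ref{sec:dropconnect} and \ref{sec:node-dropout}, each satisfy the axioms of a set $\DNN$ from Section~\ref{sec:intro:Rand-Approx}. Conditions (i)--(ii) are immediate from the definitions. Condition (iii), namely $\mathbb{P}[f = (1,\dots,1)] > 0$, follows because each nontrivial filter is a Bernoulli random variable with success probability $1-p > 0$ and the filters are jointly independent (in dropconnect) or independent across column-blocks (in node-dropout, with $p^1 = 0$ so that no dropout is applied to the input). The closure under independent linear combinations holds because, given two one-hidden-layer networks $\Phi,\Psi$ with hidden-layer widths $k_1,k_2$, any linear combination $a\Phi + b\Psi$ is again a one-hidden-layer network with hidden-layer width $k_1+k_2$, whose weight matrices and biases are block-diagonal assemblies of the originals and whose filter vector is the independent concatenation of the two filter vectors; the tied-block structure required for node-dropout is preserved because the column-blocks of the combined network are the disjoint union of the column-blocks of the two constituents.

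Second, I would invoke Proposition~1 of \cite{leshno1993multilayer}: under the hypotheses on $\sigma$ and $\mu$, one-hidden-layer networks with activation $\sigma$ are dense in $L^r(\mu)$. This supplies an $m \in \mathbb{N}$, a parameter $v \in \mathbb{R}^m$, and a one-hidden-layer network $\Phi(\cdot,v)$ with $\|\Phi(\cdot,v) - \zeta\|_{L^r(\mu)} < \epsilon/2$. Since any deterministic one-hidden-layer network is realized as $\Phi(\cdot, v \odot f)$ on the positive-probability event $\{f = (1,\dots,1)\}$, the triple $(m,\Phi,f)$ lies in the dropout class, and the hypothesis of Corollary~\ref{cor:main} is met.

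Third, Corollary~\ref{cor:main} then produces a one-hidden-layer dropconnect network $(n,\Psi,g) \in \DNN$ and weights $w \in \mathbb{R}^n$ for which both the probability bound and the $L^q$ bound of the statement hold, noting that $L^r(\mu)$ is a seminormed function space and that the dropout networks generated in the construction remain in $L^r(\mu)$ by continuity of $\sigma$ together with compactness of $\operatorname{supp}\mu$. Running the identical argument with the node-dropout class in place of the dropconnect class yields the second assertion. The only delicate point in the whole proof is the closure verification for node-dropout, where one must confirm that the tied-filter block structure is preserved under parallel concatenation; this is a routine bookkeeping check rather than a genuine obstacle.
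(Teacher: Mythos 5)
Your proposal is correct and follows essentially the same route as the paper: the paper derives this corollary precisely by noting that the one-hidden-layer dropconnect and node-dropout classes (the latter without dropout on the inputs) are instances of a class $\DNN$ closed under independent linear combinations, and then combining Corollary~\ref{cor:main} with the density result in \cite{leshno1993multilayer}'s Proposition~1. Your additional bookkeeping on the block-diagonal concatenation and the preservation of the tied-filter structure for node-dropout is exactly the verification the paper leaves implicit.
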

Certainly, many variations of the above corollary can be constructed. 

To further illustrate Corollary \ref{cor:main}, in Figure \ref{fig:Function_approximation_in_probability}
we look at the approximation in probability of our construction from Theorem \ref{thm:approximation_in_probability}.

\begin{figure}[!hbtp]
	\centering
	\input{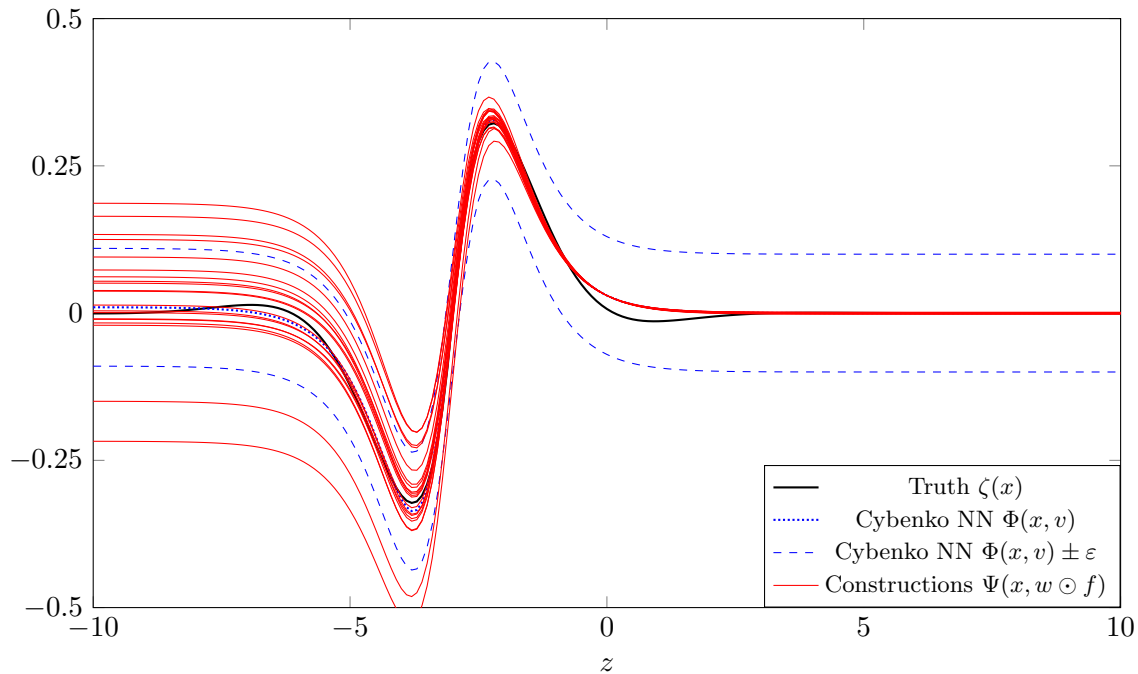}
	\caption{An illustration of function approximation in probability with our construction. Here, $M = 256$, and $20$ independent runs of the construction are shown in red. Here, $\epsilon = 0.1$ was chosen for illustrative purposes. Most of the runs lie within $\epsilon$ distance around the Cybenko neural network from \eqref{eqn:Cybenko_base_NN}, which we approximate with our construction in \eqref{eqn:A_Dropconnect_RNN_based_on_Cybenko__LLN_construction}. Altogether, we are approximating the target function $\zeta$.}
	\label{fig:Function_approximation_in_probability}
\end{figure}

\subsection{Explicit computation of coefficients}
\label{sec:explicit-computation}

To further illustrate Theorem \ref{thm:approximation_in_probability}, we will compute the coefficients $a_U$ in \eqref{eqn:Decomposition_of_Psi_in_a_linear_combination_of_expectations} explicitly for a special case of dropout neural networks for which the filter variables are partitioned into independent blocks. All variables in one block $i$ are all simultaneously off with probability $p_i$ and simultaneously on with probability $1-p_i$. Both node-dropout and dropconnect are special cases.

\begin{proposition}
	Let $f$ be a $\{0,1\}^n$-valued random variable with a distribution specified as follows. Let $\intint{n} = I_{1} \cup \ldots \cup I_{r}$ be a disjoint partition and suppose that $f_{i} = f_{j}$ whenever $i,j \in I_{s}$ for any $i,j \in \overline{1,n}$ and $s \in \overline{1,r}$. Let $f = (f_{I_{1}}, \ldots, f_{I_{r}})$ denote the random variables ordered as blocks and suppose that $\mathbb{P}(f_{I_{s}} = 1) = 1 - p_s > 0$ for all $s \in \overline{1,r}$ and that $\{f_{I_{i}}\}_{i \in \intint{r}}$ are mutually independent. Then we have
	\begin{equation}
		\Psi( \cdot , w ) 
		= 
		\sum_{V \in 2^r} \prod_{i \in V} \left(\frac{1}{1 - p_i}\right) \prod_{i \in \intint{r} \setminus V} \left(- \frac{p_i}{1 - p_i} \right) \E{} \Big[ \Psi( \cdot , (w \odot \mathbf{1}_{\iota(V)}) \odot f^V )\Big]
		\nonumber
	\end{equation} 
	where $\iota: 2^r \to 2^n$ is the embedding characterized by $j \in \iota(V)$ if $j \in I_i$ for some $i \in V$.
	\label{prop:Coefficients_aU_under_independence_conditions}
\end{proposition}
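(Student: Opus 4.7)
The plan is to verify the claimed formula by direct computation, exploiting that filtering by $\mathbf 1_{\iota(V)}\odot f^V$ inherits the block structure and hence reduces the expectation to a sum over sub-blocks.

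\medskip

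\textbf{Step 1 (Reduce the expectation to a sum over subsets).} First I would observe that because $f^V$ has the same distribution as $f$ --- that is, $f^V_i$ is constant on each block $I_s$, equal to $1$ with probability $1-p_s$ and to $0$ with probability $p_s$, with independence across $s$ --- and because multiplication by $\mathbf 1_{\iota(V)}$ already annihilates all weights in blocks outside $V$, one has
\[
(w\odot\mathbf 1_{\iota(V)})\odot f^V \;=\; w\odot \mathbf 1_{\iota(S_V(f^V))},
\]
where $S_V(f^V)=\{s\in V:\ f^V_{I_s}=1\}$ is the (random) subset of blocks in $V$ that survive the filter. Since the block variables are independent Bernoulli, this yields the key identity
\[
\E\bigl[\Psi(\,\cdot\,,(w\odot\mathbf 1_{\iota(V)})\odot f^V)\bigr]
\;=\;\sum_{S\subseteq V}\Bigl(\prod_{i\in S}(1-p_i)\prod_{i\in V\setminus S}p_i\Bigr)\,\Psi\bigl(\,\cdot\,,w\odot\mathbf 1_{\iota(S)}\bigr).
\]

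\medskip

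\textbf{Step 2 (Interchange sums and collect coefficients).} Plugging this into the proposed linear combination and swapping the order of summation, the right-hand side of the claimed formula becomes
\[
\sum_{S\in 2^r}\Psi\bigl(\,\cdot\,,w\odot\mathbf 1_{\iota(S)}\bigr)\,\prod_{i\in S}(1-p_i)\,\Biggl[\sum_{V\supseteq S}\prod_{i\in V}\frac{1}{1-p_i}\prod_{i\notin V}\Bigl(-\frac{p_i}{1-p_i}\Bigr)\prod_{i\in V\setminus S}p_i\Biggr].
\]
The goal is then to show the bracketed coefficient equals $\delta_{S,\intint{r}}/\prod_{i\in S}(1-p_i)$, so that only the term $S=\intint r$ survives and gives back $\Psi(\,\cdot\,,w)$.

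\medskip

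\textbf{Step 3 (A product identity for the inner sum).} For fixed $S$, I would reindex the inner sum by $W=V\setminus S\subseteq \intint r\setminus S$. A short bookkeeping of which factor $\frac{1}{1-p_i}$, $\frac{p_i}{1-p_i}$ or $-\frac{p_i}{1-p_i}$ appears for each $i\in S$, $i\in W$, or $i\in \intint r\setminus(S\cup W)$ shows that the inner sum factors as
\[
\prod_{i\in S}\frac{1}{1-p_i}\;\prod_{i\in\intint r\setminus S}\Bigl(\frac{p_i}{1-p_i}-\frac{p_i}{1-p_i}\Bigr),
\]
which vanishes unless $\intint r\setminus S=\varnothing$, i.e.\ $S=\intint r$, in which case it equals $\prod_{i=1}^r\frac{1}{1-p_i}$. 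Multiplying by $\prod_{i\in S}(1-p_i)$ gives exactly $\delta_{S,\intint r}$, completing the proof.

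\medskip

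The only real obstacle is Step~3, which is purely combinatorial; once the bracketed sum is rewritten as a product over $i\in\intint r\setminus S$ of a difference that is identically zero, the claim falls out immediately. Note also that this computation is consistent with \refTheorem{thm:approximation_in_probability}: the coefficients depend only on the $p_i$ and the index $V$, not on $w$ or $\Psi$, as required.
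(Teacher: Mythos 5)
Your proof is correct and follows essentially the same route as the paper: you expand the expectation over the surviving blocks (the paper's Lemma~\ref{lemma:example_expectation_subset}), interchange the two sums, and show the resulting coefficient collapses to the indicator of $S=\intint{r}$ (the paper's Lemma~\ref{lemma:example_mu_identity}). The only difference is cosmetic: you establish that last identity by factoring the inner sum directly into $\prod_{i\in\intint{r}\setminus S}\bigl(\tfrac{p_i}{1-p_i}-\tfrac{p_i}{1-p_i}\bigr)=0$, whereas the paper obtains it by comparing coefficients after a substitution in the polynomial $\prod_i(q_ix_i+(1-q_i))$ --- both are valid verifications of the same combinatorial fact.
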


We prove Proposition \ref{prop:Coefficients_aU_under_independence_conditions} in Appendix~\ref{sec:proof_explicit_computation}. Note that as $p_i \to 1$, the coefficients $a_U$ become large. From this fact, together with the observation that the sum is taken over the large set $2^r$, it is clear that the construction is computationally strenuous. Still, small examples in the case of dropconnect are shown in Figures \ref{fig:A_Dropconnect_RNN_based_on_Cybenko__One_realization},  \ref{fig:A_Dropconnect_RNN_based_on_Cybenko__LLN_construction} and \ref{fig:Function_approximation_in_probability}.

\subsection{Why the results in this section are only for random-approximation dropout}

In this section, we have shown a random-approximation universal approximation result, i.e., a universal approximation result that is relevant when the dropout neural network is also used at prediction time with a stochastic output. In practice, the filter variables are usually replaced by their average values at prediction time. The following example shows that the construction in this section can lead to a bad approximation when doing expectation-replacement.

\begin{example}
	\label{ex:MC-bad-regular}
	Let $\sigma$ be the standard \gls{ReLU} activation function.
	The approximation procedure in Corollary \ref{cor:main} would yield that the function $\zeta: \mathbb{R} \to \mathbb{R}$ given by $\zeta(x) := \sigma(x - 1)$ can be well approximated by an average of many independent copies of the dropout neural network
	\[
	x \mapsto 4 f_1 \sigma (f_2 x - 1)
	\]
	where $f_1$ and $f_2$ are i.i.d.\ Bernoulli random variables with success probability $1/2$.
	However, replacing $f_1$ and $f_2$ by $1/2$, we just obtain the function
	\[
	x \mapsto 2 \sigma(x/2 - 1)
	\]
	which is not a good approximation to the function $\zeta$ at all.
\end{example}

\section{Use of average filter variables for prediction}
\label{sec:Use_of_average_filter_variables_for_prediction}

We will now approximate a neural network by a larger dropout neural network that is also close to the original neural network if the filter variables are replaced by their expected values. The replacement of the filter random variables $f$ by their expected values $\mathbb{E}[f]$ is common practice after having trained dropout neural networks for prediction. 
Informally, the main Theorem~\ref{th:main-average} below states that for any base neural network $\Psi(\cdot ,w)$, there exists a larger neural network $\indnn_{\Gamma,\Xi}(\cdot ,v)$ and filter variables $f$ such that
\begin{equation}
	\indnn_{\Gamma, \Xi}(x, v \odot f ) 
	\approx \Psi(x,w)	
	\approx \indnn_{\Gamma, \Xi}(x,  v \odot \mathbb{E}[f]).
\end{equation}

\subsection*{Global variables} 

In order to improve readability of this section, we fix for the entire section a few (otherwise arbitrary) variables. Throughout this section:
\begin{itemize}
	\item The base neural network $\Psi$ is assumed to be a fixed $(L-1)$-hidden layer neural network as described in Section~\ref{sec:neural-networks}. We assume that its activation functions $\sigma_j$ are continuous. We also keep the weights $W^{(j)}$ and biases $b^{(j)}$ fixed.  
	\item We fix a number $R > 0$, which will play the role of the radius of a ball in the input space.
	\item We fix a number $\beta \in (0,1)$ and assume that for every random filter matrix $F$ in this section, each one entry is on with a probability that is larger than or equal to $\beta$, i.e., for all $r,c$,
	\[	
		\mathbb{P}[ 
			F_{rc} = 1  
		]
		\geq \beta
		> 0.
	\]
	\item We fix a number $Q > 1$, whose role will become clear later.
\end{itemize}

\begin{figure}[hbtp]
	\centering
	\def\layersep{1.75cm}
\def\heightsep{1cm}

\def\boxwidth{0.5cm}
\def\boxheight{0.7cm}
\setstackgap{S}{2pt}

\begin{tikzpicture}[node distance=\layersep]
    \tikzstyle{every edge} = [shorten >=1pt,->,draw=black!50, semithick];
    \tikzstyle{every pin edge} = [<-,shorten <=1pt];
    \tikzstyle{neuron} = [circle,draw=black,fill=black!100,minimum size=9pt,inner sep=0pt];
    \tikzstyle{input neuron} = [neuron, fill=black!50];
    \tikzstyle{output neuron} = [neuron, fill=black!50];
    \tikzstyle{hidden neuron} = [neuron, fill=black!25];
    \tikzstyle{annot} = [text width=4em, text centered];
    \tikzstyle{box} = [draw=black!25, dashed, shape=rectangle, minimum width=0.75*\layersep,
    minimum height = 4*\heightsep];
    
    \tikzstyle{sbox} = [
        draw =black!100,
        shape = rectangle,
        minimum width = \boxwidth,
        minimum height = \boxheight
    ];

    \node[box] (X1) at (0*\layersep,-2.5*\heightsep) {};
    \node[box] (X2) at (1*\layersep,-2.5*\heightsep) {};
    \node[box] (X3) at (2*\layersep,-2.5*\heightsep) {};
    \node[box] (X4) at (3*\layersep,-2.5*\heightsep) {};
    \node[box] (X5) at (4*\layersep,-2.5*\heightsep) {};

    \node[input neuron, pin=left:$x$] (A1) at (0*\layersep,-2.5*\heightsep) {};
    \path node[hidden neuron] (B1) at (1*\layersep,-2.0*\heightsep) {$\sigma_1$};
    \path node[hidden neuron] (B2) at (1*\layersep,-3.0*\heightsep) {$\sigma_1$};
    \path node[hidden neuron] (C1) at (2*\layersep,-1.0*\heightsep) {$\sigma_2$};
    \path node[hidden neuron] (C2) at (2*\layersep,-2.0*\heightsep) {$\sigma_2$};
    \path node[hidden neuron] (C3) at (2*\layersep,-3.0*\heightsep) {$\sigma_2$};
    \path node[hidden neuron] (C4) at (2*\layersep,-4.0*\heightsep) {$\sigma_2$};
    \path node[hidden neuron] (D1) at (3*\layersep,-1.5*\heightsep) {$\sigma_3$};
    \path node[hidden neuron] (D2) at (3*\layersep,-2.5*\heightsep) {$\sigma_3$};
    \path node[hidden neuron] (D3) at (3*\layersep,-3.5*\heightsep) {$\sigma_3$};
    \node[output neuron,pin={[pin edge={->}]right:$\Psi(x,w)$}] (E1) at (4*\layersep,-2.5*\heightsep) {$\sigma_4$};

    \path (A1) edge node[near start, above] {} (B1);
    \path (A1) edge node[near start, above] {} (B2);
    \path (B1) edge node[near start, above] {} (C1);
    \path (B1) edge node[near start, above] {} (C2);
    \path (B1) edge node[near start, above] {} (C3);
    \path (B1) edge node[near start, above] {} (C4);
    \path (B2) edge node[near start, above] {} (C1);
    \path (B2) edge node[near start, above] {} (C2);
    \path (B2) edge node[near start, above] {} (C3);
    \path (B2) edge node[near start, above] {} (C4);
    \path (C1) edge node[near start, above] {} (D1);
    \path (C1) edge node[near start, above] {} (D2);
    \path (C1) edge node[near start, above] {} (D3);
    \path (C2) edge node[near start, above] {} (D1);
    \path (C2) edge node[near start, above] {} (D2);
    \path (C2) edge node[near start, above] {} (D3);
    \path (C3) edge node[near start, above] {} (D1);
    \path (C3) edge node[near start, above] {} (D2);
    \path (C3) edge node[near start, above] {} (D3);
    \path (C4) edge node[near start, above] {} (D1);
    \path (C4) edge node[near start, above] {} (D2);
    \path (C4) edge node[near start, above] {} (D3);
    \path (D1) edge node[near start, above] {} (E1);
    \path (D2) edge node[near start, above] {} (E1);
    \path (D3) edge node[near start, above] {} (E1);
    
    \begin{scope}[shift={(0,-5*\heightsep)}]   
    \node[scale=0.7] at (0.5*\layersep, 0) {$(W^{(1)},b^{(1)})$};
    \node[scale=0.7] at (1.5*\layersep, 0) {$(W^{(2)},b^{(2)})$};
    \node[scale=0.7] at (2.5*\layersep, 0) {$(W^{(3)},b^{(3)})$};
    \node[scale=0.7] at (3.5*\layersep, 0) {$(W^{(4)},b^{(4)})$};
    \end{scope}
    
 	\begin{scope}[shift={(0,-6*\heightsep)}]   
    
    \node[sbox, pin=left:$x$] (A) at (0*\layersep,0) {$\Shortstack{ . }$};
    \path node[sbox] (B) at (1*\layersep,0) {$\Shortstack{ . . }$};
    \path node[sbox] (C) at (2*\layersep,0) {$\Shortstack{ . . . . }$};
    \path node[sbox] (D) at (3*\layersep,0) {$\Shortstack{ . . . }$};
    \node[sbox,pin={[pin edge={->}]right:$\Psi(x,w)$}] (E) at (4*\layersep,0) {$\Shortstack{ . }$};

    \tikzstyle{every edge} += [shorten >=4pt, shorten <=3pt, ultra thick];
    \path (A) edge node[near start, above] {} (B);
    \path (B) edge node[near start, above] {} (C);
    \path (C) edge node[near start, above] {} (D);
    \path (D) edge node[near start, above] {} (E);

	\end{scope}
\end{tikzpicture}
	\caption{An example base neural network $\Psi$ where $L=4$. The top diagram indicates the individual activation functions and the dimensions of each layer of nodes: $d = d_0 = 1$, $d_1 = 2$, $d_2 = 4$, $d_3 = 3$, and $d_4 = 1$. The set of all  arrows connecting layer $k-1$ of nodes to layer $k$ correspond to the parameters $(W^{(k)},b^{(k)})$. The bottom diagram  shows the same network, with the edges between layers compressed to single arrows; this compressed notation is the basis for the diagram in Figure~\ref{fig:Tree_construction} below. }
	\label{fig:Base_NN_psi}
\end{figure}
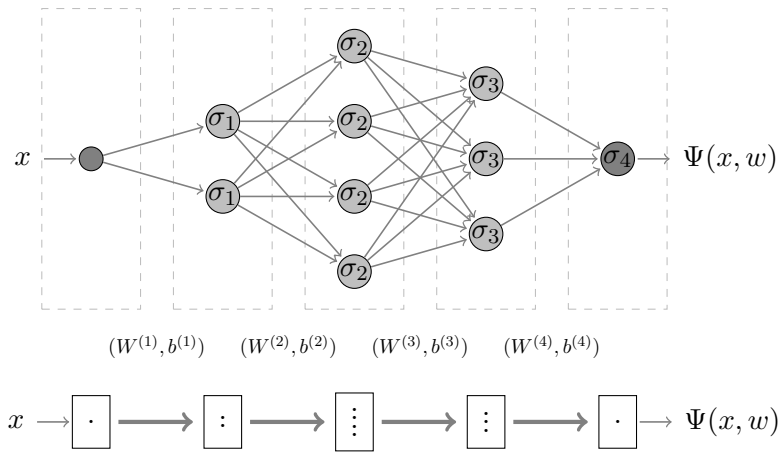

\subsection{Heuristic description of the construction}

In this section we describe the construction of the larger dropout neural network $\indnn_{\Gamma,\Xi}$ in heuristic terms; the full details are given in the subsequent sections. The construction starts at the last layer of the base neural network $\Psi$, which is a function $\Psi_L : \mathbb{R}^{d_{L-1}} \to \mathbb{R}^{d_{L}}$ given by
\begin{equation}
	x \mapsto \sigma_L \bigl( W^{(L)} x + b^{(L)} \bigr).
	\label{eqn:Last_layer}
\end{equation}
We construct the last layer of the larger dropout neural network such that it remains close to \eqref{eqn:Last_layer} as follows. Let $N \in \mathbb{N}$ and consider any collection $\{ F^{(L),i} \}_{i \in \intint{N}}$ of i.i.d.\ random filter matrices such that for each $i$, $F^{(L),i}$ has the same dimension as $W^{(L)}$.
By the law of large numbers, we can expect that the function
\begin{equation}
	x
	\mapsto
	\sigma_L
	\biggl(
	\frac{1}{N}\sum_{i=1}^N
	\Bigl(
		\bigl( W^{(L)} \div \mathbb{E}[F^{(L),i}] \bigr) \odot F^{(L),i}
		\Bigr)
	x
	+ b^{(L)}
	\biggr)
	\label{eqn:Last_layer_replacement}
\end{equation}
will be close to the function $\Psi_L$ for sufficiently large $N$. Here we write $\div$ for element-wise division. 

Viewed as a one-layer neural network, the function \eqref{eqn:Last_layer_replacement} is a  one-layer dropout neural network with $N$ times as many edges as $\Psi_L$, and it can  replace \eqref{eqn:Last_layer}, i.e., $\Psi_L$, while staying close to~$\Psi_L$. 

A further adaptation is necessary, however, because in \eqref{eqn:Last_layer_replacement} each copy $W^{(L)} \div \mathbb{E}[F^{(L),i}]$ takes the \emph{same} input $x\in \R{d_{L-1}}$. To make \eqref{eqn:Last_layer_replacement} a \emph{bona fides} dropout network, different edges should take different inputs, and therefore we generalize \eqref{eqn:Last_layer_replacement} to 
\begin{equation}
	(\R{d_{L-1}})^N \ni
	(x^i)_{i\in \intint N}
	\mapsto
	\sigma_L
	\biggl(
	\frac{1}{N}\sum_{i=1}^N
	\Bigl(
		\bigl( W^{(L)} \div \mathbb{E}[F^{(L),i}] \bigr) \odot F^{(L),i}
		\Bigr)
	x^i
	+ b^{(L)}
	\biggr).
	\label{eqn:Last_layer_replacement-generalized}
\end{equation}
By precomposing each of the inputs $x^i$ with $\Psi^{(L-1)}$, and performing the same construction as above (copying the input to these copies of $\Psi^{(L-1)}$), we can inductively create our larger dropout neural network $\indnn_{\Gamma,\Xi}$ that will be close to~$\Psi$.  

There are now three points of attention:
\begin{itemize}
	\item The intuitive statement `repeating this construction' needs a formalization by an inductive construction. This requires a mathematical object that can record the intermediate stages of the construction.
	\item We need to show inductively that the resulting intermediate neural networks are close to (a network closely related to) the original network. In particular, we need to introduce a mathematical specification of `close' that is compatible with an inductive argument.
	\item The input space to the neural network in this construction grows with each step, whereas we still aim to have a final neural network with data space $\mathbb{R}^d$. This requires us to deal with the first layer of the network differently.
\end{itemize}
These points are the topics of the subsequent sections.

\subsection{Dropout-trees}

We will encode the intermediate stages of our inductive construction by a mathematical object that we will refer to as a \emph{dropout-tree}. 
The idea is that we start with a root, then attach incoming edges labeled with random filter matrices to it (creating leaves), and then recursively attach even more edges to the leaves. To be consistent with the numbering of layers in Section \ref{sec:neural-networks}, here, we will prefer to speak about the \emph{level} $j$ of a vertex or an edge in a tree rather than its depth $L-j$ (the latter is also established jargon in graph theory, and this aligns our notation with that of the neural network). In this numbering, the root is therefore at level $L$.

\begin{definition}
	A vertex $v$ of a rooted tree is at \emph{level} $j \in \{ 0, 1, \ldots, L \}$ if the path from $v$ to the root $v_0$ has length $L-j$.
	An edge $e = (u,v)$ of a rooted tree is at \emph{level} $j \in \{0,1, \ldots, L \}$ if its target vertex $v$ is at level $j$.
\end{definition}

From now on we will write $\sigma_v$ for $\sigma_{\level(v)}$, $W^{v}$ for $W^{(\level(v))}$, $b^{v}$ for $b^{(\level(v))}$, \emph{et cetera}. This simplifies the notation at only a minor cost of abuse of notation.

\begin{definition}
	A \emph{dropout-tree} $\Gamma$ of an $(L-1)$-hidden layer neural network $\Psi$ is a directed graph $\mathcal{G}$ together with a labeling of the edges such that:
	\begin{itemize}
		\item the graph $\mathcal{G}$ is connected and acyclic;
		\item one of the vertices, say $v_0$, is designated as the \emph{root};
		\item the depth of the tree is at most  $L-1$; 
		\item all directed edges point towards the root;
		\item every edge $e$ is labeled with a random matrix $F^e$; for each $e$
		\begin{itemize}
			\item[(a)] $F^e$ has the same dimension as $W^{\target(e)}$
			\item[(b)] $F^e$'s entries are $\{0,1\}$-valued
			\item[(c)] for all $r,c$, $\probability{ F^e_{rc} = 1 } \geq \beta > 0$;
		\end{itemize} 
		\item for every vertex $v$ that is not a leaf, $\{F^e\}_{e\in \into(v)}$ is a collection of mutually independent, identically distributed random matrices.
	\end{itemize}
\end{definition}

For convenience we recall some terminology. A directed edge points from a \emph{source} to a \emph{target}, and for an edge $e$ we identify them by $\source(e)$ and $\target(e)$; we write $\into(v)$ for the set of  all edges with target vertex $v$. In the trees in this paper, all edges point towards the root of the tree. A vertex $v$ is a \emph{child} of a vertex $w$ if there is an edge pointing from $v$ to $w$; $w$ then is the \emph{parent} of $v$. A \emph{leaf} is a vertex without children.

\medskip

Dropout-trees can be constructed iteratively by starting from the trivial dropout-tree consisting only of a root and then performing a so-called \emph{$\mu$-input-copy} construction. This allows us to inductively create a larger dropout-tree from a smaller dropout-tree.

\begin{definition}
	\label{de:mu-clone}
	Let $\Gamma$ be a dropout-tree and let $\ell $ be a leaf of $\Gamma$ at level $k$. 
	Let $\mu$ be a distribution of a random matrix $F \in \{0,1\}^{ d_k \times d_{k-1} }$ that satisfies for all $r,c$, $\probability{ F_{rc} = 1 } \geq \beta > 0$.	
	A dropout-tree $\Gamma'$ is a \emph{$\mu$-input-copy to the leaf $\ell $ of $\Gamma$} if one can obtain $\Gamma'$ from $\Gamma$ by:
	(a) attaching child vertices to $\ell$, and
	(b) labeling each edge going into $\ell$ by an independent copy of $F$.
	The \emph{size of a $\mu$-input-copy to $\ell $ at $\Gamma$} refers to the number of children of $\ell $ in $\Gamma'$.
\end{definition}

Let us describe the precise meaning of procedure (b) in Definition \ref{de:mu-clone}. For that, it may be useful to recall that random matrices are nothing but measurable functions defined on the probability space $(\Omega, \mathcal{F}_\Omega, \mathbb{P})$. The procedure (b) precisely means that the sigma-algebras generated by the filter variables $F^e$ with $e \in \into(\ell)$ are independent, and that for every $e \in \into(\ell)$ the law of $F^e$ equals $\mu$. 
In particular, this condition allows for some correlation between filter variables labeling edges in the dropout-tree that do not go into $\ell$. Moreover, in general there can be many different dropout trees $\Gamma'$ that are $\mu$-input-copies of~$\Gamma$.

We will now describe how a dropout-tree encodes a dropout neural network.

\subsubsection{Dropout neural networks encoded by dropout-trees}

Each dropout-tree $\Gamma$ will induce a stochastic function $\Phi^{v_0}_\Gamma$---a dropout neural network---as follows. For any edge $e = (u,v)$ of $\Gamma$, let
\begin{equation}
	V^e_\Gamma
	:= W^e \div \mathbb{E}[F^e]
	\label{eqn:Definition_of_Vegamma}
\end{equation}
be rescaled weights for the dropout neural network. We define
\[
	\Phi^{v}_\Gamma
	:=
	\begin{cases}
		\sigma_v
		\Bigl(
			\frac{1}{\# \into(v)}
			\sum_{e \in \into(v)}
			(V^e \odot F^e) \Phi^{\source(e)}_\Gamma
			+ b^v
		\Bigr)
		&
		\textnormal{if } v \textnormal{ is not a leaf},
		\\
		\mathrm{Identity}_{\mathbb{R}^{d_v}}
		&
		\textnormal{if $v$ is a leaf}.
	\end{cases}
\]
Figure~\ref{fig:Tree_construction} illustrates this construction, based on the network $\Psi$ depicted in Figure~\ref{fig:Base_NN_psi}. 

\subsection{Dropout neural networks induced by dropout-trees are close to their deterministic counterpart}

We will give an inductive argument that $\Phi^{v_0}_\Gamma$ is close to $\Phi^{v_0}_{\Gamma_{\mathsf{det}}}$.  Here, $\Gamma_{\mathsf{det}}$ denotes the same dropout-tree as $\Gamma$ except for the fact that we have replaced each and every filter variable deterministically by its expectation. Loosely speaking, the inductive argument implies  that dropout neural networks induced by dropout-trees are close to their deterministic counterparts.

As a technical preparation, we define a sequence of radii $R_0, R_1, \dots, R_L$. The idea is that these provide bounds on the output after applying several layers, no matter the
choice of filter variables or weights in the upcoming construction.
Given the radius $R > 0$ defined at the start of this section,  
we set 
\[
	R_0 
	:= \frac{Q}{\beta} R + 1
\]
and then choose $R_j$ inductively such that for all $j \in \intint{L}$, $x \in B(0, \allowbreak \beta^{-1} \|W^{(j)}\|_{\mathrm{HS}} R_{j-1} + 1)$ it holds that 
\begin{equation}
	\label{eq:definition_R_j}
	\Bigl|\Psi_j 
	\Bigl( 
		x, 
		\bigl( 
			I, b^{(j)} 
		\bigr) 
	\Bigr)\Bigr|
	< R_{j} - 1.
\end{equation}
for all $j = \intint{L}$, where $\beta \in (0,1)$ and $Q > 1$ were two of the global variables that we defined at the beginning of the section. Here $\|\cdot\|_{\mathrm{HS}}$ denotes the Hilbert--Schmidt norm of a matrix and $I$ denotes an identity matrix of the corresponding size.

We denote the input space to a network induced by a dropout-tree $\Gamma$ by $\inp_\Gamma$. That is, $\inp_\Gamma$ is the vector space 
\[
	(
		x^\ell \in \mathbb{R}^{d_{\level(\ell)}} 
		\ | \ 
		\ell \in \mathsf{leaves}(\Gamma) 
	) .
\]
We endow $\inp_\Gamma$ with the norm
\[
	\| (x^\ell) \|_{\inp_\Gamma} 
	:= \max_{l \in \mathsf{leaves}(\Gamma)} \|x^\ell \|_{\mathbb{R}^{d_{\level(\ell)}}}.
\]
We define $\transfer_{\Gamma}: \R d\to \inp_\Gamma$ to be the collection of functions, indexed by leaves $\ell $ of $\Gamma$, that are  generated by those layers in the base network $\Psi$ that are \emph{not} represented in $\Gamma$ at leaf $\ell $:
\begin{equation}
	\label{eqn: InGamma}
	\transfer_\Gamma^\ell(x) 
	:= \left(\Psi_{\level(\ell)} (\cdot,  (W^{(\level(\ell))}, b^{(\level(\ell))}) )\circ \dots \circ \Psi_1(\cdot, (W^{(1)}, b^{(1)}) )\right) (x).
\end{equation}
Note that by the definitions \eqref{eq:definition_R_j} of the radii $R_j$ we have 
\begin{equation}
\label{est:transfer-Rjs}
\transfer_\Gamma^\ell\bigl(\,\overline {B(0,R)}\,\bigr) \subset B(0,R_{\level(\ell)}-1).
\end{equation}

We say that a dropout-tree $\Gamma$ satisfies property $\propprob_\Gamma(\delta,\epsilon)$ if
\begin{equation}
	\mathbb{P}
	\Bigl[
		\sup_{x \in \overline{B(0, R)}} \sup_{\tilde{x} \in \overline{B(\transfer_\Gamma(x), \delta)}}
		\bigl| \Phi^{v_0}_\Gamma(\tilde{x}) - \Phi^{v_0}_{\Gamma_{\det}}(\transfer_{\Gamma}(x)) \bigr|
		> \frac{\epsilon}{2}
		\Bigr]
	< 
	\Bigl( \frac{\epsilon}{4 R_L} \Bigr)^q.
\label{de:approp}
\end{equation}
We will prove \refLemma{lem:Propprob_inheritance}: its message is that one can always construct a full dropout-tree that satisfies $\propprob_\Gamma(\delta,\epsilon)$ for some $\delta > 0$, by copying inputs at vertices. 
\begin{lemma}
	\label{lem:Propprob_inheritance}
	Let $\Gamma$ be a dropout-tree and let $\ell$ be a leaf of $\Gamma$ at level $k > 1$.
	Let $\mu$ be the distribution of a random matrix $F \in \{0,1\}^{ d_k \times d_{k-1} }$ that satisfies for all $r,c$, $\probability{ F_{rc} = 1 } \geq \beta > 0$.
	The following now holds:
	if $\Gamma$ satisfies $\propprob_\Gamma(\delta,\epsilon)$ in \eqref{de:approp} for some $\delta,\epsilon > 0$, then for every sufficiently large $\mu$-input-copy $\Gamma'$ at $\ell$ there exists a $\delta'> 0$ such that $\Gamma'$ satisfies $\propprob(\Gamma')(\delta',\epsilon)$.
\end{lemma}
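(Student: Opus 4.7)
The plan is to verify $\propprob_{\Gamma'}$ by a coupling argument: every perturbation $\tilde x$ of $\transfer_{\Gamma'}(x)$ is lifted to a perturbation $\hat x$ of $\transfer_\Gamma(x)$ in such a way that controlling $\Gamma'$ on $(x,\tilde x)$ reduces to controlling $\Gamma$ on $(x,\hat x)$, together with a fluctuation at $\ell$ that is handled by the law of large numbers.

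Concretely, let $u_1,\dots,u_N$ be the new children of $\ell$ and $e_i$ the edge from $u_i$ to $\ell$. For $x \in \overline{B(0,R)}$ and $\tilde x \in \overline{B(\transfer_{\Gamma'}(x),\delta')}$, I would define $\hat x \in \inp_\Gamma$ by $\hat x^{\ell'}:=\tilde x^{\ell'}$ on every leaf $\ell'\neq \ell$ of $\Gamma$ and
\[
    \hat x^\ell := \Phi^\ell_{\Gamma'}(\tilde x) = \sigma_\ell\Bigl(\tfrac{1}{N}\sum_{i=1}^N (V^{e_i}\odot F^{e_i})\tilde x^{u_i} + b^\ell\Bigr).
\]
By construction $\Phi^{v_0}_{\Gamma'}(\tilde x)=\Phi^{v_0}_\Gamma(\hat x)$; and because $V^e\odot \mathbb{E} F^e = W^e$, the layer at $\ell$ in $\Gamma'_{\det}$ reduces to $\sigma_k\bigl(W^{(k)}\transfer_{\Gamma'}^{u_i}(x)+b^{(k)}\bigr) = \transfer_\Gamma^\ell(x)$, so $\Phi^{v_0}_{\Gamma'_{\det}}(\transfer_{\Gamma'}(x)) = \Phi^{v_0}_{\Gamma_{\det}}(\transfer_\Gamma(x))$. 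The quantity to bound for $\Gamma'$ is thus exactly the quantity controlled by $\propprob_\Gamma$ at the lifted input $\hat x$.

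Next I would invoke the law-of-large-numbers lemma from Section~\ref{ss:intro:Expectation-Replacement} with weights $V^{e_i}$, activation $\sigma_\ell$ and radius $K=R_{k-1}$. This furnishes $\delta''>0$ such that
\[
\sup_{y\in \overline{B(0,R_{k-1})}}\ \sup_{(\tilde y^i) \in B(y,\delta'')^N} \Bigl|\sigma_\ell\Bigl(\tfrac{1}{N}\sum_i(V^{e_i}\odot F^{e_i})\tilde y^i + b^\ell\Bigr)-\sigma_\ell(W^\ell y + b^\ell)\Bigr|
\]
tends to zero in probability as $N\to\infty$. Taking $\delta':=\min(\delta,\delta'',1)$ and $A$ to be the event that this supremum is at most $\delta$, the inclusion $\transfer_{\Gamma'}^{u_i}(\overline{B(0,R)})\subset\overline{B(0,R_{k-1}-1)}$ from \eqref{est:transfer-Rjs} ensures that $|\hat x^\ell - \transfer_\Gamma^\ell(x)|\le\delta$ on $A$, uniformly in $x$ and $\tilde x$; in particular $\hat x\in\overline{B(\transfer_\Gamma(x),\delta)}$. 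Letting $B$ denote the bad event of $\propprob_\Gamma(\delta,\epsilon)$, on $A\cap B^c$ the bound $|\Phi^{v_0}_{\Gamma'}(\tilde x)-\Phi^{v_0}_{\Gamma'_{\det}}(\transfer_{\Gamma'}(x))|\le\epsilon/2$ follows, so the bad event for $\Gamma'$ is contained in $A^c\cup B$.

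The hard part will be the closing probabilistic estimate: one needs $\mathbb{P}(A^c\cup B)<(\epsilon/(4R_L))^q$, while the hypothesis only yields $\mathbb{P}(B)<(\epsilon/(4R_L))^q$. The key observation is that this bound is a \emph{strict} inequality, leaving positive slack $c$; since the new filters $\{F^{e_i}\}$ generate a sigma-algebra independent of everything in $\Gamma$, enlarging $N$ (and shrinking $\delta'$ if necessary) makes $\mathbb{P}(A^c)$ smaller than $c$, after which a union bound closes the argument. Carrying this slack cleanly through iterated applications of the lemma—so that one can always re-establish strict inequality at the next induction step—is the delicate bookkeeping that the full proof has to address.
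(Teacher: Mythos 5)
Your proposal is correct and follows essentially the same route as the paper's proof: the two bad events (the $\propprob_\Gamma$ failure and the fluctuation of the new averaged layer, controlled via Lemma~\ref{lem:copy-input} with the positive slack left by the strict inequality), the choice $\delta'=\min(\delta,\eta)$, the lift of the perturbed input through the new layer into $\overline{B(\transfer_\Gamma(x),\delta)}$, and a union bound. The final worry about ``carrying the slack through iterated applications'' is not a real obstacle: the lemma's conclusion re-establishes the strict inequality $\propprob_{\Gamma'}(\delta',\epsilon)$, so fresh slack is available at every subsequent step.
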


The proof of \refLemma{lem:Propprob_inheritance} is relegated to Section~\ref{sec:Proof_of_lemma_propprob_inheritance}. There, we show that \refLemma{lem:Propprob_inheritance} follows from \refLemma{lem:copy-input}, which is displayed next and proved in Section~\ref{sec:Proof_of_lemma_copy-input}.

\begin{lemma}
	\label{lem:copy-input}
	Consider any continuous function $\sigma : \mathbb{R}^m \to \mathbb{R}^m$ and let $W \in \mathbb{R}^{m \times n}$, $b \in \mathbb{R}^m$. Let $\{ F^i \}_{i \geq 1}$  be a sequence of mutually independent copies of a random matrix $F \in \realNumbers^{m \times n}$ that satisfies: for $r \in \intint{m}$ and $c \in \intint{n}$, $0 < \expectation{ F_{rc} } < \infty$ and $0 \leq F_{rc} \leq M < \infty$ w.p.\ one. Let $V := W \div \mathbb{E}[F] $. The following now holds: for every $0 \leq K < \infty$ and $\rho > 0$ there exists a $\delta > 0$ such that
	\begin{equation}
		\mathbb{P}
		\Bigl[
			\sup_{x \in \overline{B(0, K)}} 
			\sup_{(\tilde{x}^i) \in \overline {B(x, \delta)}^N} 
			\Bigl|
				\sigma
				\Bigl(
					\frac1N \sum_{i=1}^N (V \odot F^i) \tilde{x}^i + b 
				\Bigr) 
				- \sigma( W x + b ) 
			\Bigr| 
			> \rho 
		\Bigr] 
		\to 0
		\label{eqn:Probability_bound_for_the_last_layer}
	\end{equation}
	as $N \to \infty$.
\end{lemma}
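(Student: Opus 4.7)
The plan is to reduce the problem to a law of large numbers statement for the matrices $V \odot F^i$ combined with uniform continuity of $\sigma$, handling the perturbation freedom through a crude deterministic bound. First observe that $\mathbb{E}[V \odot F^i] = V \odot \mathbb{E}[F] = W$, so the pre-activations are unbiased estimators of $Wx$. Since each $(V \odot F^i)$ has entries that are bounded in absolute value by some $C_1 := M \max_{r,c} |V_{rc}|$, there is a constant $C_2$ (depending only on $W$, $\mathbb{E}[F]$, $M$) such that $\|V \odot F^i\|_{\mathrm{op}} \le C_2$ almost surely for all $i$. Consequently, $\|\frac1N \sum_i (V \odot F^i)\tilde x^i + b\|$ is deterministically bounded uniformly in $N$, the realization of the $F^i$, and all $\tilde x^i \in \overline{B(0,K+1)}$. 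Let $\mathcal K \subset \mathbb{R}^m$ be a compact set containing all pre-activations $Wx+b$ for $x \in \overline{B(0,K)}$ as well as the above bounded set, and let $\omega$ be a modulus of uniform continuity of $\sigma$ on $\mathcal K$.

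Given $\rho > 0$, choose $\eta > 0$ so that if $u,v \in \mathcal K$ with $\|u - v\| < \eta$ then $\|\sigma(u)-\sigma(v)\| < \rho$. It then suffices to show that for a suitable $\delta > 0$, the event
\begin{equation*}
	A_N(\delta,\eta)
	:= \left\{\, \sup_{x \in \overline{B(0,K)}} \sup_{(\tilde x^i) \in \overline{B(x,\delta)}^N}
	\Bigl\| \tfrac1N \sum_{i=1}^N (V\odot F^i)\tilde x^i - Wx \Bigr\| > \eta\, \right\}
\end{equation*}
has probability tending to $0$ as $N \to \infty$. I would split
\begin{equation*}
	\tfrac1N \sum_{i=1}^N (V\odot F^i)\tilde x^i - Wx
	= \tfrac1N \sum_{i=1}^N (V\odot F^i)(\tilde x^i - x) + \Bigl( \tfrac1N \sum_{i=1}^N (V\odot F^i) - W \Bigr) x
\end{equation*}
and bound the two terms separately. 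The first term has norm at most $C_2 \delta$ deterministically, by the entrywise bound on $V\odot F^i$ and the perturbation constraint $\|\tilde x^i - x\| \le \delta$; choose $\delta := \eta/(2 C_2)$ so that this contribution is always at most $\eta/2$.

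For the second term, the supremum over $x \in \overline{B(0,K)}$ is bounded by $K \cdot \|\tfrac1N \sum_i (V\odot F^i) - W\|_{\mathrm{op}}$, which is a random quantity that does not depend on $x$ or $\tilde x^i$. Since each entry of $V\odot F^i$ is an i.i.d.\ bounded random variable with mean equal to the corresponding entry of $W$, the weak law of large numbers applied entrywise shows that $\tfrac1N \sum_i (V \odot F^i) \to W$ in probability in the operator norm (a finite number of entries suffices). Hence with probability tending to $1$, the second term's contribution is at most $\eta/2$, so the total bound $\eta$ on $A_N(\delta,\eta)$ holds, and a final application of uniform continuity via $\omega$ converts this into the claimed bound on the $\sigma$-outputs.

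The main technical obstacle is purely bookkeeping: one must verify that the compact set $\mathcal K$ can be chosen independently of $N$, of the realization of the filters, and of the perturbations $\tilde x^i$ (so that a single modulus of continuity works), and that the perturbation radius $\delta$ can be chosen independently of $N$. Both follow from the uniform deterministic bound $\|V \odot F^i\|_{\mathrm{op}} \le C_2$, which is where the assumption that the entries of $F$ are bounded above by $M$ is used. No concentration inequality is needed beyond the weak law of large numbers for bounded i.i.d.\ variables, so the only quantitative ingredient is the entrywise mean/variance structure of $F$.
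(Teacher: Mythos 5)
Your proof is correct and follows essentially the same strategy as the paper's: uniform continuity of $\sigma$ on a fixed compact set, a deterministic bound of order $\delta$ on the perturbation term (the paper's $c_\delta$ plays the role of your $C_2\delta$), and the weak law of large numbers for the averaged bounded filter matrices. The only difference is bookkeeping: you dominate the supremum over $x$ and $(\tilde{x}^i)$ by a single operator-norm quantity independent of them, whereas the paper works with the (random) points at which the suprema are attained and then conditions via the law of total probability.
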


\subsection{Replacing the first layer}

Assume now that we have constructed a full dropout-tree, that is, a dropout-tree of which all leaves are at level $1$ (i.e., at depth $L-1$). 
This means that we have constructed suitable replacements for almost every layer of the neural network, except for the first layer. This layer contains the edges that have the global input as source.
Replacing the first layer requires a different construction: if we would outright drop edges in the first layer, then we can not control the error with the current technique. We now describe how we replace the first layer.

For every leaf $\ell$ in the full dropout-tree, we precompose every input at~$\ell$ with a stochastic function $\Xi^\ell: \mathbb{R}^{d_0} \to \mathbb{R}^{d_1}$. We record this information in what we call a \emph{precomposition} for a dropout-tree. Figure~\ref{fig:Tree_construction} illustrates this precomposition.

\begin{definition}
	A \emph{precomposition} for a full dropout-tree $\Gamma$ is a map $\Xi : \mathbb{R}^{d_0} \to \mathbb{R}^{d_1} $ from leaves to stochastic functions.
\end{definition}

Let $\Delta : \mathbb{R}^{d_0} \to (\mathbb{R}^{d_0})^{\mathsf{leaves}_\Gamma}$ be the diagonal map sending $x$ to copies of $x$. The neural network induced by the full dropout-tree $\Gamma$ and a precomposition $\Xi$ that we consider is given by 
\begin{equation}
	\indnn_{\Gamma, \Xi} 
	:= \Phi^{v_0}_{\Gamma, \Xi} \circ \Delta
	\label{eqn:Seed_of_the_recursion}
\end{equation} 
where
\begin{equation}
	\Phi^v_{\Gamma, \Xi}
	=
	\begin{cases}
		\sigma_v
		\Bigl(
			\frac{1}{ \# \into(v) }
			\sum_{e \in \into(v)} (V^e \odot F^e) \Phi^{\source(e)}_{\Gamma, \Xi} + b^e
		\Bigr)
		&
		\textnormal{if } v \textnormal{ is not a leaf},
		\\
		\Xi^v
		&
		\textnormal{if $v$ is a leaf}.
	\end{cases}	
	\label{eqn:Recursion_inducing_a_NN_of_a_full_dropout_tree_and_a_precomposition}
\end{equation}
We also define $\indnn^{\mathsf{avg-filt}}_{\Gamma, \Xi}$ as being almost the same neural network as $\indnn_{\Gamma, \Xi}$, with the only difference being that we replace each random filter variable $F^e$ in \eqref{eqn:Recursion_inducing_a_NN_of_a_full_dropout_tree_and_a_precomposition} with its expectation $\expectation{F^e}$. Recall for \eqref{eqn:Seed_of_the_recursion} that $v_0$ designates the root of the dropout-tree, and note that \eqref{eqn:Recursion_inducing_a_NN_of_a_full_dropout_tree_and_a_precomposition} constructs the neural network recursively (layer by layer).

\begin{figure}[p]
	\centering
\setstackgap{S}{2pt}

\def\sepx{1.8cm}
\def\sepy{1.0cm}
\def\boxwidth{0.5cm}
\def\boxheight{0.7cm}
\begin{tikzpicture}[node distance=\sepx,scale=0.8]

    \tikzstyle{every pin edge} = [<-,shorten <=1pt];
    \tikzstyle{neuron} = [circle,draw=black,fill=black!100,minimum size=9pt,inner sep=0pt];
    \tikzstyle{input neuron} = [neuron, fill=black!50];
    \tikzstyle{output neuron} = [neuron, fill=black!50];
    \tikzstyle{hidden neuron} = [neuron, fill=black!25];
    \tikzstyle{annot} = [text width=4em, text centered];

    \tikzstyle{box} = [
        draw =black!100,
        shape = rectangle,
        minimum width = \boxwidth,
        minimum height = \boxheight
    ];
    \tikzstyle{root}    = [box, fill=black!50];
    \tikzstyle{vertex}  = [box, fill=black!50];
    \tikzstyle{label}   = [text width=4em, text centered];
    
    \path node[box, pin={[pin edge={->}, pin distance=0.7cm]right:$\indnn_{\Gamma, \Xi}(x)$}] (R00000) at (-0*\sepx,1.0*\sepy)  {$\Shortstack{   .   }$};
    \path node[box] (A10000) at (-1*\sepx,-7.0*\sepy)  {$\Shortstack{ . . . }$};
    \path node[box] (A20000) at (-1*\sepx, 1.0*\sepy)  {$\Shortstack{ . . . }$};
    \path node[box] (A30000) at (-1*\sepx, 8.0*\sepy)  {$\Shortstack{ . . . }$};
    \path node[box] (A21000) at (-2*\sepx,-2.0*\sepy)  {$\Shortstack{. . . .}$};
    \path node[box] (A22000) at (-2*\sepx, 4.0*\sepy)  {$\Shortstack{. . . .}$};
    \path node[box] (A21100) at (-3*\sepx,-5.0*\sepy)  {$\Shortstack{  . .  }$};
    \path node[box] (A21200) at (-3*\sepx,-3.0*\sepy)  {$\Shortstack{  . .  }$};
    \path node[box] (A21300) at (-3*\sepx,-1.0*\sepy)  {$\Shortstack{  . .  }$};
    \path node[box] (A21400) at (-3*\sepx, 1.0*\sepy)  {$\Shortstack{  . .  }$};
    \path node[box] (L11110) at (-4*\sepx,-0.3*\sepy)  {$\Shortstack{   .   }$};
    \path node[box] (L11120) at (-4*\sepx,-1.7*\sepy)  {$\Shortstack{   .   }$};
    \path node[box, pin={[pin distance=0.7cm]left:$x$}] (D11111) at (-5*\sepx, 1.0*\sepy)  {$\Delta$};

    \tikzstyle{every edge} += [shorten <=2pt];
    \path node[box, draw=none] (A31000) at (-2*\sepx, 7.0*\sepy)  {};
    \path node[box, draw=none] (A32000) at (-2*\sepx, 9.0*\sepy)  {};
    \path node[box, draw=none] (A11000) at (-2*\sepx,-8.0*\sepy)  {};
    \path node[box, draw=none] (A12000) at (-2*\sepx,-6.0*\sepy)  {};

    \tikzstyle{every edge} += [shorten >=4pt, shorten <=3pt,->,draw=black!50, ultra thick];    
    \path (A20000) edge node[above] {} (R00000);
    \path (A30000) edge node[above] {} (R00000);
    \path (A10000) edge node[] {} (R00000);
    \path (A21000) edge node[above] {} (A20000);
    \path (A22000) edge node[above] {} (A20000);
    \path (A21100) edge node[above] {} (A21000);
    \path (A21200) edge node[above] {} (A21000);
    \path (A21300) edge node[above] {} (A21000);
    \path (A21400) edge node[above] {} (A21000);
	\path (L11110) edge node[above] {} (A21300);
	\path (L11120) edge node[above] {} (A21300);

    \tikzstyle{every edge} += [<-,dashed];    
	\draw (A22000) edge +(-0.8*\sepx,-1.3*\sepy);
	\draw (A22000) edge +(-0.8*\sepx,-0.5*\sepy);
	\draw (A22000) edge +(-0.8*\sepx, 0.5*\sepy);
	\draw (A22000) edge +(-0.8*\sepx, 1.3*\sepy);
    \path (A21100) edge +(-0.7*\sepx,-0.5*\sepy);
    \path (A21100) edge +(-0.7*\sepx, 0.5*\sepy);
    \path (A21200) edge +(-0.7*\sepx,-0.5*\sepy);
    \path (A21200) edge +(-0.7*\sepx, 0.5*\sepy);
    \path (A21400) edge +(-0.7*\sepx,-0.5*\sepy);
    \path (A21400) edge +(-0.7*\sepx, 0.5*\sepy);
	\tikzstyle{every edge} += [->];
    \path (A31000) edge node[near start, above] {} (A30000);
    \path (A32000) edge node[near start, above] {} (A30000);
    \path (A11000) edge node[near start, above] {} (A10000);
    \path (A12000) edge node[near start, above] {} (A10000);

    \tikzstyle{every edge} += [->,thin,solid];    
    \path (D11111) edge node[near start, above] {} (L11110);
    \path (D11111) edge node[near start, above] {} (L11120);
	\path (D11111) edge +(0.7*\sepx, 8*\sepy);
	\path (D11111) edge +(0.7*\sepx, 6*\sepy);
	\path (D11111) edge +(0.7*\sepx, 4*\sepy);
	\path (D11111) edge +(0.7*\sepx, 2*\sepy);
	\path (D11111) edge +(0.7*\sepx, 0*\sepy);
	\path (D11111) edge +(0.7*\sepx,-4*\sepy);
	\path (D11111) edge +(0.7*\sepx,-6*\sepy);
	\path (D11111) edge +(0.7*\sepx,-8*\sepy);

	\tikzstyle{vertline} = [dotted];
	\def\totheight{9*\sepy}
	\def\totdepth{-11*\sepy}
	\draw[vertline] ( 0.5*\sepx,\totheight) to ( 0.5*\sepx,\totdepth);
	\draw[vertline] (-0.5*\sepx,\totheight) to (-0.5*\sepx,\totdepth);
	\draw[vertline] (-1.5*\sepx,\totheight) to (-1.5*\sepx,\totdepth);
	\draw[vertline] (-2.5*\sepx,\totheight) to (-2.5*\sepx,\totdepth);
	\draw[vertline] (-3.5*\sepx,\totheight) to (-3.5*\sepx,\totdepth);
	\draw[vertline] (-4.5*\sepx,\totheight) to (-4.5*\sepx,\totdepth);
	
	\begin{scope}[shift={(0,-9)}]

    \path node[label] (l20) at (-0.0*\sepx, 0.0*\sepy) {$\sigma_4$};
    \path node[label] (l21) at (-1.0*\sepx, 0.0*\sepy) {$\sigma_3$};
    \path node[label] (l22) at (-2.0*\sepx, 0.0*\sepy) {$\sigma_2$};    
    \path node[label] (l23) at (-3.0*\sepx, 0.0*\sepy) {$\sigma_1$};
    \path node[label] (l24) at (-4.0*\sepx, 0.0*\sepy) {$\sigma_0$};
    \path node[label] (l25) at (-5.0*\sepx, 0.0*\sepy) {};    

    \path node[label] (l24) at (-3.5*\sepx,-0.25*\sepy) {$\underbrace{~~~~~~~~~~~~}{}$};
    \path node[label] (l24) at (-3.5*\sepx,-0.5*\sepy) {$\Xi$};
    \end{scope}
    
	\begin{scope}[shift={(0,-9.5)}]	
	\small
    \path node[label] (l30) at (-0.0*\sepx,-1.0*\sepy) {root};
    \path node[label] (l31) at (-1.0*\sepx,-1.0*\sepy) {level $3$};
    \path node[label] (l32) at (-2.0*\sepx,-1.0*\sepy) {level $2$};    
    \path node[label] (l33) at (-3.0*\sepx,-1.0*\sepy) {leaves at level $1$};
    \path node[label] (l34) at (-4.0*\sepx,-1.0*\sepy) {level $0$};
    \path node[label] (l35) at (-5.0*\sepx,-1.0*\sepy) {copying};
	\end{scope}

\end{tikzpicture}
	\caption{An illustration of how we use the base neural network $\Psi$ from \refFigure{fig:Base_NN_psi} and a dropout-tree (indicated with thicker arrows) to ultimately construct the larger neural network $\indnn_{\Gamma, \Xi}$ in which edges are being dropped stochastically. Here, $L=4$.}
	\label{fig:Tree_construction}
\end{figure}
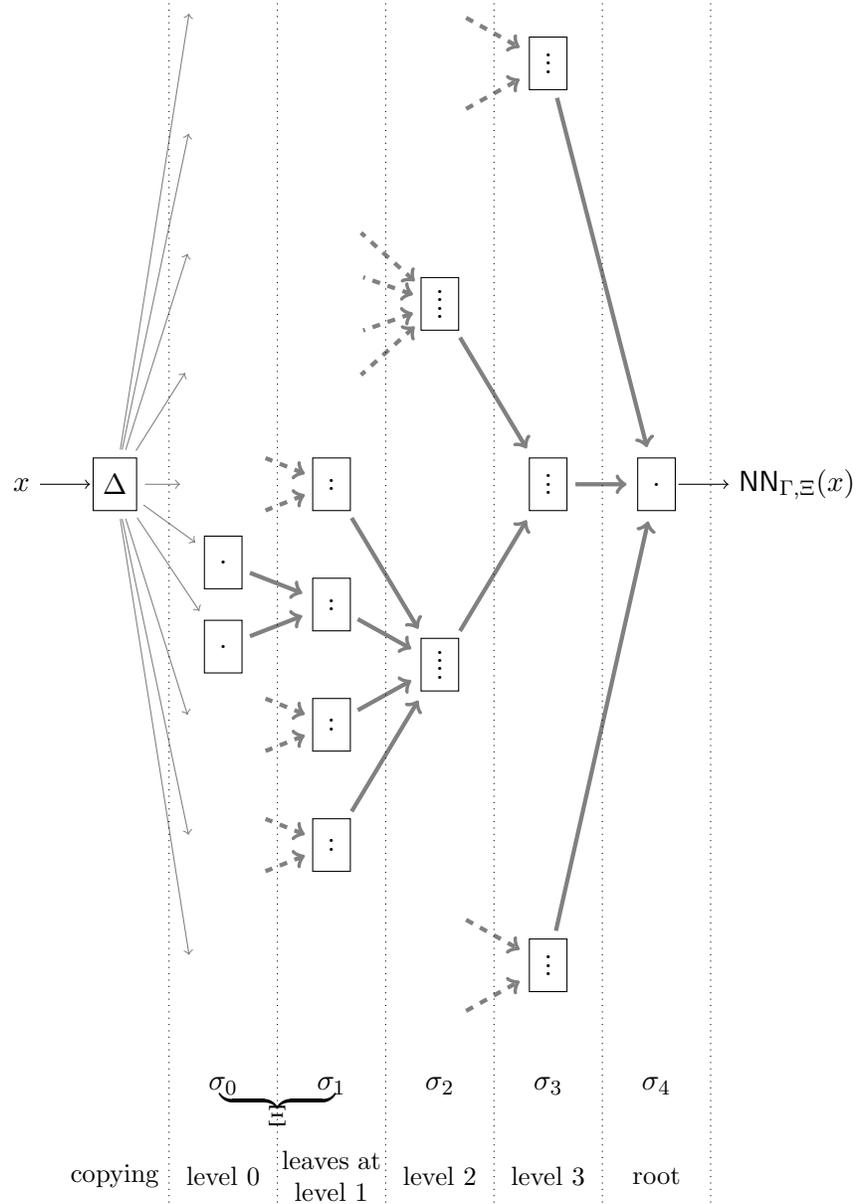

\begin{example}
	One natural precomposition $\Xi$ is the assignment of the function $\Psi^{1}( \cdot , (W^{(1)}, b^{(1)}))$ to every leaf. This precomposition yields a neural network in which edges in the first layer, i.e., the input edges of the neural network, are never dropped. In this case, $\indnn^{\mathsf{avg-filt}}_{\Gamma, \Xi}(w)$  actually coincides with the original neural network $\Psi( \cdot, w)$.
\end{example}

\subsubsection{Precompositions in which inputs are dropped}

We will now construct precompositions that allow for the possibility of dropping edges in the first layer and applying e.g.\ the \gls{ReLU} function to them immediately. Concretely, we will add a zeroth layer with an activation function~$\sigma_0 : \mathbb{R} \to \mathbb{R}$. We assume that $\sigma_0(0) = 0$ and that $\sigma_0$ has one-sided derivatives $\sigma_-$ and $\sigma_+$ in the point $0 \in \mathbb{R}$:
\begin{equation}
	\sigma_-
	:= \lim_{\alpha \downarrow 0} \frac{ \sigma_0( - \alpha) - \sigma_0(0) }{\alpha}
	,
	\quad
	\sigma_+
	:= \lim_{\alpha \downarrow 0} \frac{ \sigma_0(+ \alpha) - \sigma_0(0) }{\alpha}.
	\label{eqn:Directional_derivative}
\end{equation}
Define the sign function
\begin{equation}
	S(x) 
	=
	\begin{cases}
		-
		& 
		\textnormal{if } x < 0,
		\\
		+
		&
		\textnormal{if } x \geq 0
	\end{cases}
	\label{eqn:Condition_on_the_directional_derivative}
\end{equation}
component-wise. If $x \neq 0$, then $\sigma_{S(-x)} + \sigma_{S(x)} = \sigma_- + \sigma_+$ does not depend on $x$---a critical fact that we will leverage in our construction. 

\begin{example}
	Consider a zeroth layer that is the identity function, i.e., $\sigma_0(y) = y$. Then $\sigma_{\pm} = 1$.
	Choosing $\sigma_0$ as the identity function is allowed here, and means that the layer is not adapted.
\end{example}

\begin{example}
	Consider a zeroth layer with \gls{ReLU} activation function, $\sigma_0 := \mathrm{ReLU}(z) := z \indicator{z \geq 0}$. Then $\sigma_- = 0$ and $\sigma_+ = 1$.
\end{example}

Here are the precompositions that we employ: we call $\Xi$ an \emph{$(\alpha, N)$-pre\-comp\-o\-si\-tion associated to a set of distributions $\{ \mu^\ell, \nu^\ell \}_{l \in \mathrm{leaves}(\Gamma) }$} if for each leaf $\ell $,
\begin{equation}
	\Xi^\ell(x) 
	:= 
	\sigma_1
	\Bigl( 
		\frac{1}{N}
		\sum_{i=1}^{2N} 
		(-1)^i (V^\ell \odot F^{\ell ,i}) 
		\sigma_0
		\bigl(
			(-1)^i \alpha (I \odot G^{\ell ,i}) x
		\bigr) 
		+ b^\ell 
	\Bigr)
	\label{eqn:precomposition_associated_with_a_set_of_distributions}
\end{equation}
where element-wise
\begin{equation}
	V^\ell_{rc}
	= \frac{ W^{(1)}_{rc} }{ \alpha \bigl( \sigma_- + \sigma_+ \bigr) \,\expectation{F^\ell_{rc}}\, \expectation{G^\ell_{cc}} },
	\label{eqn:Definition_Vlrc_and_alr_component_wise}
\end{equation}
and $\{ F^{\ell,i} \}_{i \geq 1}$, $\{ G^{\ell ,i} \}_{i \geq 1}$ are sequences of mutually independent copies of random matrices $F^\ell, G^\ell$ that have distributions $\mu^\ell$, $\nu^\ell$, respectively. Furthermore, the $F^\ell$ are presumed to have the same size as $W^{(1)}$, and the $G^\ell$ to have size $d_0 \times d_0$. Note that these assumptions allow us to place unit mass on any particular outcome and thus to replace $F^\ell, G^\ell$ by deterministic counterparts.

The idea of \eqref{eqn:precomposition_associated_with_a_set_of_distributions} is that it represents two layers of a dropout neural network that satisfies the approximation properties we are after. The functions $\sigma_1$, $\sigma_0$ can be understood as their activation functions, the matrices $V^\ell$, $\alpha I$ as their weights, $b^\ell$ as a bias, and the matrices $F^{\ell }$, $G^{\ell }$ as describing which edges and inputs are randomly removed. By scaling the weights $V^\ell$ by $1/(2N)$ and generating $2N$ independent copies of the first layer, we are preparing for an application of the law of large numbers. Furthermore, by allowing for arbitrarily small $\alpha$, we are preparing for a linearization of $\sigma_0$ around $0$. Finally, the alternatingly positive and negative multiplicative factors $(-1)^i$ allow us to cover directional derivatives such as that of the \gls{ReLU} activation function. All together, the construction allows us to prove the following theorem.

\begin{theorem}
	\label{th:main-average}
	Fix\/ $0<\epsilon < 1$. 
	Let $\Gamma$ be a full dropout-tree satisfying $\propprob_\Gamma(\delta,\epsilon)$ for some $\delta > 0$. 
	Let $\Xi$ be an $(\alpha, N)$-precomposition associated to a set of distributions $\{ \mu^\ell, \nu^\ell \}_{\ell \in \mathrm{leaves}(\Gamma) }$. Assume that for every $\ell$, if $F$ is a matrix of filter variables distributing according to $\mu^\ell$ or $\nu^\ell$, then for every $r,c$,
	\[
		\mathbb{P}[F_{r c} = 1] 
		\geq \beta
		> 0
		.
	\]	
	Let $\sigma_0 : \mathbb{R} \to \mathbb{R}$ be a continuous function with one-sided derivatives $\sigma_-$ and $\sigma_+$ in $0$, such that $\sigma_- + \sigma_+ \neq 0$ and such that $\sigma_0(0) = 0$. Assume moreover that $\sigma_-$ and $\sigma_+$ satisfy the following inequality with respect to the global variable $Q$:
	\begin{equation}
		\label{eq:rel-dir-der-to-Q}
		4 \frac{|\sigma_-| + |\sigma_+|}{|\sigma_- + \sigma_+ |} 
		< Q.
	\end{equation}

	The following inequalities now hold for $\alpha > 0$ small enough and $N \in \mathbb{N}$ large enough:
	\begin{equation}
		\label{eq:main-average-prob}
		\mathbb{P}
		\Bigl[ \sup_{x \in \overline{B(0, R)}}
			\bigl| \indnn_{\Gamma, \Xi}(x) - \Psi(x,w) \bigr|
			> \epsilon
			\Bigr]
		< \epsilon
	\end{equation}
	and
	\begin{equation}
		\label{eq:main-average-Lq}
		\mathbb{E}
		\Bigl[
			\sup_{x \in \overline{B(0,R)}}
			\bigl| \indnn_{\Gamma, \Xi}(x) - \Psi(x,w) \bigr|^q
			\Bigr]^{1/q}
		< \epsilon
		,
	\end{equation}
	while
	\begin{equation}
		\label{eq:main-average-avg-filt}
		\sup_{x \in \overline{B(0, R)}}
		\Bigl| \indnn^{\mathsf{avg-filt}}_{\Gamma, \Xi}(x) - \Psi(x, w) \Bigr|
		< \epsilon.
	\end{equation}
\end{theorem}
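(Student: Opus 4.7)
The plan is to view the precomposition $\Xi$ as a randomized surrogate for the missing first-layer map $\transfer_\Gamma^\ell=\Psi_1(\cdot,(W^{(1)},b^{(1)}))$ at every leaf, to prove that $\Xi$ is uniformly close to $\transfer_\Gamma$ with high probability, and then to push the residual perturbation through the dropout-tree using the property $\propprob_\Gamma(\delta,\epsilon)$ supplied by Lemma~\ref{lem:Propprob_inheritance}.

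The heart of the argument is a uniform-in-$x$, in-probability estimate $\sup_{x\in\overline{B(0,R)}}\|\Xi^\ell(x)-\transfer^\ell_\Gamma(x)\|<\delta$ for every leaf $\ell$. For fixed $x$ and row $r$, I would split the sum $\sum_{i=1}^{2N}$ in \eqref{eqn:precomposition_associated_with_a_set_of_distributions} into even and odd indices; each half is an average of $N$ i.i.d.\ bounded random variables, so a Chebyshev estimate gives pointwise concentration around the mean at rate $1/\sqrt{N}$. Independence of $F^{\ell,i}$ and $G^{\ell,i}$ together with $\sigma_0(0)=0$ makes that mean equal to
\[
\sum_c V^\ell_{rc}\,\mathbb{E}[F^\ell_{rc}]\,\mathbb{E}[G^\ell_{cc}]\bigl(\sigma_0(\alpha x_c)-\sigma_0(-\alpha x_c)\bigr)+b^\ell_r,
\]
and the one-sided derivatives give $\sigma_0(\alpha x_c)-\sigma_0(-\alpha x_c)=(\sigma_++\sigma_-)\alpha x_c+o(\alpha)$ uniformly on a bounded set. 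Combined with the normalization \eqref{eqn:Definition_Vlrc_and_alr_component_wise} of $V^\ell_{rc}$ this limit is $(W^{(1)}x+b^{(1)})_r$, and continuity of $\sigma_1$ finishes the pointwise statement. To pass to uniform-in-$x$ control, I would use that $x\mapsto\Xi^\ell(x)$ is Lipschitz on $\overline{B(0,R)}$ with a Lipschitz constant independent of $\alpha$ and $N$: the factor $1/\alpha$ inside $V^\ell$ is absorbed by the $\alpha$ inside $\sigma_0$, so the effective amplification depends only on $\|W^{(1)}\|$, $\beta$, and $|\sigma_-|+|\sigma_+|$. This is exactly where hypothesis \eqref{eq:rel-dir-der-to-Q} enters, ensuring that $\Xi^\ell(x)\in B(0,R_0)$ and hence that the tree receives admissible inputs. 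A finite-net argument on $\overline{B(0,R)}$ then promotes pointwise concentration to uniform concentration, with probability exceeding $1-(\epsilon/(4R_L))^q$, by first choosing $\alpha$ small enough to push the linearization error below $\delta/2$ and then $N$ large enough so that the variance term falls below $\delta/2$ simultaneously at every net point.

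Intersecting this favorable event with the favorable event from $\propprob_\Gamma(\delta,\epsilon)$ gives, on an event of probability strictly above $1-\epsilon$,
\[
\sup_{x\in\overline{B(0,R)}}\bigl|\indnn_{\Gamma,\Xi}(x)-\Psi(x,w)\bigr|=\sup_x\bigl|\Phi^{v_0}_\Gamma(\Xi(x))-\Phi^{v_0}_{\Gamma_{\det}}(\transfer_\Gamma(x))\bigr|<\tfrac{\epsilon}{2},
\]
establishing \eqref{eq:main-average-prob}. For \eqref{eq:main-average-Lq} a recursive bound shows $|\indnn_{\Gamma,\Xi}(x)|\leq R_L$ for every filter realization (the radii $R_j$ were defined precisely to accommodate the worst-case amplification $\|W^{(j)}\|/\beta$ per layer, with the $Q$-enlargement in $R_0$ absorbing the precomposition via \eqref{eq:rel-dir-der-to-Q}); splitting the expectation according to the favorable event then yields the $L^q$ bound. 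The average-filter statement \eqref{eq:main-average-avg-filt} follows by a direct deterministic calculation: after replacing every filter by its expectation in $\Xi^\ell(x)$, the $2N$-fold sum telescopes to $\sigma_0(\alpha\mathbb{E}[G^\ell_{cc}]x_c)-\sigma_0(-\alpha\mathbb{E}[G^\ell_{cc}]x_c)$, the same linearization normalizes the pre-activation to $W^{(1)}x+b^{(1)}$ as $\alpha\downarrow 0$, and continuity of the now-deterministic tree $\Phi^{v_0}_{\Gamma_{\det}}$ propagates the uniform convergence to the root.

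The main obstacle is the uniform-in-$x$ control in the middle step: the precomposition does not fit directly into the frame of Lemma~\ref{lem:copy-input}, because the random ``inputs'' fed into $\sigma_0$ themselves depend on the filter $G^{\ell,i}$, and the two limits $\alpha\downarrow 0$ (to shrink the non-linear linearization error of $\sigma_0$ at $0$) and $N\to\infty$ (to kill the variance uniformly over the finite net) must be taken in the correct order and carefully coupled to the $\delta$ supplied by Lemma~\ref{lem:Propprob_inheritance}.
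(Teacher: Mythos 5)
Your proposal follows essentially the same architecture as the paper's proof: your uniform-in-$x$ closeness of $\Xi^\ell$ to $\Psi_1(\cdot,(W^{(1)},b^{(1)}))$ is exactly the content of \refLemma{lem:Main-average_stepping_stone} (proved there via the random maximizer, conditioning, and Khinchin's weak law rather than a finite net, but with the same linearization of $\sigma_0$ and the same even/odd split); the combination with $\propprob_\Gamma(\delta,\epsilon)$ via a union bound over leaves and total probability, the almost-sure bound $|\indnn_{\Gamma,\Xi}|\le R_L$ obtained by pushing the radii $R_j$ through the tree with \eqref{eq:rel-dir-der-to-Q} absorbing the precomposition in the base case, the expectation split for \eqref{eq:main-average-Lq}, and the deterministic telescoping plus continuity of the averaged tree for \eqref{eq:main-average-avg-filt} all match the paper.

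Two technical points deserve repair, though neither breaks your plan. First, the claim that $x\mapsto\Xi^\ell(x)$ is Lipschitz with constant independent of $\alpha$ is not justified under the theorem's hypotheses: $\sigma_0$ is only assumed continuous with one-sided derivatives at $0$, so the $1/\alpha$ in $V^\ell$ is not ``absorbed'' by the $\alpha$ inside $\sigma_0$ unless $\sigma_0$ is locally Lipschitz. Since you fix $\alpha$ before sending $N\to\infty$, this is harmless --- choose the net after fixing $\alpha$, with mesh governed by the modulus of continuity of $\sigma_0$ at scale $\alpha$ --- but as written the step would fail for a merely continuous $\sigma_0$; note also that \eqref{eq:rel-dir-der-to-Q} plays no role in this concentration step (it is needed only for the almost-sure radius bound used in the $L^q$ part, which you do state correctly). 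Second, your probability budget is slightly too loose: if the leaf-level failure event is only bounded by $(\epsilon/(4R_L))^q$, the total bad event has probability up to $2(\epsilon/(4R_L))^q$, and the $L^q$ computation then yields $3(\epsilon/2)^q$, which exceeds $\epsilon^q$ for $q$ close to $1$. The fix is what the paper does with its $\kappa$: exploit the strict inequality in $\propprob_\Gamma(\delta,\epsilon)$ and choose $N$ so large that the union of leaf failures fits inside that slack, so the combined bad event still has probability below $(\epsilon/(4R_L))^q$.
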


An important consequence of \refTheorem{th:main-average} is that we obtain for instance a universal approximation result for dropconnect neural networks with \gls{ReLU} activation functions that also guarantees a good approximation when filter variables are replaced by their averages, as formalized by Corollary \ref{co:dropconnect-relu-average} in the introduction. \refTheorem{th:main-average} is proven in Section~\ref{sec:Proof_of_theorem_main-average}. There, we show that \refTheorem{th:main-average} follows from the following \refLemma{lem:Main-average_stepping_stone}, which in turn is proven in Section~\ref{sec:Proof_of_lemma_main-average_stepping_stone} using compactness arguments and the law of large numbers. 

\begin{lemma}
	\label{lem:Main-average_stepping_stone}
	Let $\sigma_0 : \realNumbers\to\realNumbers$ be a continuous function with $\sigma(0) = 0$ and with two one-sided derivatives $\sigma_-$ and $\sigma_+$ in $0$ satisfying $|\sigma_- + \sigma_+| > 0$.
	Let $\Xi$ be an $(\alpha, N)$-precomposition associated to a set of distributions $\{ \mu^\ell, \nu^\ell \}_{l \in \mathrm{leaves}(\Gamma) }$ such that 
		for all $\ell, r,c$, $\expectation{ F^\ell_{rc} } > 0, \expectation{ G^\ell_{rc} } > 0$, $0 \leq F^\ell_{rc} \leq M$ w.p.\ one, and $0 \leq G^\ell_{rc} \leq M < \infty$ w.p.\ one.				
	The following now holds: for every leaf $\ell$, $0 \leq K < \infty$, and $\rho > 0$, for $\alpha$ small enough and $N$ large enough,
	\begin{equation}
		\mathbb{P}
		\Bigr[
			\sup_{x \in \overline{B(0, K)}} 
			\bigl| \Xi^\ell(x) - \Psi_1(x ; (W^{(1)}, b^{(1)})) \bigr| 
			> \rho 
		\Bigr] 
		< \rho
		\label{eqn:Probability_bound_on_alpa_N_precompositions_associated_to_a_distribution}
	\end{equation}
	and
	\begin{equation}
		\sup_{x \in \overline{B(0,K)}} 
		\bigl|  \Xi^{\ell ,\mathsf{avg-filt}}(x) - \Psi_1(x ; (W^{(1)}, b^{(1)})) \bigr| 
		< \rho
		\label{eqn:sup_bound_on_alpa_N_precompositions_associated_to_a_distribution}
	\end{equation}
	where $\Xi^{\ell ,\mathsf{avg-filt}}$ denotes the function $\Xi^\ell$ in \eqref{eqn:precomposition_associated_with_a_set_of_distributions} but with each filter variable $F^{\ell ,i}$ replaced by its expectation $\expectation{F^\ell}$ .
\end{lemma}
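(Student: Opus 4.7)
The plan is to exploit a linearization of $\sigma_0$ near $0$ combined with the identity $\sigma_{S(y)} + \sigma_{S(-y)} = \sigma_- + \sigma_+$, valid for every $y\in\mathbb{R}$, that was highlighted right after \eqref{eqn:Condition_on_the_directional_derivative}. Because $\sigma_0(0)=0$ and $\sigma_0$ possesses the one-sided derivatives $\sigma_\pm$ at $0$, for any tolerance $\eta>0$ there is an $\alpha_0>0$ such that $|\sigma_0(y) - \sigma_{S(y)} y| \leq \eta |y|$ whenever $|y| \leq \alpha_0 M K$. Hence for $\alpha \leq \alpha_0$ and $x \in \overline{B(0,K)}$, each factor $\sigma_0\!\left((-1)^i \alpha G^{\ell,i}_{cc} x_c\right)$ appearing in \eqref{eqn:precomposition_associated_with_a_set_of_distributions} may be replaced by its linearization $\sigma_{S((-1)^i x_c)} (-1)^i \alpha G^{\ell,i}_{cc} x_c$ with an absolute error of order $\eta \alpha M K$. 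The crucial observation is that although $V^\ell_{rc}$ scales as $1/\alpha$, this error is still only $O(\eta)$ per term after the pairing, so the total contribution to the argument of $\sigma_1$ can be driven below any threshold by shrinking $\eta$.

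For the deterministic bound \eqref{eqn:sup_bound_on_alpa_N_precompositions_associated_to_a_distribution}, I would replace every $F^{\ell,i}$ by $\mathbb{E}[F^\ell]$ and every $G^{\ell,i}$ by $\mathbb{E}[G^\ell]$, apply the linearization above, and split the index set $\{1,\dots,2N\}$ by parity. The $N$ even indices each contribute the factor $\sigma_{S(x_c)}$ per coordinate, the $N$ odd indices each contribute $\sigma_{S(-x_c)}$, and the $(-1)^i$ inside and outside $\sigma_0$ cancel; the parity identity then collapses their sum to $\sigma_- + \sigma_+$. Substituting $V^\ell_{rc} = W^{(1)}_{rc}\bigl/\bigl(\alpha(\sigma_- + \sigma_+)\mathbb{E}[F^\ell_{rc}]\mathbb{E}[G^\ell_{cc}]\bigr)$ then cancels every factor of $\alpha$, of $\sigma_-+\sigma_+$ and of the expectations, leaving the argument of $\sigma_1$ equal to $W^{(1)} x + b^\ell = W^{(1)} x + b^{(1)}$ up to a uniform $O(\eta)$ residue. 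Uniform continuity of $\sigma_1$ on compacts closes the argument once $\eta$ is taken small enough.

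For the probabilistic statement \eqref{eqn:Probability_bound_on_alpa_N_precompositions_associated_to_a_distribution}, the same linearization reduces the analysis to controlling the empirical averages
\[
\hat{\mu}^{\pm}_N(r,c) := \frac{1}{N}\sum_{i\,\text{even/odd}} F^{\ell,i}_{rc} G^{\ell,i}_{cc}.
\]
Since $\{F^{\ell,i}\}$ and $\{G^{\ell,j}\}$ are mutually independent sequences of bounded i.i.d.\ copies of $F^\ell$ and $G^\ell$, the weak law of large numbers yields $\hat{\mu}^{\pm}_N(r,c) \to \mathbb{E}[F^\ell_{rc}]\,\mathbb{E}[G^\ell_{cc}]$ in probability, and the same algebra as in the deterministic case then shows that the argument of $\sigma_1$ converges in probability to $W^{(1)}x+b^{(1)}$, pointwise in $x$. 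I would upgrade this to uniform convergence over the compact ball $\overline{B(0,K)}$ by fixing a finite $\eta'$-net, invoking the pointwise LLN at each net point, and using the uniform Lipschitz bound on the random integrand in $x$ (finite because $\sigma_0,\sigma_1$ are uniformly continuous on the bounded ranges involved and the filters are bounded by $M$) to interpolate between net points; the potential discontinuity of $S$ at $0$ is harmless because on a coordinate hyperplane $x_c = 0$ the corresponding summand vanishes identically.

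The main technical obstacle is the coupled choice of the two quantifiers: the $1/\alpha$ blow-up of $V^\ell$ forces $\alpha$ to be chosen \emph{first}, small enough that the linearization is accurate even after multiplication by the $O(1/\alpha)$ weights; only afterwards may $N$ be chosen sufficiently large (depending on the now-fixed $\alpha$) for the LLN fluctuations of $\hat{\mu}^{\pm}_N(r,c)$, weighted by $V^\ell$, to fall below $\rho$. Getting this order of quantifiers straight, and tracking uniformity in $x$ through both approximations simultaneously, is the delicate bookkeeping that the proof will need to perform.
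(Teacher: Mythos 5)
Your proposal is correct and follows essentially the same route as the paper's proof: linearize $\sigma_0$ at the origin (choosing $\alpha$ before $N$), use the parity pairing together with $\sigma_{S(x_c)}+\sigma_{S(-x_c)}=\sigma_-+\sigma_+$ for $x_c\neq 0$ (the $x_c=0$ terms vanishing, as you note), control the filter averages by the weak law of large numbers, and conclude via uniform continuity of $\sigma_1$ on a compact set, with the deterministic bound obtained by substituting the expectations. The only cosmetic differences are that the paper conditions on $x$-independent events for the $F$- and $G$-averages separately (handling the supremum over $x$ through a random maximizer, with no net needed), whereas you control the product averages $\hat{\mu}^{\pm}_N(r,c)$ directly and invoke an $\eta'$-net --- which works but is unnecessary, since after linearization the fluctuating quantities no longer depend on $x$, and what you call a uniform Lipschitz bound is really the Lipschitz property of the linearized expression (the raw integrand is only equicontinuous, $\sigma_0$ being merely continuous).
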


\section{Discussion}
\label{sec:discussion}

In this article, we showed that dropout neural networks are rich enough for a universal approximation property to hold, both for random-approximation and expectation-replacement dropout.
It is further evidence that the representational capacity of neural networks is so large that approximations are possible despite significant additional constraints. In the case of dropout in general, these additional constraints are the implicit symmetry constraints enforced by the turning on and off of the filter variables: in dropconnect, for instance, for most realizations of the filter variables, the output of the dropconnect neural network still approximates the original neural network well after the filter variables are randomly permuted. Despite the enforced invariance with respect to this operation, there is enough room in the parameter space for the weights of the network to have a good approximation for the overwhelming majority of realizations of the filter variables.

Our proof of the universal approximation property for random-approx\-i\-ma\-tion dropout explicitly works with this symmetry.
By this, we mean the following.
The universal approximation property that we show even works when edges from the input nodes are dropped out at random.
The output in the first hidden layer is then inherently \emph{random}, and in no way close to deterministic.
This is in contrast with for instance the universal approximation property by \cite{foong2020expressiveness} in which the layers are all very close to deterministic. Yet even though the values in the nodes are random, we do have a good understanding of the \emph{distribution} of the values in the nodes, and two stochastic realizations are most likely almost permutations of each other. By blowing up the first layer, i.e., repeating it many times in parallel, we then know the output very well up to this permutation symmetry and this turns out to be enough for us to show a universal approximation property.

\subsection{Limitations of our results}

Our results and methods have several limitations.

We only show the \emph{existence} of dropout neural networks close to a given function. It is a completely separate question whether an algorithm such as dropout stochastic gradient descent would actually be able to find such an approximation. The main message of our result is that at least there is no theoretical obstruction to approximating functions with dropout neural networks.

In the proofs, we used very explicitly that filter variables only take on the values zero or one, while other forms of dropout also exist (for instance with Gaussian filter variables). Our algebraic proof does not readily generalize to this more general case, but it is possible that parts of the proof could be reused.

We also see that universal approximation goes hand in hand with blowing up the size of the neural network. As illustrated by the explicit computations in Section \ref{sec:explicit-computation}, one would likely need so many nodes that the constructions in the article are not feasible in practice.

\section{Conclusion}
\label{sec:conclusion}

We showed two types of universal approximation results for dropout neural networks, one for random-approximation dropout, in which case the random filter variables are also used at prediction time, and one for expectation-replacement dropout, in which case the filter variables are replaced by their averages at prediction time. Our results allow for dropout of edges from the input layer, allow for a wide class of distributions on filter variables, including dropout of edges from the input layer, and for a wide class of activation functions.

By making the difference between random-approximation and expectation-replacement dropout explicit, our results also highlight the following mystery: How is it that expectation-replacement dropout performs so well on prediction time?

\section*{Acknowledgments}

J.W.\ Portegies was supported by the Electronic Component Systems for European Leadership Joint Undertaking under grant agreement No 737459 (project Productive 4.0). This Joint Undertaking receives support from the European Union Horizon 2020 research and innovation program and Germany, Austria, France, Czech Republic, Netherlands, Belgium, Spain, Greece, Sweden, Italy, Ireland, Poland, Hungary, Portugal, Denmark, Finland, Luxembourg, Norway, Turkey.

\newpage
\bibliography{RelatedLiterature}

\newpage
\appendix

\section{Proofs of Section~\ref{sec:Universal_approximation_for_random_approximation_dropout}}

\subsection{Proof of \refTheorem{thm:approximation_in_probability}}
\label{secappendix:proof_of_approximation_in_probability}
We require the following algebraic lemma, which lies at the heart of Theorem \ref{thm:approximation_in_probability}, as it implies the existence of the constants $(a_U)$.

\begin{lemma}
	\label{lemma:function_in_span_dropout_expectations}
	Let $\Psi : \mathbb{R}^d \times \mathbb{R}^n \to \mathbb{R}$ be a function.
	Let $(f^U)$ be a collection of $\{0,1\}^n$-valued random variables indexed by subsets $U \subset \intint{n}$, such that for every $U$
	\[
		\mathbb{P}[f^U = (1,\dots, 1)] > 0.
	\]
	Then for every subset $V\subset \intint{n}$, it holds that
	\[
	\begin{split}
		\Psi( \cdot , w \odot \mathbf{1}_V) 
		\in 
		\mathrm{span} \left\{\mathbb{E} \left[\Psi(\cdot, (w \odot \mathbf{1}_U) \odot f^U )\right] : U \in 2^n \right\}
	\end{split}
	\]
	where for any subset $S\in 2^d$, i.e., $S \subset \{1, \dots, d\}$, we denote by $\mathbf{1}_S$ the characteristic function of $S$.
\end{lemma}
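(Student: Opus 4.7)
The plan is to expand each expectation as a finite combinatorial sum indexed by the possible realizations of the filter variable, and then exploit the resulting triangular structure by induction on $|V|$. For any fixed $U \in 2^n$, the random vector $(w \odot \mathbf{1}_U) \odot f^U$ attains the deterministic value $w \odot \mathbf{1}_S$ exactly when the random set $\{i \in U : f^U_i = 1\}$ equals $S$, so setting $p_{U,S} := \mathbb{P}[\{i \in U : f^U_i = 1\} = S]$ one obtains
\[
\mathbb{E}\bigl[\Psi(\cdot, (w \odot \mathbf{1}_U) \odot f^U)\bigr]
= \sum_{S \subset U} p_{U,S}\, \Psi(\cdot, w \odot \mathbf{1}_S).
\]
The key observation is that the hypothesis $\mathbb{P}[f^U = (1,\dots,1)] > 0$ forces $p_{U,U} > 0$, because the event $\{f^U = (1,\dots,1)\}$ is contained in the event $\{\{i \in U : f^U_i = 1\} = U\}$.

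I would then argue by induction on $|V|$. For the base case $V = \emptyset$ one simply notes that $\Psi(\cdot, w \odot \mathbf{1}_\emptyset) = \Psi(\cdot, 0)$ and that the sum above with $U = \emptyset$ reduces to $\Psi(\cdot, 0)$, which is therefore trivially in the span. For the inductive step, assuming that the claim holds for every proper subset $S \subsetneq V$, I would isolate the $S = V$ term in the expansion with $U = V$:
\[
\mathbb{E}\bigl[\Psi(\cdot, (w \odot \mathbf{1}_V) \odot f^V)\bigr]
= p_{V,V}\, \Psi(\cdot, w \odot \mathbf{1}_V) + \sum_{S \subsetneq V} p_{V,S}\, \Psi(\cdot, w \odot \mathbf{1}_S),
\]
and divide by $p_{V,V} > 0$ to express $\Psi(\cdot, w \odot \mathbf{1}_V)$ as a linear combination of $\mathbb{E}[\Psi(\cdot, (w \odot \mathbf{1}_V) \odot f^V)]$ together with the terms $\Psi(\cdot, w \odot \mathbf{1}_S)$ for $S \subsetneq V$. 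The latter already lie in the target span by the inductive hypothesis, and the former is visibly one of the generators of that span, so $\Psi(\cdot, w \odot \mathbf{1}_V)$ does as well.

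There is no real obstacle here, only a bookkeeping point to take care of: the expansion of the expectation for a given $U$ involves only terms of the form $\Psi(\cdot, w \odot \mathbf{1}_S)$ with $S \subset U$, which is precisely what makes the system triangular with respect to the partial order by inclusion. The positivity assumption is used solely to guarantee that the diagonal coefficient $p_{V,V}$ is nonzero, so that the inductive inversion is legitimate. Note also that the argument produces no estimate on the coefficients themselves, since the $p_{U,S}$ depend on the joint law of $f^U$ but not on $w$; this is exactly what the subsequent statement of Theorem~\ref{thm:approximation_in_probability} requires, namely the existence of coefficients $(a_U)$ that are independent of $w$.
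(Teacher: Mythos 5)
Your proposal is correct and follows essentially the same route as the paper's proof: expanding $\mathbb{E}[\Psi(\cdot,(w\odot\mathbf{1}_V)\odot f^V)]$ as a sum over realizations grouped by the active set, noting that the diagonal coefficient (your $p_{V,V}$, the paper's $\sum_{S:V\subset S}\mathbb{P}[f^V=\mathbf{1}_S]$) is positive by the hypothesis, and inverting by induction on $|V|$. The only difference is notational bookkeeping, so there is nothing to add.
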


\begin{proof}
	The proof is by induction on the cardinality of $V$ and follows from the equality
	\begin{align}
		\Bigl( 
			\sum_{S : V \subset S} \mathbb{P}[f^V = \mathbf{1}_S] 
		\Bigr)
		\Psi(\cdot, w \odot \mathbf{1}_V) 
		&
		= 
		\mathbb{E}
		\Bigl[ 
			\Psi(\cdot, (w \odot \mathbf{1}_V) \odot f^V ) 
		\Bigr]
		\nonumber \\ &
		\phantom{=}
		- \sum_{S : V \setminus S \neq \emptyset} \mathbb{P}[f^V = \mathbf{1}_S] \Psi(\cdot, w \odot \mathbf{1}_{S \cap V}).
	\end{align}
	In particular, for the base case in which $V$ is empty, the last term vanishes.
	In the induction step, the functions $\Psi(\cdot, w \odot \mathbf{1}_{S \cap V})$ are by the induction hypothesis all in the required span.
\end{proof}

\begin{proof}(of \refTheorem{thm:approximation_in_probability})
	By \refLemma{lemma:function_in_span_dropout_expectations}, we can find constants $a_{U}$ for $U \in 2^n$ such that \eqref{eqn:Decomposition_of_Psi_in_a_linear_combination_of_expectations} holds. We look now at \eqref{eqn:average_converges_in_prob_and_Lq}. By the law of large numbers, convergence in probability in the normed vector space $(\mathcal{F}, \| \cdot \|)$ follows. Moreover, for any $V \in 2^{n}$ we have
	\begin{equation}
		\pnorm{\Psi(\cdot, (w \odot \mathbf{1}_V) \odot f^V)}{\mathcal{F}} 
		\leq 
		\max_{U \in 2^n} \pnorm{\Psi(\cdot, (w \odot \mathbf{1}_U) \odot f^U)}{\mathcal{F}} 
		=: 
		C_{w},
	\end{equation}
	so that for any $q \in [1, \infty)$ and $M$, 
	\begin{equation}
		\mathbb{E} 
		\Bigl[ 
			\Bigl\| \frac{1}{M} \sum_{i=1}^M \sum_{U \in 2^n} a_{U} \Psi( \cdot, (w \odot \mathbf{1}_U) \odot f^{i, U}) \Bigr\|_{\subnorm}^q 
		\Bigr]^{\frac{1}{q}} 
		\leq 
		\max_{U \in 2^n} |a_{U}| C_w.
	\end{equation}
	With uniform boundedness for all $M$, we can use the dominated convergence theorem which implies then convergence in $L^{q}$ of the $\mathcal{F}$-valued random variables as $M \to \infty$.
\end{proof}

\subsection{Proof of \refCorollary{cor:main}}

\begin{proof}
	Let $\zeta \in \mathcal{F}$ and let $\epsilon > 0$. Assume there exists a $(m, \Phi, f) \in \DNN$ and a $v \in \mathbb{R}^m$ such that $\| \Phi(\cdot, v) - \zeta \|_{\subnorm} < \epsilon$.
	Define $\eta := \epsilon - \| \Phi(\cdot, v) - \zeta \|_{\subnorm} > 0$.
	Define the collection $( f^{U} )$ of $\{0,1\}^m$-valued filter variables, each being specifically an independent copy of $f$. By Theorem \ref{thm:approximation_in_probability}, there exist constants $(a_U)$, a number $M \in \mathbb{N}$ and $2^m M$ independent copies $(f^{i, U})$ of $f$ such that
	\[
		\mathbb{P}\Bigl[\Bigl\|\frac{1}{M} \sum_{i=1}^M  \sum_{U \in 2^m} a_U \Phi(\cdot, (v \odot \mathbf{1}_U) \odot f^{i, U}) - \Phi(\cdot, v) \Bigr\|_{\subnorm} > \eta \Bigr] < \eta
	\]
	and
	\[
		\mathbb{E}\Bigl[ \Bigl\|\frac{1}{M} \sum_{i=1}^M  \sum_{U \in 2^m} a_U \Phi(\cdot, (v \odot \mathbf{1}_U) \odot f^{i, U}) - \Phi(\cdot, v) \Bigr\|_{\subnorm}^q\Bigr]^{1/q} < \eta
		.
	\]
	Hence by the triangle inequality, in fact
	\[
		\mathbb{P}\Bigl[\Bigl\|\frac{1}{M} \sum_{i=1}^M  \sum_{U \in 2^m} a_U \Phi(\cdot, (v \odot \mathbf{1}_U) \odot f^{i, U}) - \zeta \Bigr\|_{\subnorm} > \epsilon \Bigr] < \epsilon
	\]
	and
	\[
		\mathbb{E}\Bigl[ \Bigl\|\frac{1}{M} \sum_{i=1}^M  \sum_{U \in 2^m} a_U \Phi(\cdot, (v \odot \mathbf{1}_U) \odot f^{i, U}) - \zeta \Bigr\|_{\subnorm}^q\Bigr]^{1/q} < \epsilon.
	\]
	We then  define the tuple $(n, \Psi, g) \in \DNN$ as an independent finite linear combination of $2^m M$ copies of $(m, \Phi, f)$, with coefficients $a_U/M$. Setting $\tilde v\in \R{2^mm}$ to be the concatenation of the $2^m$ modified vectors $(v\odot \mathbf 1_U)_{U\subset \overline{1,m}}$, we then set $w\in \R{2^mMm}$ to be the subsequent concatenation of $M$  copies of $\tilde v$. The combination of $(n,\Psi,g)$ and $w$ achieves the assertion.
\end{proof}

\subsection{Proof of \refProposition{prop:Coefficients_aU_under_independence_conditions}}
\label{sec:proof_explicit_computation}

As we have seen in Lemma \ref{lemma:function_in_span_dropout_expectations} , we can find the map $\Psi(\cdot, w)$ in the span of $\expectation{ \Psi( \cdot , (w \odot \mathbf{1}_V) \odot f^V }$ for $V \in 2^n$. We will look now at specific cases where we can explicitly compute the linear combination. In particular we will look in the case that we use \emph{dropout} \citep{hinton2012improving}, that is, we drop nodes independently with the same probability, and \emph{dropconnect} \citep{wan2013regularization}, where we drop individual weights independently with the same probability. 

In both cases the filter variables $f_{i}$ take the same values for some disjoint subsets of $ \overline{1,n}$, where $n$ is the number of weights where we apply filters. That is, if we have a disjoint set decomposition $\overline{1,n} = I_{1} \cup \ldots \cup I_{r}$ with $I_{k} \cap I_{s} = \emptyset$ whenever $k \neq s$, then $f_{i} = f_{j}$ for all $i,j \in I_{k}$ and $f_{i}, f_{l}$ are independent if they belong to disjoint sets, $f_{i} \in I_{k}$ and $f_{l} \in I_{s}$ with $s \neq k$. In this section we drop the index $U \in 2^{n}$ of the random variable $f^{U}$ for notational convenience as they are identically distributed. We will use this property to obtain an explicit decomposition in \eqref{eqn:Decomposition_of_Psi_in_a_linear_combination_of_expectations} and in a more general setting where the probability of the filters may differ depending on which disjoint set they belong to. For $K,S \in 2^n$, we denote $K \subseteq S$ to be the usual set inclusion, that is, $i \notin K$ whenever $i \notin S$ for all $i \in \overline{1,n}$ holds.

We need the following lemmas:

\begin{lemma}
	\label{lemma:example_mu_identity}
	Let $1 \geq q_1, \ldots, q_r > 0$ and $r \in \mathbb{N}$. For $K \in 2^{r}$, 
	\begin{equation}
		\mu_{K} 
		:= 
		\sum_{S: K \subseteq S} \prod_{i \in S} q_i \prod_{i \in \intint{r} \setminus S} (1 - q_i)
		\prod_{i \in K} \Bigl(\frac{1}{q_i}\Bigr) \prod_{i \in S \backslash K} \Bigl(1 - \frac{1}{q_i} \Bigr)
	\end{equation}
	satisfies
	\begin{equation}
		\mu_{K} = \begin{cases}
			1 & \text{ if } K = \intint{r} \\
			0 & \text{ otherwise.} \\
		\end{cases}
	\end{equation}
\end{lemma}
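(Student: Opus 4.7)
The plan is to verify the identity by direct algebraic manipulation, since $\mu_K$ is a closed-form rational expression in $q_1,\dots,q_r$ and the claim is essentially a bookkeeping identity with no analytic content. I would first dispose of the case $K=\intint{r}$, then handle the generic case $K\subsetneq\intint{r}$ by a change of summation variable that exposes a vanishing binomial expansion.

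For $K=\intint{r}$, the condition $K\subseteq S$ forces $S=\intint{r}$, so every product over $\intint{r}\setminus S$ and $S\setminus K$ becomes an empty product equal to $1$, and $\mu_{\intint{r}}=\prod_i q_i\cdot\prod_i(1/q_i)=1$.

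For $K\subsetneq\intint{r}$, the natural reindexing is $T:=S\setminus K\subseteq\intint{r}\setminus K$, since $K\subseteq S$ is equivalent to $S=K\sqcup T$. Substituting, the $K$-indexed products cleanly separate: $\prod_{i\in K}q_i$ cancels $\prod_{i\in K}(1/q_i)$, the factor $\prod_{i\in\intint{r}\setminus S}(1-q_i)$ becomes $\prod_{i\in(\intint{r}\setminus K)\setminus T}(1-q_i)$, and $\prod_{i\in T}q_i(1-1/q_i)=\prod_{i\in T}(q_i-1)=\prod_{i\in T}\bigl(-(1-q_i)\bigr)$. What remains is
\[
\mu_K=\sum_{T\subseteq\intint{r}\setminus K}\prod_{i\in T}\bigl(-(1-q_i)\bigr)\prod_{i\in(\intint{r}\setminus K)\setminus T}(1-q_i).
\]

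Applying the elementary identity $\sum_{T\subseteq A}\prod_{i\in T}a_i\prod_{i\in A\setminus T}b_i=\prod_{i\in A}(a_i+b_i)$ with $A=\intint{r}\setminus K$, $a_i=-(1-q_i)$, and $b_i=1-q_i$, so that $a_i+b_i=0$, yields $\mu_K=\prod_{i\in\intint{r}\setminus K}0=0$, where we use that $\intint{r}\setminus K$ is nonempty precisely when $K\subsetneq\intint{r}$. The only place any care is needed is in tracking the three simultaneous products (over $S$, $\intint{r}\setminus S$, and $S\setminus K$) under the substitution $T=S\setminus K$; once this is done consistently the vanishing is immediate, so I expect no serious obstacle.
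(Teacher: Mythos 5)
Your proof is correct, but it takes a different route from the paper's. The paper proves the identity by a generating-polynomial argument: it expands $q(x_1,\dots,x_r)=\prod_{i=1}^r\bigl(q_i x_i+(1-q_i)\bigr)$ into monomials $x^S$, then performs the substitution $x_i=\frac{1}{q_i}y_i-\frac{1-q_i}{q_i}$, under which the product collapses to $y_1\cdots y_r$ while the monomial expansion reorganizes into $\sum_{K\in 2^r}\mu_K\,y^K$; comparing coefficients of the two equal polynomials yields $\mu_{\intint{r}}=1$ and $\mu_K=0$ otherwise, for all $K$ at once. You instead evaluate $\mu_K$ directly: the reindexing $S=K\sqcup T$ with $T\subseteq\intint{r}\setminus K$, the cancellation $\prod_{i\in K}q_i\cdot\prod_{i\in K}(1/q_i)=1$ and $\prod_{i\in T}q_i(1-1/q_i)=\prod_{i\in T}\bigl(-(1-q_i)\bigr)$, and then the elementary expansion $\sum_{T\subseteq A}\prod_{i\in T}a_i\prod_{i\in A\setminus T}b_i=\prod_{i\in A}(a_i+b_i)$ with $a_i+b_i=0$, which vanishes precisely when $A=\intint{r}\setminus K\neq\emptyset$. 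The underlying cancellation is of course the same, but your version is shorter, term-by-term, and self-contained, whereas the paper's change-of-variables viewpoint packages the bookkeeping into a single coefficient comparison; either argument is fully rigorous, and yours handles the boundary case $q_i=1$ just as transparently since $a_i+b_i=0$ still holds.
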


\begin{proof}
Let $x_1, \ldots, x_r$ be free variables. For $S \in 2^r$ we denote the monomial $x^S = \prod_{i \in S} x_i$. We will prove the identity by comparing coefficients of two equal polynomials. We have
\begin{equation}
q(x_1, \ldots, x_r) = \prod_{i=1}^r (q_i x_i + (1-q_i)) = \sum_{S \in 2^r} x^{S} \prod_{i \in S} q_{i} \prod_{i \in \intint{r} \setminus S} (1-q_{i})
\label{eqn:example_mu_identity_1}
\end{equation}
where we have expanded all $|2^r|$ monomials appearing in the decomposition of $q$. Now we set $x_i = (1/q_i) y_i - (1-q_i)/q_i$ in \eqref{eqn:example_mu_identity_1}. We have
\begin{align}
	q \Bigl( \frac{1}{q_1} y_1 - \frac{1-q_1}{q_1}, \ldots, \frac{1}{q_r} y_r - \frac{1-q_r}{q_r} \Bigr) 
	&
	= \prod_{i=1}^r \Bigl( q_i \Bigl( \frac{1}{q_i} y_i - \frac{1-q_i}{q_i}\Bigr) + (1-q_i)\Bigr) 
	\nonumber \\ &
	= \prod_{i=1}^r y_i 
	= y^{\intint{r}}.
\end{align} 
On the other hand, if we substitute $x_i = (1/q_i) y_i - (1-q_i)/q_i$ in the monomials $x^{S}$ in \eqref{eqn:example_mu_identity_1} we have
\begin{align}
	\sum_{S \in 2^r} x^{S} \prod_{i \in S} q_{i} \prod_{i \in \intint{r} \setminus S} (1-q_{i}) 
	&
	= \sum_{S \in 2^r} \prod_{i \in S} \Bigl( \frac{1}{q_i} y_i - \frac{1-q_i}{q_i} \Bigr) q_i \prod_{i \in \intint{r} \setminus S} (1-q_i) 
	\nonumber \\ &
	= \sum_{S \in 2^r} \sum_{K \in 2^r :K \subseteq S} \prod_{i \in S}  q_i \prod_{i \in \intint{r} \setminus S} (1-q_i) \prod_{i \in K } y_i \Bigl( \frac{1}{q_i}\Bigr) \prod_{i \in S \setminus K } \Bigl( - \frac{1-q_i}{q_i}\Bigr) 
	\nonumber \\ &
	= \sum_{K \in 2^r} y^{K} \sum_{S: K \subseteq S} \prod_{i \in S} q_i \prod_{i \in \intint{r} \setminus S} (1 - q_i) 
	\prod_{i \in K} \Bigl(\frac{1}{q_i}\Bigr) \prod_{i \in S \backslash K} \Bigl(1 - \frac{1}{q_i} \Bigr) \nonumber \\ &
	= \sum_{K \in 2^r}  \mu_{K} y^{K}
\end{align}
so that we must have $\mu_{K} = 1$ if $K = \intint{r}$ and zero otherwise.
\end{proof}

Let $\overline{1,n} = I_1 \cup \ldots, \cup I_{r}$ be a disjoint partition of $\overline{1,n}$, i.e., $I_{j} \cap I_{i} = \emptyset$ if $i \neq j$. We consider $S \in 2^{r}$ also as an element of $2^{n}$ via the inclusion $\iota: 2^r \to 2^n$ given by $j \in \iota(S)$ if $j \in I_{i}$ and $i \in S$, i.e., we consider the index $i$ as the set of all indices $j \in I_{i}$. Note then that $\iota(\intint{r}) = \intint{n}$. Recall now that the filter random variable with values in $\{0,1\}^{n}$ are denoted by $f = (f_1, \cdots, f_n)$ . We suppose now that the filter random variables satisfy that $f_i = f_j$ whenever $i,j \in I_s$ for some $s \in \overline{1,r}$. We denote by $B_{s}$ the $\{0,1\}$ valued random variable corresponding to the $I_{s}$ part of $\overline{1,n}$. We suppose that $\mathbb{P}(B_s = 1) = q_s = 1 - p_s$ for all $s \in \overline{1,r}$, where $q_s$ is the probability of success and $p_s$ the dropping probability. Moreover, we suppose that the $( B_s )_{s \in \intint{r}}$ are mutually independent. With this notation we have:
\begin{equation}
	\E{}\big[\Psi(\cdot,  w  \odot f)\big] 
	= 
	\sum_{L \in 2^{r}} \prod_{i \in L} q_i \prod_{i \in \intint{r} \setminus L } (1-q_i) \Psi(\cdot,  w \odot \mathbf{1}_{\iota(L)})
	.
\end{equation}

In the following Lemma, we embed $2^r$ into $2^n$ as blocks according to a partition of $\overline{1,n}$ using $\iota$:

\begin{lemma}
	For $S \in 2^{r}$, 
	\begin{align}
		\E{}\big[
			\Psi(\cdot,  (w \odot \mathbf{1}_{\iota(S)}) \odot f)
		\big]
		& 
		= 
		\sum_{K \in 2^{r}: K \subseteq S} \prod_{i \in K} q_i \prod_{i \in S \setminus K} (1 - q_i) \Psi\left(\cdot,  w \odot \mathbf{1}_{\iota(K)}\right)
		.
		\label{eqn:Expectation_of_Psi_w_g}
	\end{align}
	\label{lemma:example_expectation_subset}
\end{lemma}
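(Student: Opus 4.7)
The plan is to compute the expectation directly by noting that the randomness enters only through the block variables $B_1, \ldots, B_r$, and that the mask $\mathbf{1}_{\iota(S)}$ already zeros out every coordinate outside of $\iota(S)$. Fix $S \in 2^r$. For an index $j \in \overline{1,n}$, the $j$-th entry of $(w \odot \mathbf{1}_{\iota(S)}) \odot f$ is zero whenever $j \notin \iota(S)$, regardless of $f$, and equals $w_j f_j$ whenever $j \in \iota(S)$. Since $j \in \iota(S)$ means $j \in I_i$ for some (unique) $i \in S$, we have $f_j = B_i$ in that case. Hence the random vector $(w \odot \mathbf{1}_{\iota(S)}) \odot f$ equals $w \odot \mathbf{1}_{\iota(K)}$, where
\[
K := \{\, i \in S : B_i = 1 \,\} \subseteq S.
\]

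Because the variables $(B_i)_{i \in \overline{1,r}}$ are mutually independent Bernoulli variables with $\mathbb{P}(B_i = 1) = q_i$, the random set $K$ is a subset of $S$ whose distribution is given by
\[
\mathbb{P}(K = K_0) = \prod_{i \in K_0} q_i \prod_{i \in S \setminus K_0}(1 - q_i)
\qquad \text{for every } K_0 \subseteq S.
\]
Note that the $B_i$ with $i \notin S$ play no role because $\mathbf{1}_{\iota(S)}$ already discards those coordinates; this is what localizes the sum over $K \in 2^r$ to the sub-family $K \subseteq S$.

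Finally, conditioning on the value of $K$ (equivalently, partitioning the probability space according to the joint outcome of $(B_i)_{i \in S}$) yields
\[
\mathbb{E}\bigl[\Psi(\cdot,\, (w \odot \mathbf{1}_{\iota(S)}) \odot f)\bigr]
= \sum_{K_0 \subseteq S} \mathbb{P}(K = K_0)\, \Psi(\cdot,\, w \odot \mathbf{1}_{\iota(K_0)}),
\]
which is exactly the claimed identity. There is no real obstacle here: the argument is purely bookkeeping and uses only (i) independence across blocks, (ii) the fact that $f_j$ is constant across $j \in I_i$, and (iii) that $\mathbf{1}_{\iota(S)}$ projects onto the coordinates indexed by blocks in $S$. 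The only minor care needed is to verify that $\iota$ behaves correctly under intersection, namely $\iota(S) \cap \iota(\overline{1,r}\setminus S) = \emptyset$, which is immediate from the disjointness of the partition $I_1, \ldots, I_r$.
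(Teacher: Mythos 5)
Your proof is correct and follows essentially the same route as the paper's: both arguments rest on the observation that $(w \odot \mathbf{1}_{\iota(S)}) \odot f$ depends only on the block variables $(B_i)_{i \in S}$, and then compute the expectation by summing over the outcomes of those blocks, your random set $K=\{i\in S: B_i=1\}$ being exactly the paper's change of variables $K(b_{s_1},\ldots,b_{s_m})$. Your write-up is just a more compact phrasing of the paper's expansion via the law of the unconscious statistician, with the sum over the blocks outside $S$ collapsed from the start.
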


\begin{proof}
	Let $S \in 2^r$. Observe that
	$
		f 
		= \sum_{s=1}^r \indicator{ B_s = 1 } \mathbf{1}_{I_s}
	$
	and note in particular that
	\begin{equation}
		g 
		:= \mathbf{1}_{\iota(S)} \odot f 
		= \bigl( \sum_{s \in S} + \sum_{s \in S^c} \bigr) \indicator{ B_s = 1 } \mathbf{1}_{\iota(S) \cap I_s}
		= \sum_{s \in S} \indicator{ B_s = 1 } \mathbf{1}_{I_s}.
		\label{eqn:Random_variable_g}
	\end{equation}
	Hence, $g$ depends only on $( B_s )_{s \in S}$ and is thus moreover independent of $( B_t )_{t \in S^c}$ by assumption. Consequently $\Psi(\cdot, w \odot g )$ also depends only on $( B_s )_{s \in S}$ and is also independent of $( B_t )_{t \in S^c}$. The result then follows. 

	To see this in detail, suppose that $S = \{ s_1, \ldots, s_m \}$ and $S^c = \{ t_1, \ldots, t_{r-m} \}$ say. Use the law of the unconscious statistician together with (i) independence to conclude that
	\begin{align}
		\expectation{ \Psi( \cdot, w \odot g ) }
		&		
		\eqcom{\ref{eqn:Random_variable_g}}=
		\sum_{b_1 = 0}^1 \cdots \sum_{b_r = 0}^1 \Psi( \cdot, w \odot \sum_{s \in S} \indicator{ b_s = 1 } \mathbf{1}_{I_s} ) \probability{ B_1 = b_1, \ldots, B_r = b_r }
		\nonumber \\ &
		\eqcom{i}=
		\sum_{b_{s_1} = 0}^1 \cdots \sum_{b_{s_m} = 0}^1 \Psi( \cdot, w \odot \sum_{i = 1}^m \indicator{ b_{s_i} = 1 } \mathbf{1}_{I_{s_i}} ) 
		\probability{ \cap_{i = 1}^m \{ B_{s_i} = b_{s_i} \} }
		\nonumber \\ &
		\phantom{=}
		\times
		\underbrace{ \sum_{b_{t_1} = 0}^1 \cdots \sum_{b_{t_{r-m}} = 0}^1 \probability{ \cap_{j=1}^{r-m} \{ B_{t_j} = b_{t_j} \} } }_{= 1 \textnormal{ as an axiom of the pdf}}.
	\end{align}
	Substitute 
	\begin{equation}
		\probability{ \cap_{i = 1}^m \{ B_{s_i} = b_{s_i} \} }
		\eqcom{i}= \prod_{i=1}^m	\probability{ B_{s_i} = b_{s_i} } 
		= \prod_{i=1}^m q_{s_i}^{b_{s_i}} (1-q_{s_i})^{1-b_{s_i}}
	\end{equation}
	and then apply the change of variables $K(b_{s_1},\ldots,b_{s_m}) = \cup_{i=1}^m \{ s_i : b_{s_i} = 1 \}$ to identify the right-hand side of \eqref{eqn:Expectation_of_Psi_w_g}.
\end{proof}

We can now prove \refProposition{prop:Coefficients_aU_under_independence_conditions}:

\begin{proof}(of \refProposition{prop:Coefficients_aU_under_independence_conditions})
	In the same notation as in \refLemma{lemma:example_mu_identity} and \refLemma{lemma:example_expectation_subset}, we use that $q_s = 1- p_s$ is the success probability. Then, we can write
	\begin{align}
		& 
		\sum_{V \in 2^r} \prod_{i \in V} \Bigl(\frac{1}{q_i}\Bigr) \prod_{i \in \intint{r} \setminus V} \Bigl(1 - \frac{1}{q_i} \Bigr) \E{} ( \Psi( \cdot , (w \odot \mathbf{1}_{\iota(V)}) \odot f)
		\nonumber \\
		& 
		\eqcom{\refLemma{lemma:example_expectation_subset}} = \sum_{V \in 2^r} \prod_{i \in V} \Bigl(\frac{1}{q_i}\Bigr) \prod_{i \in \intint{r} \setminus V} \Bigl(1 - \frac{1}{q_i} \Bigr) 
		\sum_{K: K \subseteq V \in 2^{r}} \prod_{i \in K} q_i \prod_{i \in V \setminus K} (1 - q_i) \Psi(\cdot,  w \odot \mathbf{1}_{\iota(K)}) 
		\nonumber \\
		& 
		= \sum_{V \in 2^r} \sum_{K: K \subseteq V \in 2^{r}}  \prod_{i \in V} \Bigl(\frac{1}{q_i}\Bigr) \prod_{i \in \intint{r} \setminus V} \Bigl(1 - \frac{1}{q_i} \Bigr)  \prod_{i \in K} q_i \prod_{i \in V \setminus K} (1 - q_i) \Psi(\cdot,  w \odot \mathbf{1}_{\iota(K)}) 
		\nonumber \\
		& 
		= \sum_{K \in 2^r} \Psi(\cdot,  w \odot \mathbf{1}_{\iota(K)}) \sum_{V: K \subseteq V \in 2^{r}} \Bigl(\frac{1}{q_i}\Bigr) \prod_{i \in \intint{r} \setminus V} \Bigl(1 - \frac{1}{q_i} \Bigr)  \prod_{i \in K} q_i \prod_{i \in V \setminus K} (1 - q_i)   
		\nonumber \\
		& 
		= \sum_{K \in 2^r} \Psi(\cdot,  w \odot \mathbf{1}_{\iota(K)}) \mu_{K} 
		\nonumber \\
		& 
		\eqcom{\refLemma{lemma:example_mu_identity}} =  \Psi(\cdot,  w \odot \mathbf{1}_{\iota(\intint{r})}) 
		\nonumber \\ 
		& 
		= \Psi(\cdot,  w).
	\end{align}
	Note finally that we obtain \refProposition{prop:Coefficients_aU_under_independence_conditions} after substituting $q_s = 1 - p_s$.
\end{proof}

\section{Proofs of Section~\ref{sec:Use_of_average_filter_variables_for_prediction}}

\subsection{Proof of \refLemma{lem:Propprob_inheritance}}
\label{sec:Proof_of_lemma_propprob_inheritance}

Let $\Gamma$ be a dropout tree. 
Let $\ell $ be a leaf of $\Gamma$ at level $k > 1$.
Let $\mu$ be the distribution of a random matrix $F \in \{0,1\}^{ d_k \times d_{k-1} }$ that satisfies for all $r,c$, $\probability{ F_{rc} = 1 } \geq \beta > 0$.
Assume that $\Gamma$ satisfies $\propprob_\Gamma(\delta,\epsilon)$, i.e.,
\[
	\mathbb{P}
	\Bigl[
	\sup_{x \in \overline{B(0,R)}} \sup_{\tilde{x}\in \overline{B(\transfer_\Gamma(x), \delta)}}
	\bigl| \Phi^{v_0}_\Gamma((\tilde{x}^\ell)_\ell ) - \Phi^{v_0}_{\Gamma_{\det}}(\transfer_\Gamma(x)) \bigr|
	> \frac{\epsilon}{2}
	\Bigr]
	< \Bigl( \frac{\epsilon}{4 R_L} \Bigr)^q.
\]
Define $\kappa > 0$ by
\[
	\kappa 
	:= 
	\Bigl( \frac{\epsilon}{4 R_L} \Bigr)^q - \mathbb{P}
	\Bigl[
	\sup_{x \in \overline{B(0,R)}} \sup_{\tilde{x}\in \overline{B(\transfer_\Gamma(x), \delta)}}
	\bigl| \Phi^{v_0}_\Gamma(x) - \Phi^{v_0}_{\Gamma_{\det}}(\tilde{x}) \bigr|
	> \frac{\epsilon}{2}
	\Bigr].
\]

By \refLemma{lem:copy-input}, there exists an $\eta > 0$ and an $N_0 \in \mathbb{N}$ such that for all $N \geq N_0$, if $F^i$ are independent, identically distributed filter matrices distributed according to $\mu$, and if $V$ is given by \eqref{eqn:Definition_of_Vegamma}, then 
\[
	\mathbb{P}
	\Bigl[
		\sup_{z \in \overline{B(0, R_k)}} \sup_{(\tilde{z})^i \in B(x, \eta)^N} 
		\Bigl|
		\sigma_k
		\bigl(\frac 1N
		\sum_{i=1}^N (V \odot F^i) \tilde{z}^i + b^{(k)}
		\bigr)
		- \sigma_k\bigl( W^{(k)} z + b^{(k)} \bigr)
		\Bigr| > \delta
	\Bigr] 
	< 
	\kappa.
\]
Now let $\Gamma'$ be a $\mu$-input-copy of $\Gamma$ at $\ell $ of size $N \geq N_0$. Choose $\delta' := \min (\delta, \eta)$.

Consider the event $\mathcal{A}$ that
\[
	\sup_{x \in \overline{B(0,R)}} \sup_{\tilde{x} \in \overline{B(\transfer_\Gamma(x), \delta)}}
	\bigl| 
		\Phi^{v_0}_\Gamma((\tilde{x}^\ell)_\ell ) - \Phi^{v_0}_{\Gamma_{\det}}(\transfer_\Gamma(x)) 
	\bigr|
	> \frac{\epsilon}{2},
\]
which (informally) means that the tree $\Gamma$ provides a bad approximation. Consider also the event $\mathcal{B}$ that 
\[
	\sup_{z \in \overline{B(0, R_{k-1})}} 
	\sup_{(\tilde{z}^m) \in B(z, \eta)^N} 
	\Bigl|
		\sigma_k
		\bigl(
			\sum_{e \in \into(\ell)} (V^e \odot F^e) \tilde{z}^m + b^e 
		\bigr) 
		- \sigma_k( W^e z + b^e ) 
	\Bigr| 
	> 
	\delta.
\]
Here, $\into(\ell)$ refers to the dropout-tree $\Gamma'$. This event (informally) means that the added part provides a bad approximation.
Note that
\[
	\mathbb{P}[\mathcal{A} \cup \mathcal{B}] \leq \mathbb{P}[\mathcal{A}] + \mathbb{P}[\mathcal{B}] 
	< \mathbb{P}[\mathcal{A}] + \kappa 
	= \Bigl( \frac{\epsilon}{4 R_L} \Bigr)^q.
\]
Next, let  us show that on $(\mathcal{A}\cup \mathcal{B})^c$ one has 
\begin{equation}
	\label{eqn: BoundOnComplementAB}
\sup_{x \in \overline{B(0,R)}} \sup_{\tilde{x}\in \overline{B(\transfer_{\Gamma'}(x), \delta')}}
\bigl| \Phi^{v_0}_{\Gamma'}((\tilde{x}^\ell)_\ell ) - \Phi^{v_0}_{\Gamma_{\det}}(\transfer_{\Gamma'}(x)) \bigr|
\leq \frac{\epsilon}{2}.
\end{equation}

To do this, suppose that $(\mathcal{A}\cup \mathcal{B})^c$ holds and $x\in \overline{B(0,R)}$. For every leaf $m \in \children(\ell)$ in $\Gamma'$ define $z:=\transfer_{\Gamma^{'}}^m(x)$ (recall the definition from \eqref{eqn: InGamma})  and note that $z\in \overline{B(0, R_{k-1})}$.
Let $\tilde{x}\in B(\transfer_{\Gamma'}(x), \delta')$. For every leaf $m \in \children(\ell)$ define $\tilde{z}^m:=\tilde{x}^m \in B(z,\eta)$ by the choice of $\delta'$.
Since $\mathcal{B}^c$ holds, 
\[
	\Bigl|
		\sigma_k
		\Bigl(
			\sum_{e \in \into(\ell)} (V^e \odot F^e) \tilde{z}^i + b^e 
		\Bigr) 
		- \sigma_k( W^e z + b^e ) 
	\Bigr| 
	\leq
	\delta,
\]
or, in other words,
\[
\sigma_k
\bigl(
\sum_{e \in \into(\ell)} (V^e \odot F^e) \tilde{z}^i + b^e 
\bigr) 
\in\overline{B(\transfer_{\Gamma}^\ell(x), \delta)}.
\]
Together with $\mathcal{A}^c$ this implies \eqref{eqn: BoundOnComplementAB}.

Finally, by the law of total probability, we estimate
\begin{align}
	&
	\mathbb{P}
	\Bigl[
		\sup_{x \in \overline{B(0,R)}} \sup_{\tilde{x}\in \overline{B(\transfer_{\Gamma'}(x), \delta')}}
		\bigl| \Phi^{v_0}_{\Gamma'}((\tilde{x}^\ell )_\ell ) - \Phi^{v_0}_{\Gamma_{\det}}(\transfer_{\Gamma'}(x)) \bigr|
		> \frac{\epsilon}{2}
	\Bigr]
	\nonumber \\ & 
	= \mathbb{P}
	\Bigl[
		\sup_{x \in \overline{B(0,R)}} \sup_{\tilde{x}\in \overline{B(\transfer_{\Gamma'}(x), \delta')}}
		\bigl| \Phi^{v_0}_{\Gamma'}((\tilde{x}^\ell )_\ell ) - \Phi^{v_0}_{\Gamma_{\det}}(\transfer_{\Gamma'}(x)) \bigr|
		> \frac{\epsilon}{2}
		\Big| \mathcal{A} \cup \mathcal{B} 
	\Bigr] 
	\mathbb{P}[\mathcal{A} \cup \mathcal{B}]
	\nonumber \\ &
	\phantom{=} 
	+ \mathbb{P}
	\Bigl[
		\sup_{x \in \overline{B(0,R)}} \sup_{\tilde{x}\in \overline{B(\transfer_{\Gamma'}(x), \delta')}}
		\bigl| \Phi^{v_0}_{\Gamma'}((\tilde{x}^\ell )_\ell ) - \Phi^{v_0}_{\Gamma_{\det}}(\transfer_{\Gamma'}(x)) \bigr|
		> \frac{\epsilon}{2}
		\Big| 
		(\mathcal{A} \cup \mathcal{B})^c 
	\Bigr] 
	\mathbb{P}[(\mathcal{A} \cup \mathcal
	{B})^c]
	\nonumber \\ &
	< \Bigl( \frac{\epsilon}{4 R_L} \Bigr)^q + 0 
	= \Bigl( \frac{\epsilon}{4 R_L} \Bigr)^q.
\end{align}
This completes the proof of \refLemma{lem:Propprob_inheritance}.

\QuodEratDemonstrandum

\subsection{Proof of \refLemma{lem:copy-input}}
\label{sec:Proof_of_lemma_copy-input}

Let $0 \leq K < \infty$ and $\rho > 0$ be fixed. 
For every $N < \infty$, the suprema in \eqref{eqn:Probability_bound_for_the_last_layer} over $x, (\tilde x^i)$ are in fact attained---say at $X_*, (\tilde{X}_*^i)$---because $\sigma$ is continuous and the optimization domain is closed and bounded. We need to now be careful because $X_*, (\tilde{X}_*^i)$ depend on the collection $\{ F^i \}_{i \in \intint{N}}$. 

Recall that the continuity of $\sigma$ implies that $\sigma$ is also uniformly continuous on each compact set, i.e., for every $\zeta > 0$ there exists an $\eta_\zeta > 0$ such that 
for all $x,y$ from this compact set
\begin{equation}
	\label{eqn: uniform_continuity}
	|x-y| < \eta_\zeta \Rightarrow |\sigma(y)- \sigma(x)| < \zeta.
\end{equation}

Define $\bar{X}_* := (1/N) \sum_{i=1}^N \tilde{X}_*^i$. Then uniform continuity of $\sigma$ implies that 
\begin{equation}
	\bigl| 
		\sigma \bigl( W \bar{X}_* + b \bigr) - \sigma( W x + b ) 
	\bigr|
	\leq 
	\sup_{x \in \overline{B(0, K)}} \sup_{y \in \overline{B(x,\delta)} } 
	\bigl| 
		\sigma \bigl( W y + b \bigr) - \sigma( W x + b )
	\bigr|
	=: 
	\gamma_{\delta} 
	< 
	\infty.
	\label{eqn:Continuity_of_sigma_applied_to_the_average_point_which_must_be_close_around_x}
\end{equation}
Moreover, $\gamma_{\delta}$ is independent of $N$. Remark also that
\begin{equation}
	\sup_{f \in \{0,1\}^{m \times n}} \sup_{ y \in \overline{B(0,\delta)} } 
		\frac{1}{\expectation{F}} \bigl| \bigl( W \odot f - W \odot \expectation{F} \bigr) y \bigr| 
	=: 
	c_\delta
	< \infty
	\label{eqn:Definition_of_c_delta}
\end{equation}
where $f$ stands for all possible deterministic realizations of the filters $F$. Again, $c_\delta$  is independent of $N$. 
Finally, by construction there exists a compact set $\mathcal{C} \subset \R m$ such that the points
\begin{equation}
	\frac{1}{N} \sum_{i=1}^N (V \odot F^i) \tilde{X}_*^i + b,
	\quad
	W \bar{X}_* + b
\end{equation}
lie in $\mathcal{C}$ with probability one.

First, fix $\zeta = \rho / 2$. From uniform continuity of $\sigma$ on the compact set $\mathcal{C}$ there exists $\eta_\zeta > 0$ such that \eqref{eqn: uniform_continuity} holds for all $x,y\in\mathcal{C}$.  Second, observe that $\gamma_{\delta} \to 0$ and $c_\delta \to 0$ as $\delta \to 0$. Hence we can choose $\delta$ and fix it such that
\begin{equation}
	0 
	< \zeta	
	< \rho - \gamma_\delta 
	\quad
	\textnormal{and}
	\quad
	c_\delta 
	< \eta_\zeta.
	\label{eqn:Choice_of_delta_for_Lemma_18}
\end{equation}

Combining \eqref{eqn:Continuity_of_sigma_applied_to_the_average_point_which_must_be_close_around_x} with the triangle inequality and using \eqref{eqn:Choice_of_delta_for_Lemma_18},  we arrive at
\begin{align}
	&
	\textnormal{LHS } \eqref{eqn:Probability_bound_for_the_last_layer}
	\leq 
	\probabilityBig{
		\underbrace{	
		\Bigl|
		\sigma
		\bigl(
			\frac{1}{N} \sum_{i=1}^N (V \odot F^i) \tilde{X}_*^i + b 
		\bigr) 
		- \sigma \bigl( W \bar{X}_* + b \bigr) 
		\Bigr| 
		}_{ =: Z }
		> \rho - \gamma_{\delta}.
	},
\end{align}
Consider now the event
\begin{equation}
	\mathcal{E} 
	=
	\Bigl\{
		\Bigl| 
			\frac{1}{N} \sum_{i=1}^N (V \odot F^i) \tilde{X}_*^i - W \bar{X}_* 
		\Bigr|
		< \eta_\zeta
	\Bigr\}.
	\label{eqn:Event_of_small_deviation}
\end{equation}
Then by the law of total probability and uniform continuity,
\begin{align}
	\probability{ Z > \rho - \gamma_{\delta} }
	&
	= 
	\probability{ Z > \rho - \gamma_{\delta} | \mathcal{E} } \probability{\mathcal{E}} +  \probability{ Z > \rho - \gamma_{\delta} | \mathcal{E}^{\mathrm{c}} } \probability{ \mathcal{E}^{\mathrm{c}} }
	\nonumber \\ &
	\leq 
	\indicator{ \zeta > \rho - \gamma_{\delta} } \probability{\mathcal{E}} + \probability{ \mathcal{E}^{\mathrm{c}} }
	\eqcom{\ref{eqn:Choice_of_delta_for_Lemma_18}}= 
	\probability{ \mathcal{E}^{\mathrm{c}} }.
	\label{eqn:Intermediate_Lemma_18__Law_of_total_probability}
\end{align}

We proceed by bounding $\probability{ \mathcal{E}^{\mathrm{c}} }$. Use the triangle inequality twice to establish that for any $x \in \overline{B(0,K)}$,
\begin{align}
	&
	\Bigl| 
		\frac1N \sum_{i=1}^N (V \odot F^i) \tilde{X}_*^i - W \bar{X}_* 
	\Bigr|
	\nonumber \\ &
	=
	\Bigl| 
		\frac{1}{N \expectation{F}} \sum_{i=1}^N \bigl( (W \odot F^i) - W \odot \expectation{F} \bigr) \bigl( x + ( \tilde{X}_*^i - x ) \bigr)
	\Bigr|
	\nonumber \\ &
	\leq
	\Bigl| 
		\frac{1}{N \expectation{F}} \sum_{i=1}^N \bigl( (W \odot F^i) - W \odot \expectation{F} \bigr) x
	\Bigr|	
	+ 
	\Bigl| 
		\frac{1}{N \expectation{F}} \sum_{i=1}^N \bigl( (W \odot F^i) - W \odot \expectation{F} \bigr) ( \tilde{X}_*^i - x )
	\Bigr|		
	\nonumber \\ &
	\eqcom{\ref{eqn:Definition_of_c_delta}}\leq 		
	\Bigl| 
		\frac{1}{N \expectation{F}} \sum_{i=1}^N \bigl( (W \odot F^i) - W \odot \expectation{F} \bigr) x
	\Bigr|			
	+ 
	c_\delta.		
	\label{eqn:Bound_on_the_filtered_optimizer}
\end{align}
Note now additionally that by Khinchin's weak law of large numbers,
\begin{equation}
	\frac{1}{N} \sum_{i=1}^N W \odot F^i 
	\overset{\mathbb{P}}\to
	W \odot \expectation{F}
	\quad
	\textnormal{as}
	\quad
	N \to \infty.
	\label{eqn:Khincins_WLLN_applied_to_sum_W_odot_Fi}
\end{equation}
Therefore, using \eqref{eqn:Choice_of_delta_for_Lemma_18}, we get 
as $N \to \infty$
\begin{equation}
	\probability{ \mathcal{E}^{\mathrm{c}} } 
	\eqcom{\ref{eqn:Bound_on_the_filtered_optimizer}}\leq \probabilityBig{  
		\Bigl| 
			\frac{1}{N \expectation{F}} \sum_{i=1}^N \bigl( (W \odot F^i) - W \odot \expectation{F} \bigr) x
		\Bigr|
		\geq
		\eta_\zeta - c_\delta 
		}
	\eqcom{\ref{eqn:Khincins_WLLN_applied_to_sum_W_odot_Fi}}\to 
	0
	\label{eqn:Intermediate_Lemma_18__Final_step}.
\end{equation}

Bounding \eqref{eqn:Intermediate_Lemma_18__Law_of_total_probability} by \eqref{eqn:Intermediate_Lemma_18__Final_step} completes the proof.
\QuodEratDemonstrandum

\subsection{Proof of \refTheorem{th:main-average}}
\label{sec:Proof_of_theorem_main-average}

We start by showing that for $\alpha$ small enough and for $N$ large enough,
\begin{equation}
	\label{eq:stronger_prob_bound_indnn}
	\mathbb{P}
		\Bigl[ \sup_{x \in \overline{B(0, R)}}
		\bigl| \indnn_{\Gamma, \Xi}(x) - \Psi(x,w) \bigr|
		> \frac{\epsilon}{2}
		\Bigr]
	< \Bigl(\frac{\epsilon}{4 R_L}\Bigr)^q.
\end{equation}
Afterwards, we deduce the three assertions \eqref{eq:main-average-prob}--\eqref{eq:main-average-avg-filt} from \eqref{eq:stronger_prob_bound_indnn}.

\medskip
\noindent
\emph{Proof of \eqref{eq:stronger_prob_bound_indnn}.}
Recall that the assumption $\propprob_\Gamma(\delta,\epsilon)$ means that
\[
	\mathbb{P}
	\Bigl[
		\sup_{x \in \overline{B(0, R)}} \sup_{\tilde{x} \in \overline{B(\transfer_\Gamma(x), \delta)}}
		\bigl| \Phi^{v_0}_{\Gamma,\Xi}(\tilde{x}) - \Phi^{v_0}_{\Gamma_{\det},\Xi}(\transfer_{\Gamma}(x)) \bigr|
		> 
		\frac{\epsilon}{2}
	\Bigr]
	< \Bigl( \frac{\epsilon}{4 R_L} \Bigr)^q.
\]
Define therefore $\kappa > 0$ by
\begin{equation}
	\kappa 
	:= 
	\frac{1}{\# \mathsf{leaves}(\Gamma)} 
	\Bigl( 
		\Bigl( \frac{\epsilon}{4 R_L} \Bigr)^q 
		- 
		\mathbb{P}
		\Bigl[
			\sup_{x \in \overline{B(0, R)}} \sup_{\tilde{x} \in \overline{B(\transfer_\Gamma(x), \delta)}}
			\bigl| \Phi^{v_0}_{\Gamma,\Xi}(\tilde{x}) - \Phi^{v_0}_{\Gamma_{\det},\Xi}(\transfer_{\Gamma}(x)) \bigr|
			> \frac{\epsilon}{2}
		\Bigr]
	\Bigr)
	.
	\label{eqn:Definition_of_kappa}
\end{equation}

Observe now that the function $\Psi_1$ is continuous, and the function $\Phi_{\Gamma_{\det}}^{v_0}$ is  continuous on $(\mathbb{R}^{d_1})^{\mathsf{leaves}(\Gamma)}$. Since this implies uniform continuity on compact sets (see \eqref{eqn: uniform_continuity}), there exists a $\zeta > 0$ such that whenever a function $g: \overline{B(0,R)} \to \inp_{\Gamma}$ satisfies
\[
	\sup_{x \in \overline{B(0, R)}}\;\sup_{\ell\in \mathsf{leaves}(\Gamma)}\;
	\bigl| 
		g^\ell (x) - \Psi_1(x ; (W^{(1)}, b^{(1)})) 
	\bigr| 
	< \zeta,
\]
then we also have
\begin{equation}
	\label{eq:bound_for_g_to_input}
	\sup_{x \in \overline{B(0, R)}}
	\Bigl| \Phi^{v_0}_{\Gamma_{\det}}(g(x)) - \Psi(x, w) \Bigr|
	< \epsilon.
\end{equation}

Now choose
\[
\rho := \min \left( \delta /2, \kappa, \zeta \right)
\qquad\text{and}\qquad
K := R,
\]
which we use as parameters for \refLemma{lem:Main-average_stepping_stone}. This choice ensures that for $\alpha$ small enough and $N$ large enough, for all leaves $\ell $ of $\Gamma$, 
by inequality (\ref{eqn:Probability_bound_on_alpa_N_precompositions_associated_to_a_distribution})
\begin{equation}
	\mathbb{P}
	\Bigr[
	\sup_{x \in \overline{B(0, R)}} 
	\bigl| \Xi^\ell (x) - \Psi_1(x ; (W^{(1)}, b^{(1)})) \bigr| 
	> \delta/2 
	\Bigr] 
	< \kappa
	\label{eqn:Theorem_15_proof_kappa_bound}
\end{equation}
and by inequality (\ref{eqn:sup_bound_on_alpa_N_precompositions_associated_to_a_distribution})
\begin{equation}
	\label{eq:deterministic-bound-for-avg-filt}
	\sup_{x \in \overline{B(0,R)}} 
	\bigl|  \Xi^{\ell ,\mathsf{avg-filt}}(x) - \Psi_1(x ; (W^{(1)}, b^{(1)})) \bigr| 
	< \zeta.
\end{equation}

Consider now the event $\mathcal{A}$ that there exists a leaf $\ell$ of $\Gamma$ such that
\[
	\sup_{x \in \overline{B(0, R)}} | \Xi^\ell (x) - \Psi_1(x ; (W^{(1)}, b^{(1)})) | > \delta/2
	.
\]
From the law of total probability, it follows that
\begin{align}
	&
	\mathbb{P}
	\Bigl[ 
		\sup_{x \in \overline{B(0, R)}}
		\bigl| \indnn_{\Gamma, \Xi}(x) - \Psi(x,w) \bigr|
		> \frac{\epsilon}{2}
	\Bigr]
	\nonumber \\ &
	= 
	\mathbb{P}
	\Bigl[
		\sup_{x \in \overline{B(0, R)}}
		\bigl| \indnn_{\Gamma, \Xi}(x) - \Psi(x,w) \bigr|
		> \frac{\epsilon}{2} 
		\Big| 
		\mathcal{A} 
	\Bigr] 
	\mathbb{P}[\mathcal{A}]
	\nonumber \\ &
	\phantom{=} 
	+ 
	\mathbb{P}
	\Bigl[
		\sup_{x \in \overline{B(0, R)}}
		\bigl| 
		\indnn_{\Gamma, \Xi}(x) - \Psi(x,w) \bigr|
		> 
		\frac{\epsilon}{2} \Big| \mathcal{A}^c 
	\Bigr] 
	\probability{ \mathcal{A}^c }
	.
	\label{eqn:Theorem_15_law_of_total_probability_applied_with_event_A}
\end{align}
Observe that
\begin{equation}
	\mathbb{P}
	\Bigl[
		\sup_{x \in \overline{B(0, R)}}
		\bigl| \indnn_{\Gamma, \Xi}(x) - \Psi(x,w) \bigr|
		> \frac{\epsilon}{2} 
		\Big| 
		\mathcal{A} 
	\Bigr] 
	\leq 1
	,	
	\label{eqn:Theorem_15__Intermediate_bound_1}
\end{equation}
and by (i) Boole's inequality
\begin{align}
	\probability{ \mathcal{A} }
	&
	= 
	\probabilityBig{ 
		\cup_{\ell \in \mathsf{leaves}(\Gamma)} 
		\bigl\{
			\sup_{x \in \overline{B(0, R)}} | \Xi^\ell (x) - \Psi_1(x ; (W^{(1)}, b^{(1)})) | > \delta/2 
		\bigr\}
	}
	\nonumber \\ &
	\eqcom{i}\leq 
	\sum_{\ell \in \mathsf{leaves}(\Gamma)} 
	\probabilityBig{ 
		\sup_{x \in \overline{B(0, R)}} | \Xi^\ell (x) - \Psi_1(x ; (W^{(1)}, b^{(1)})) | > \delta/2 
	}
	\eqcom{\ref{eqn:Theorem_15_proof_kappa_bound}}
	\leq \kappa \cdot \# \mathsf{leaves}(\Gamma)
	.
	\label{eqn:Theorem_15__Intermediate_bound_2}
\end{align}
Furthermore,
\begin{align}
	&
	\mathbb{P}
	\Bigl[
		\sup_{x \in \overline{B(0, R)}}
		\bigl| 
		\indnn_{\Gamma, \Xi}(x) - \Psi(x,w) \bigr|
		> 
		\frac{\epsilon}{2} \Big| \mathcal{A}^c 
	\Bigr] 
	\nonumber \\ &
	\eqcom{\ref{eq:bound_for_g_to_input}}\leq 
	\probabilityBig{
		\sup_{x \in \overline{B(0, R)}} \sup_{\tilde{x} \in \overline{B(\transfer_\Gamma(x), \delta)}}
		\bigl| \Phi^{v_0}_{\Gamma,\Xi}(\tilde{x}) - \Phi^{v_0}_{\Gamma_{\det},\Xi}(\transfer_{\Gamma}(x)) \bigr|
		> \frac{\epsilon}{2}
	}	
	.
	\label{eqn:Theorem_15__Intermediate_bound_3}
\end{align}
By bounding \eqref{eqn:Theorem_15_law_of_total_probability_applied_with_event_A} using \eqref{eqn:Theorem_15__Intermediate_bound_1}--\eqref{eqn:Theorem_15__Intermediate_bound_3} and $\probability{ \mathcal{A}^c } \leq 1$, we find that
\begin{align}
	&
	\mathbb{P}
	\Bigl[ \sup_{x \in \overline{B(0, R)}}
		\bigl| \indnn_{\Gamma, \Xi}(x) - \Psi(x,w) \bigr|
		> \frac{\epsilon}{2}
	\Bigr]
	\nonumber \\ &
	< \kappa \cdot \# \mathsf{leaves}(\Gamma) + 
	\probabilityBig{
		\sup_{x \in \overline{B(0, R)}} \sup_{\tilde{x} \in \overline{B(\transfer_\Gamma(x), \delta)}}
		\bigl| \Phi^{v_0}_{\Gamma,\Xi}(\tilde{x}) - \Phi^{v_0}_{\Gamma_{\det},\Xi}(\transfer_{\Gamma}(x)) \bigr|
		> \frac{\epsilon}{2}
	} \cdot 1
	\nonumber \\ &
	\eqcom{\ref{eqn:Definition_of_kappa}}= 
	\Bigl(\frac{\epsilon}{4 R_L}\Bigr)^q
\end{align}
This shows \eqref{eq:stronger_prob_bound_indnn}.

Next, we prove that \eqref{eq:main-average-prob}--\eqref{eq:main-average-avg-filt} follow from \eqref{eq:stronger_prob_bound_indnn}.

\medskip
\noindent
\emph{Proof of \eqref{eq:main-average-prob}.}
This inequality follows from \eqref{eq:stronger_prob_bound_indnn} since $R_L \geq 1$ by construction and $q \geq 1$ by assumption.

\medskip
\noindent
\emph{Proof of \eqref{eq:main-average-avg-filt}.} 
This inequality is a direct consequence of inequality \eqref{eq:bound_for_g_to_input} by choosing $g^\ell  := \Xi^{\ell ,\mathsf{avg-filt}}$ and using \eqref{eq:deterministic-bound-for-avg-filt}.

\medskip
\noindent
\emph{Proof of \eqref{eq:main-average-Lq}.}
We will prove that by the definition of $R_j$ in (\ref{eq:definition_R_j}), for all $x \in \overline{B(0, R)}$
\begin{equation}
	\bigl| \indnn_{\Gamma, \Xi}(x)\bigr|^q < R_L^q
	\quad
	\textnormal{w.p.\ one,}
	\quad
	\textnormal{and}
	\quad
	\bigl| \Psi(x,w) \bigr|^q < R_L^q.
	\label{eqn:Inductive_claim_for_the_proof_of_Theorem_15}
\end{equation}
Namely, if \eqref{eqn:Inductive_claim_for_the_proof_of_Theorem_15} holds true, then \eqref{eq:main-average-Lq} follows. 

To see the implication, consider the event $\mathcal{D}$ for which 
\begin{equation}
	\sup_{x \in \overline{B(0, R)}}
	\bigl| \indnn_{\Gamma, \Xi}(x) - \Psi(x,w) \bigr|
	> \frac{\epsilon}{2}
	,
	\label{eqn:Event_D_for_Theorem_15}
\end{equation}
and apply the law of total expectation:
\begin{align}
	\label{eq:computation_expectation_main_average}
	&
	\mathbb{E}
	\Bigl[
		\sup_{x \in \overline{B(0,R)}}
		\bigl| \indnn_{\Gamma, \Xi}(x) - \Psi(x,w) \bigr|^q
	\Bigr]
	\\ &
	= \mathbb{E}
	\Bigl[
		\sup_{x \in \overline{B(0,R)}}
		\bigl| \indnn_{\Gamma, \Xi}(x) - \Psi(x,w) \bigr|^q
		\Big| 
		\mathcal{D} 
	\Bigr] 
	\mathbb{P}[\mathcal{D}]
	+ 
	\mathbb{E}
	\Bigl[
		\sup_{x \in \overline{B(0,R)}}
		\bigl| \indnn_{\Gamma, \Xi}(x) - \Psi(x,w) \bigr|^q
		\Big| 
		\mathcal{D}^c 
	\Bigr] 
	\mathbb{P}[ \mathcal{D}^c ].
	\nonumber
\end{align}
By the triangle inequality,
\begin{equation}
	\mathbb{E}
	\Bigl[
		\sup_{x \in \overline{B(0,R)}}
		\bigl| \indnn_{\Gamma, \Xi}(x) - \Psi(x,w) \bigr|^q
		\Big| \mathcal{D} 
	\Bigr] 
	\eqcom{\ref{eqn:Inductive_claim_for_the_proof_of_Theorem_15}}\leq 
	(2 R_L)^q.
	\label{eqn:EsupcondD_bound}
\end{equation}
On the other hand,
\begin{equation}
	\mathbb{E}
	\Bigl[
		\sup_{x \in \overline{B(0,R)}}
		\bigl| \indnn_{\Gamma, \Xi}(x) - \Psi(x,w) \bigr|^q
		\Big| \mathcal{D}^c 
	\Bigr] 
	\eqcom{\ref{eqn:Event_D_for_Theorem_15}}\leq 
	\Bigl( \frac{\epsilon}{2} \Bigr)^q.
	\label{eqn:EsupcondDc_bound}
\end{equation}
Bound now \eqref{eq:computation_expectation_main_average} using \eqref{eq:stronger_prob_bound_indnn}, \eqref{eqn:EsupcondD_bound}, \eqref{eqn:EsupcondDc_bound}, and the elementary bound $\probability{ \mathcal{D}^c } \leq 1$ to obtain
\[
	\mathbb{E}
	\Bigl[
		\sup_{x \in \overline{B(0,R)}}
		\bigl| \indnn_{\Gamma, \Xi}(x) - \Psi(x,w) \bigr|^q
	\Bigr] 
	< 
	( 2 R_L )^q 
	\Bigl( \frac{\epsilon}{4 R_L} \Bigr)^q 
	+ \Bigl( \frac{\epsilon}{2} \Bigr)^q 
	\leq 
	\epsilon^q.
\]
That would prove \eqref{eq:main-average-Lq}. What remains is to prove \eqref{eqn:Inductive_claim_for_the_proof_of_Theorem_15}.

\medskip
\noindent
\emph{Proof of \eqref{eqn:Inductive_claim_for_the_proof_of_Theorem_15}.}
Observe immediately that the right inequality in \eqref{eqn:Inductive_claim_for_the_proof_of_Theorem_15} follows immediately as $0 < \beta < 1$ and $Q > 1$ (recall the definition of $\Psi$ in \eqref{eqn:definition_Psi}). Next, we will prove the left inequality in \eqref{eqn:Inductive_claim_for_the_proof_of_Theorem_15} by mathematical induction (recall the recursion in \eqref{eqn:Seed_of_the_recursion} and \eqref{eqn:Recursion_inducing_a_NN_of_a_full_dropout_tree_and_a_precomposition} that defines $\indnn_{\Gamma, \Xi}$).

\medskip
\noindent
\emph{Base case.}
Recall from \eqref{eqn:Seed_of_the_recursion} and \eqref{eqn:Recursion_inducing_a_NN_of_a_full_dropout_tree_and_a_precomposition} that the induction starts with the functions
\begin{equation}
	\Xi^\ell(x) 
	:= 
	\sigma_1
	\Bigl( 
	\frac{1}{N}
	\sum_{i=1}^{2N} 
	(-1)^i (V^\ell \odot F^{\ell ,i}) 
	\sigma_0
	\bigl(
	(-1)^i \alpha (I \odot G^{\ell ,i}) x
	\bigr) 
	+ b^\ell 
	\Bigr)
\end{equation}
where element-wise
\begin{equation}
	V^\ell_{rc}
	= 
	\frac{ W^{(1)}_{rc} }{ \alpha \bigl( \sigma_- + \sigma_+ \bigr) \mathbb{E}[F_{rc}^\ell] \expectation{G^\ell_{cc}} }
	.
\end{equation}
We are now going to prove that for every $x \in \overline{B(0, R)}$, the point
\begin{equation}
	\frac{1}{N}
	\sum_{i=1}^{2N} 
	(-1)^i (V^\ell \odot F^{\ell ,i}) 
	\sigma_0
	\bigl(
	(-1)^i \alpha (I \odot G^{\ell ,i}) x
	\bigr) 
	\in \overline{ B \bigl( 0, \beta^{-1} \|W^{(1)} \|_{\mathrm{HS}} R_0 \bigr) }
	\quad
	\textnormal{w.p.\ one}	
	.
	\label{eqn:Argument_in_ball}
\end{equation}
In particular, by the definition of $R_1$ in \eqref{eq:definition_R_j} (which implicitly deals with the bias $b^\ell$), this implies that for all $x \in \overline{B(0, R)}$,
\[
	\bigl|\Xi^{\ell} (x) \bigr| \leq R_1 - 1.
\]

Start by noting that there exists an $\alpha_0 >0$ such that for all $0 < \alpha \leq \alpha_0$ and all $\xi \in [-R, R] \subset \mathbb{R}$ we have
\begin{equation}
	\label{eq:lipschitz_like_bound}
	|\sigma_0( \alpha \xi ) | \leq 2 (|\sigma_-| + |\sigma_+|) \alpha |\xi|.
\end{equation}
It follows from the bound (\ref{eq:lipschitz_like_bound}) that for all $x \in \overline{B(0, R)}$ and all $i \in \intint{2N}$,
\[
	\Bigl| 
		\frac{1}{\alpha (\sigma_- +\sigma_+) \expectation{G^\ell_{cc}}} \sigma_0
		\bigl(
		\alpha (I \odot G^{\ell ,i}) x
		\bigr)_c
	\Bigr| 
	< 
	2 \frac{|\sigma_-| + |\sigma_+|}{|\sigma_- + \sigma_+|} \frac{1}{\beta} |x_c|
	\quad
	\textnormal{w.p.\ one}	
	.
\]
Since we assumed in \eqref{eq:rel-dir-der-to-Q} that
\[
4\frac{| \sigma_- | + |\sigma_+| }{|\sigma_- + \sigma_+|} < Q
\]
it follows that for all $x \in \overline{B(0, R)}$ and all $i \in \intint{2N}$,
\[
	\Bigl|
		2 \frac{\expectation{I \odot G^{\ell}}^{-1}}{\alpha (\sigma_- +\sigma_+)}  \sigma_0
		\bigl(
			\alpha (I \odot G^{\ell ,i}) x
		\bigr)
	\Bigr| 
	< \frac{Q}{\beta}R 
	< R_0
	\quad
	\textnormal{w.p.\ one}
	.
\]
Therefore, for every $x \in \overline{B(0, R)}$,
\[
	\begin{split}
		&
		\Bigl|
			\frac{1}{N}
			\sum_{i=1}^{2N} 
			(-1)^i (V^\ell \odot F^{\ell ,i}) 
			\sigma_0
			\bigl(
			(-1)^i \alpha (I \odot G^{\ell ,i}) x
			\bigr) 
		\Bigr|
		\\ & 
		= 
		\Bigl| 
			\frac{1}{2N}
			\sum_{i=1}^{2N} 
			(-1)^i ((W^{(1)} \odot F^{\ell,i}) \div \expectation{F^{\ell,i}}) \frac{2\expectation{I \odot G^{\ell,i}}^{-1}}{\alpha(\sigma_- + \sigma_+) }
			\sigma_0
			\bigl(
			(-1)^i \alpha (I \odot G^{\ell ,i}) x
			\bigr) 
		\Bigr|
		\\ &
		\leq 
		\frac{1}{2N} \sum_{i=1}^{2N} \| (W^{(1)} \odot F^{\ell,i}) \div \expectation{F^{\ell,i}} \|_{\mathrm{HS}} R_0 
		\leq \frac{1}{\beta} \|W^{(1)} \|_{\mathrm{HS}} R_0
		\quad
		\textnormal{w.p.\ one}		
		.
	\end{split}
\]
This proves \eqref{eqn:Argument_in_ball}.

\medskip
\noindent
\emph{Inductive step.}
In the definition of $\indnn_{\Gamma, \Xi}$ we defined for $v$ not a leaf in $\Gamma$,
\[
	\Phi^v_{\Gamma, \Xi} = \sigma_v
	\Bigl(
	\frac{1}{ \# \into(v) }
	\sum_{e \in \into(v)} (V^e \odot F^e) \Phi^{\source(e)}_{\Gamma, \Xi} + b^e
	\Bigr)
	.
\]
By an inductive argument we find that for all $x \in \overline{B(0, R)}$, it holds that
\[
	\Bigl| 
		\frac{1}{ \# \into(v) }
		\sum_{e \in \into(v)} (V^e \odot F^e) \Phi^{\source(e)}_{\Gamma, \Xi} (x) 
	\Bigr| 
	\leq 
	\beta^{-1} \|W^{e} \|_{\mathrm{HS}} R_{\level(v)- 1}
	\quad
	\textnormal{w.p.\ one}
	.
\]
so that by definition of $R_{\level{(v)}}$ it holds that
\[
	| \Phi^v_{\Gamma, \Xi}(x) | 
	< R_{\level{(v)}}
	\quad
	\textnormal{w.p.\ one}
	.
\]
In particular,
\[
	\bigl| \indnn_{\Gamma, \Xi}(x) \bigr|^q 
	= \bigl|\Phi^{v_0}_{\Gamma, \Xi}(x) \bigr|^q < R_L^q
	\quad
	\textnormal{w.p.\ one}
	.
\]
This proves \eqref{eqn:Inductive_claim_for_the_proof_of_Theorem_15}. With that, \refTheorem{th:main-average} is proven.
\QuodEratDemonstrandum

\subsection{Proof of \refLemma{lem:Main-average_stepping_stone}}
\label{sec:Proof_of_lemma_main-average_stepping_stone}

\noindent
\emph{Proof of \eqref{eqn:Probability_bound_on_alpa_N_precompositions_associated_to_a_distribution}.}
	Let $0 \leq K < \infty$, $\ell$ be a leaf of $\Gamma$, and $\rho > 0$. Recall that 
	\begin{equation}
		\Psi_1(x ; (W^{(1)}, b^{(1)}))
		= \sigma_1\bigl( W^{(1)} x + b^{(1)} \bigr),
	\end{equation}		
	and for $x \in \overline{B(0,K)}$, define
	\begin{equation}
		Z_N^\ell (x)
		=
		\frac{1}{N}
		\sum_{i=1}^{2N} 
		(-1)^i (V^\ell  \odot F^{\ell ,i}) 
		\sigma_0
		\bigl(
			(-1)^i \alpha (I \odot G^{\ell ,i}) x + b 
		\bigr) 
		+ b^\ell 
		\label{eqn:Definition_ZNx}
	\end{equation}
	so that $\Xi^\ell (x) = \sigma_1( Z_N^\ell (x) )$.
	Note that the weights $W$ are fixed and therefore uniformly bounded.

	Continuity of $\sigma_0$ and $\sigma_1$, boundedness of $F$ and $G$, positivity of  $\expectation{F^\ell _{rc}}$ and $\expectation{G^\ell _{rc}}$, and compactness of the optimization domain imply that  the supremum of the optimization problem is attained---say at $X_* \in \overline{B(0,K)}$. Just like in Appendix~\ref{sec:Proof_of_lemma_copy-input}, note that $X_*$ is random and depends on the collections $\{ F^{\ell ,i} \}_{i \in \intint{2N}}$, $\{ G^{\ell ,i} \}_{ i \in \intint{2N} }$. Summarizing:
	\begin{align}
		&
		\probabilityBig{ 
			\sup_{ x \in \overline{B(0,K)} } 
			\bigl| 
			\sigma_1( Z_N^\ell (x) ) - \sigma_1( W^{(1)} x + b^{(1)} ) 
			\bigr|
			> \rho
		}
		\nonumber \\ &
		=
		\probabilityBig{ 
			\bigl| 
			\sigma_1( Z_N^\ell (X_*) ) - \sigma_1( W^{(1)} X_* + b^{(1)} ) 
			\bigr|
			> \rho
		}.		
	\end{align}
	Note that here we slightly abuse the notation by using $|\cdot|$ sign not only for absolute value of numbers, but also, as in the last formula, for the Euclidean norm of the vector.

	By construction, there exists a compact set $\mathcal{C}$ so that the points
	\begin{equation}
		Z_N^\ell(X_*),
		\quad
		W^{(1)} X_* + b^{(1)}
	\end{equation}
	lie in $\mathcal{C}$ with probability one. The uniform continuity of  $\sigma_1$ on $\mathcal{C}$ implies that for each $\zeta>0$  there exists  $\eta_\zeta>0$ such that \eqref{eqn: uniform_continuity} holds for $\sigma_1$ and all $x,y\in \mathcal{C}$. Fix $\zeta= \rho$ and introduce the event 
	\begin{equation}
		\mathcal{D}(X_*)
		= 
		\bigl\{ 
			\|	
				Z_N^\ell (X_*) 
				- ( W^{(1)} X_* + b^{(1)} ) 
			\|_2
			< \eta_\rho
		\bigr\}.
	\end{equation}
	By the law of total probability 
	\begin{align}
		&
		\probabilityBig{		
			| 
				\sigma_1( Z_N^\ell (X_*) ) 
				- \sigma_1( W^{(1)} X_* + b^{(1)} ) 
			|
			> \rho
		}
		\nonumber \\ &
		=
		\probabilityBig{		
			| 
				\sigma_1( Z_N^\ell (X_*) ) 
				- \sigma_1( W^{(1)} X_* + b^{(1)} ) 
			|
			> \rho
			\,|\,
			\mathcal{D}(X_*)  
		}		
		\probabilityBig{ \mathcal{D}(X_*) }
		\nonumber \\ &		
		\phantom{=}
		+ 
		\probabilityBig{		
			| 
				\sigma_1( Z_N^\ell (X_*) ) 
				- \sigma_1( W^{(1)} X_* + b^{(1)} ) 
			|
			> \rho
			\,|\,
			\mathcal{D}^{\mathrm{c}}(X_*) 
		}		
		\probabilityBig{ \mathcal{D}^{\mathrm{c}}(X_*) }
		\leq 
		\probabilityBig{ \mathcal{D}^{\mathrm{c}}(X_*) }.
		\label{eqn:Upper_bound_after_law_of_total_probability}
	\end{align}
	We will next prove that for all $x \in \overline{B(0,K)}$, 
	\begin{equation}
		\probabilityBig{ \mathcal{D}^{\mathrm{c}}(x) }
		\to 0
	\end{equation}
	as $\alpha \downarrow 0$ and $N \to \infty$. Together with \eqref{eqn:Upper_bound_after_law_of_total_probability}, this implies the result.

	Let $x \in \overline{B(0,K)}$. Component-wise,
	\begin{align}
		&	
			\bigl(
				Z_N^\ell (x)
				- ( W^{(1)} x + b^{(1)} )
			\bigr)_r
		\label{eqn:After_symmetrical_split_these_are_the_absolute_components}			
		\\ &
		= 
			\Bigr(	
				\sum_{i=1}^{2N} 
				\frac{(-1)^i}{N}
				(V^\ell  \odot F^{\ell ,i}) 
				\sigma_0
				\bigl(
					(-1)^i \alpha (I \odot G^{\ell ,i}) x 
				\bigr)
				+ b^\ell 
				- ( W^{(1)} x + b^{(1)} )
			\Bigr)_r
		\nonumber \\ &
		=
			\sum_{i=1}^{2N}
			\sum_{c=1}^{d_0} 
			\frac{(-1)^i}{N}
			V^\ell _{rc} F^{\ell ,i}_{rc} 
			\sigma_0
			\bigl(
				(-1)^i \alpha (I \odot G^{\ell ,i}) x 
			\bigr)_{c}
			+ b^{(1)} _r
			- \sum_c W^{(1)}_{rc} x_c 
			+ b^{(1)}_r
		.	
		\nonumber	
	\end{align}	
	Substituting \eqref{eqn:Definition_Vlrc_and_alr_component_wise} into \eqref{eqn:After_symmetrical_split_these_are_the_absolute_components}, using the triangle inequality, and rearranging terms, we find that
	\begin{align}
		&
		\bigl| 
			\bigl(
				Z_N^\ell (x)
				- ( W^{(1)} x + b^{(1)} )
			\bigr)_r
		\bigr|
		\nonumber \\ &
		\leq
		\sum_{c=1}^{d_0} 
		\Bigl|	
			\frac{ W^{(1)}_{rc} }{ \tfrac{1}{2} ( \sigma_- + \sigma_+ ) \expectation{G^\ell _{cc}} }
			\Bigl(
				\tfrac{1}{2N}
				\sum_{i=1}^{2N}
					\frac{ F^{\ell i}_{rc} }{ \expectation{F^\ell _{rc}} }
					(-1)^i							
					\sigma_0
					\bigl( (-1)^i \alpha (I \odot G^{\ell ,i}) x 
					\bigr)_{c}
			\Bigr)
		- W^{(1)}_{rc} x_c 
		\Bigr| 
		.
		\label{eqn: bracketsIn24}		
	\end{align}	
	Note that the assumptions of the lemma imply that
	\[
		\Bigl|	
		\frac{ W^{(1)}_{rc} }{ \tfrac{1}{2} ( \sigma_- + \sigma_+ ) \expectation{G^\ell _{cc}} }
		\Bigr|
		\leq 
		C_{w,G,\sigma} 
		< 
		+\infty
		.
	\]

	We focus now on the term within brackets in \eqref{eqn: bracketsIn24}. Let $\delta_1 > 0$ and consider the event
	\begin{equation}
		\mathcal{E}_{F,N}(\delta_1)
		= 
		\Bigl\{ 
			\Bigl|
			\tfrac{1}{2N} \sum_{i=1}^{2N} \frac{ F^{\ell i} }{ \expectation{F^\ell _{rc}} }	
			- 1
			\Bigr|
			< \delta_1
		\Bigr\}.
	\end{equation}
	There exists $C_1>0$ such that, conditional on $\mathcal{E}_{F,N}(\delta_1)$,
	\begin{align}
		\Bigl|
			&
			\tfrac{1}{2N}
			\sum_{i=1}^{2N} 
			\frac{ F^{\ell i} }{ \expectation{F^\ell _{rc}} }
			(-1)^i
			\sigma_0
			\bigl(
				(-1)^i \alpha (I \odot G^{\ell ,i}) x 
			\bigr)_c
			-
			\tfrac{1}{2N} 
			\sum_{i=1}^{2N} (-1)^i
			\sigma_0
			\bigl(
				(-1)^i \alpha (I \odot G^{\ell ,i}) x 
			\bigr)_c	
		\Bigr|
		\nonumber \\ &		
		\leq 
		\Bigl|
			\tfrac{1}{2N}
			\sum_{i=1}^{2N} 
			\Bigl(\frac{ F^{\ell i} }{ \expectation{F^\ell _{rc}} }-1\Bigr)
			(-1)^i
			\sigma_0
			\bigl(
			(-1)^i \alpha (I \odot G^{\ell ,i}) x 
			\bigr)_c 
		\Bigr|
		\leq  
		C_1 \delta_1 
	\end{align}
	since the argument of $\sigma_0$ is uniformly bounded, and $\sigma_0$ is continuous. Moreover, there exists $C_2>0$ such that  conditional on  $\mathcal{E}_{F,N}(\delta_1)$,
	\begin{align}
		&
		\bigl| 
			\bigl(
				Z_N^\ell (x)
				- ( W^{(1)} x + b^{(1)} )
			\bigr)_r
		\bigr|
		\\ &
		\leq
		\sum_{c=1}^{d_0} 
		\Bigl|	
			\frac{ W^{(1)}_{rc} }{ \tfrac{1}{2} ( \sigma_- + \sigma_+ ) \expectation{G^\ell _{cc}} }
			\Bigl(
				\tfrac{1}{2N}
				\sum_{i=1}^{2N} 
				\tfrac{1}{\alpha}
				\Bigl(
					(-1)^i					
					\sigma_0
					\bigl(
						(-1)^i \alpha (I \odot G^{\ell ,i}) x 
					\bigr)_{c}
				\Bigr)
			\Bigr)
			- W^{(1)}_{rc} x_c 
		\Bigr| 					
		+ C_2 \delta_1.
		\nonumber
	\end{align}
	Note that  $C_1, C_2$ are independent of $N, \delta_1$.

	Recall now that by \eqref{eqn:Directional_derivative} and \eqref{eqn:Condition_on_the_directional_derivative} we can find for each $\gamma > 0$ an $\alpha > 0$ such that for all $y \in \realNumbers^{d_0}$, $c$,
	\begin{equation}
		\Bigl|
			\frac{1}{\alpha} 
			\Bigl( 
				\sigma_0 \bigl(\alpha y \bigr) 
			\Bigr)_c
			- \sigma_{S(y_r)} y_c
		\Bigr|
		< \gamma.
	\end{equation}
	Recall furthermore that $\max_{rc} G^\ell _{rc} \leq M < \infty$ with probability one by assumption. Together, this implies that we can find  for each $\gamma > 0$ an $\alpha > 0$ such that for all $x \in \overline{B(0,K)}$, $c$,
	\begin{equation}
		\probabilityBig{\Bigl|
			\frac{1}{\alpha} 
			\Bigl( 
				\pm
				\sigma_0 \bigl(  \pm \alpha I \odot G^{\ell ,i} x \bigr)  
			\Bigr)_c
			-
			\sigma_{S( \pm G^{\ell i}_{cc} x_c )} G^{\ell i}_{cc} x_c
		\Bigr|
		< \gamma}=1
		\label{eqn:Derivative_bound_wp_one}
	\end{equation}		
	
	Fix $\gamma \in (0,\delta_1)$ and corresponding $\alpha > 0$. Then there exists a constant $C_3>0$, independent of $\delta_1, \gamma, N$,  such that conditional on $\mathcal{E}_{F,N}(\delta_1)$,
	\begin{align}
		&
		\bigl| 
			\bigl( Z_N^\ell (x)
			- ( W^{(1)} x + b^{(1)} ) \bigr)_r
		\bigr|
		\\ &
		\leq
		\sum_{c=1}^{d_0} 
		\Bigl|	
			\frac{ W^{(1)}_{rc} }{ \tfrac{1}{2} ( \sigma_- + \sigma_+ ) \expectation{G^\ell _{cc}} }
			\Bigl(
				\tfrac{1}{2N}
				\sum_{i=1}^{2N}
				\sigma_{S( (-1)^i G^{\ell i}_{cc} x_c )} G^{\ell i}_{cc} x_c
			\Bigr)
			- W^{(1)}_{rc} x_c 
		\Bigr| 					
		+ C_3 \delta_1 
		\nonumber \\ &
		\eqcom{i}= 
		\sum_{c\in \overline{1, d_0} : x_c > 0}
		\Bigl|	
			\frac{ W^{(1)}_{rc} }{ \tfrac{1}{2} ( \sigma_- + \sigma_+ ) \expectation{G^\ell _{cc}} }
			\Bigl(
				\tfrac{1}{2N}
				\sum_{i=1}^{2N}
				\sigma_{S( (-1)^i G^{\ell i}_{cc} x_c )} G^{\ell i}_{cc} x_c
			\Bigr)
			- W^{(1)}_{rc} x_c 
		\Bigr| 					
		+ C_3 \delta_1 .
		\nonumber 		
	\end{align}
	To conclude (i), we used the fact that $\sigma_\pm \cdot 0 = 0$. By assumption $G^{\ell i}_{cc} \geq 0$ with probability one, so if moreover $| x_c | > 0$, then  $S( (-1)^i G^{\ell i}_{cc} x_c ) = S( (-1)^i x_c)$ with probability one---recall its definition in \eqref{eqn:Condition_on_the_directional_derivative}. 
	Thus there exists $C_4$ independent of $\delta_1,\gamma, N$ such that  conditional on the event $\mathcal{E}_{F,N}(\delta_1) \cap \mathcal{E}_{G,N}(\delta_1)$,
		\begin{align}
			&
			\bigl| 
				Z_N^\ell (x)
				- ( W^{(1)} x + b^{(1)} )
			\bigr|_r
			\nonumber \\ &
			\leq
			\sum_{c  \in \overline{1, d_0}: | x_c | > 0}
			\Bigl|	
				\frac{ W^{(1)}_{rc} }{ \tfrac{1}{2} ( \sigma_- + \sigma_+ ) } 
				\frac{1}{2N} \sum_{i=1}^{2N} \sigma_{S( (-1)^i x_c)} x_c
				- W^{(1)}_{rc} x_c 
			\Bigr| 	
			+ C_4 \delta_1
			= C_4 \delta_1.
		\end{align}
	The last equality holds because  the sum is only of over $c$ such that $|x_c| > 0$.

	All that remains is to prove that 
	\begin{equation}
		\probability{ \mathcal{E}_{F,N}(\delta_1) \cap \mathcal{E}_{G,N}(\delta_1) } 
		\to 1 
		\quad
		\textnormal{as}
		\quad
		N \to \infty.
	\end{equation}
	This fact follows immediately from the independence of $F, G$, and a subsequent application of Khinchin's weak law of large numbers (which may be applied since $F, G$'s expectations are bounded).
	Note that $\delta_1$ is an arbitrary parameter: choosing it such that $C_4 \delta_1 < \eta_\zeta$, and then choosing $N$ sufficiently large completes the proof of \eqref{eqn:Probability_bound_on_alpa_N_precompositions_associated_to_a_distribution}.

	\medskip	
	\noindent
	\emph{Proof of \eqref{eqn:sup_bound_on_alpa_N_precompositions_associated_to_a_distribution}.}
	The assertion \eqref{eq:main-average-Lq} is proven for any $(\alpha,N)$-precomposition associated with some distributions $\mu^\ell, \nu^\ell$  with finite nonzero mean.  In particular, the same argument shows that \eqref{eq:main-average-Lq} holds when $F^\ell$ and $G^\ell$ are taken deterministic and equal to the expectations of the corresponding random variables (see also the discussion following \eqref{eqn:Definition_Vlrc_and_alr_component_wise}).
	This proves \eqref{eqn:sup_bound_on_alpa_N_precompositions_associated_to_a_distribution}.
\QuodEratDemonstrandum

\end{document}